%% file: main.tex
\documentclass[10pt,twocolumn,letterpaper]{article}

\usepackage{cvpr}              


\usepackage{graphicx, lipsum}
\usepackage{url}            
\usepackage{booktabs}       
\usepackage{amsfonts}       
\usepackage{nicefrac}       
\usepackage{microtype}      
\usepackage{xcolor}         
\usepackage{enumitem}
\usepackage{pifont}
\usepackage{graphicx}
\usepackage{cancel}
\usepackage{subcaption}
\usepackage{afterpage}
\usepackage{overpic}
\usepackage{amsmath,amsfonts,amssymb,amsthm}
\usepackage{mathtools}
\usepackage{multirow}
\usepackage{indentfirst}
\usepackage{stfloats}
\usepackage{tablefootnote}
\usepackage{algorithm}
\usepackage{algpseudocode}
\usepackage{adjustbox}
\usepackage{fontawesome5}
\usepackage[accsupp]{axessibility}
\input{math_commands.tex}

\newcommand{\blfootnote}[1]{%
  \begingroup
  \renewcommand{\thefootnote}{}%
  \begin{NoHyper}
  \footnotetext{#1}%
  \end{NoHyper}%
  \endgroup
}
\def\paperID{*****} 
\def\confName{CVPR}
\def\confYear{2026}

\title{Dynamic Momentum Recalibration in Online Gradient Learning}

\author{%
Zhipeng Yao \textsuperscript{1, 2} $^{\star}$\quad
Rui Yu \textsuperscript{2} $^{\dagger}$ \quad 
Guisong Chang \textsuperscript{3} \quad 
Ying Li \textsuperscript{1} \quad 
Yu Zhang \textsuperscript{1} \quad 
Dazhou Li \textsuperscript{1} $^{\dagger}$ \\ 
\textsuperscript{1}Shenyang University of Chemical Technology \quad
\textsuperscript{2}University of Louisville \quad \textsuperscript{3}Northeastern University \\
{\tt \small yiucp@outlook.com, rui.yu@louisville.edu, gschang@mail.neu.edu.cn} 
\\ 
{\tt \small  Gooddayli12358@outlook.com, zhangy@syuct.edu.cn,  lidazhou@syuct.edu.cn}
\\   \href{https://github.com/LilYau350/SGDF-Optimizer}{\faGithub\ https://github.com/LilYau350/SGDF-Optimizer}
}

\begin{document}
\maketitle
\blfootnote{\textsuperscript{$^{\star}$}\,Work initiated at \textsuperscript{1} and further developed at \textsuperscript{2}, \textsuperscript{$\dagger$}\, Equal advising.}
\input{texfiles/Abstract}

\input{texfiles/Introduction}

\input{texfiles/Preliminary}
\input{texfiles/Method}

\input{texfiles/Experiments}

\input{texfiles/Related_work}

\input{texfiles/Limited}

\input{texfiles/Conclusion}
\input{texfiles/Acknowledgement}
{
    \small
    \bibliographystyle{ieeenat_fullname}
    \bibliography{main}
}

\newpage
\appendix
\onecolumn

\input{texfiles/Appendix}

\end{document}

%% file: math_commands.tex
\usepackage{amsmath,amsfonts,amssymb,amsthm,bm}
\usepackage{mathtools}

\usepackage{xcolor}         
\definecolor{cvprblue}{rgb}{0.21,0.49,0.74}

\usepackage[pagebackref,breaklinks,colorlinks,linkcolor=red,citecolor=cvprblue,urlcolor=cvprblue]{hyperref}       

\usepackage[capitalize,noabbrev]{cleveref}

\theoremstyle{plain}
\newtheorem{theorem}{Theorem}[section]

\newtheorem{lemma}[theorem]{Lemma}

\theoremstyle{definition}
\newtheorem{definition}[theorem]{Definition}
\newtheorem{assumption}[theorem]{Assumption}
\theoremstyle{remark}
\newtheorem{remark}[theorem]{Remark}

\crefname{theorem}{Theorem}{theorems}
\crefname{property}{Property}{properties}
\crefname{proposition}{Proposition}{propositions}
\crefname{lemma}{Lemma}{lemmas}
\crefname{corollary}{Corollary}{corollaries}
\crefname{definition}{Definition}{definitions}
\crefname{assumption}{Assumption}{assumptions}
\crefname{remark}{Remark}{remarks}
\crefname{conjecture}{Conjecture}{conjectures}

\providecommand{\eg}{{\sl e.g.}}
\providecommand{\ie}{{\sl i.e.}}

\providecommand{\etal}{{\sl et al.}}

\def\Figref#1{Fig.~\ref{#1}}
\def\Tabref#1{Tab.~\ref{#1}}
\def\Secref#1{Sec.~\ref{#1}}
\def\Eqref#1{Eq.~(\ref{#1})}

\def\eqref#1{equation~\ref{#1}}

\def\1{\bm{1}}

\DeclareMathAlphabet{\mathsfit}{\encodingdefault}{\sfdefault}{m}{sl}
\SetMathAlphabet{\mathsfit}{bold}{\encodingdefault}{\sfdefault}{bx}{n}



%% file: texfiles/Abstract.tex
\begin{abstract}
Stochastic Gradient Descent (SGD) and its momentum variants form the backbone of deep learning optimization, yet the underlying dynamics of their gradient behavior remain insufficiently understood. In this work, we reinterpret gradient updates through the lens of signal processing and reveal that fixed momentum coefficients inherently distort the balance between bias and variance, leading to skewed or suboptimal parameter updates. To address this, we propose SGDF (SGD with Filter), an optimizer inspired by the principles of Optimal Linear Filtering. SGDF computes an online, time-varying gain to dynamically refine gradient estimation by minimizing the mean-squared error, thereby achieving an optimal trade-off between noise suppression and signal preservation. Furthermore, our approach could extend to other optimizers, showcasing its broad applicability to optimization frameworks. Extensive experiments across diverse architectures and benchmarks demonstrate SGDF surpasses conventional momentum methods and achieves performance on par with or surpassing state-of-the-art optimizers. 
\end{abstract}

%% file: texfiles/Introduction.tex
\section{Introduction}
In deep learning optimization, the optimizer plays a pivotal role in refining model parameters to capture underlying data patterns, while strategically navigating complex loss landscapes~\cite{du2018power} to identify regions that promote strong generalization~\cite{keskar2022large}. Its choice profoundly affects training efficiency, convergence speed, generalization, and robustness to data shifts~\cite{2007Scaling}, with suboptimal selections potentially causing slow or failed convergence, whereas effective ones accelerate learning and bolster model resilience~\cite{2016An}. Thus, the design and refinement of optimizers remain essential challenges in enhancing the capabilities of models.

Stochastic Gradient Descent (SGD)~\cite{1951a} and its variants, including momentum-based methods~\cite{polyak1964some,sutskever2013importance} and adaptive techniques like Adam~\cite{kingma2014adam} and RMSprop~\cite{hinton2012neural}, which have advanced training efficiency~\cite{chandramoorthy2022generalization}. However, these approaches face challenges in high-dimensional, non-convex settings~\cite{goodfellow2016deep}, where adaptive methods often yield rapid convergence but suffer from poor generalization~\cite{keskar2017improving}. Efforts to mitigate this have led to Adam variants~\cite{chen2018closing, luo2019adaptive, liu2019variance, zhuang2020adabelief} that refine adaptive learning rates, yet they fall short of fully closing the generalization gap, highlighting the ongoing need for innovative optimization strategies that better balance estimation accuracy and practical performance.

Actually, the issues that arise from the optimizer during training, particularly in terms of optimization and generalization, are inherently tied to the trade-off between bias and variance~\cite{2014Neural, nguyen2022algorithmic}. High bias leads to underfitting, while high variance results in overfitting. Similarly, the gradients used by the optimizer to update weights also face this challenge. Intuitively, high bias in the gradients may lead to convergence at a suboptimal plateau~\cite {2016Understanding,yang2023stochastic}, while high variance can lead to instability in the optimization path, causing oscillations that hinder convergence~\cite{bottou2018optimization, duchi2019variance}. Therefore, a good optimizer should strike a balance between the bias and variance in its gradient estimates.

From a statistical signal processing perspective, we analyze the mechanism behind optimizer updates. Specifically, we decompose the optimizer's gradients used for updating model weight into bias and variance components. Then, We identify a key limitation in momentum-based optimization techniques supplemented with examining the statistical distribution of gradients within the model: they struggle to balance bias and variance components in gradients, often introducing a gradient shift phenomenon, which we term \textit{bias gradient estimate}. This bias estimate, arising from fixed momentum coefficients, accumulates over time, leading to bias. As a result, the model may struggle to adapt to variations in curvature across different layers, resulting in suboptimal or directionally skewed updates~\cite{zhang2017yellowfin, dozat2016incorporating}.

To address this issue, we introduce SGDF, a novel method that uses principles from Optimal Linear Filter to adjust gradient estimation dynamically. SGDF derives an optimal, time-varying gain to minimize mean-squared error in gradient estimation, balancing noise reduction with signal preservation. This filter mechanism provides a more accurate first-order gradient estimate and avoids the limitations of fixed momentum parameters, allowing SGDF to adjust dynamically throughout training. Additionally, SGDF’s flexibility extends to other optimization frameworks, which enhance performance across a range of tasks. Through extensive empirical validation across diverse model architectures and visual tasks, we demonstrate that SGDF consistently outperforms traditional momentum-based and variance reduction methods, achieving competitive or superior results relative to state-of-the-art optimizers. 
 
The main contributions can be summarized as follows:
\vspace{-0.3em}
\begin{itemize}[leftmargin=*]
    \item We quantify the bias-variance trade-off in momentum-based gradient estimation (EMA and CM) using a unified SDE framework, revealing their static limitations.
    \item We introduce SGDF, an optimizer that combines historical and current gradient data to estimate the gradient, addressing the trade-off between bias and variance in the momentum method.
    \item We theoretically analyze the convergence property of SGDF in both convex optimization and non-convex stochastic optimization (\Secref{sec:section3.4}), and empirically verify the effectiveness of SGDF (\Secref{sec:section4}).
    \item We preliminarily explore the extension of SGDF's first-moment filter estimation to adaptive optimization algorithms (\eg, Adam), which shows a promising enhancement in their generalization capability (\Secref{sec:section4.4}), surpassing traditional momentum-based methods.
\end{itemize}

%% file: texfiles/Preliminary.tex
\section{The Gradient Estimation Dilemma}
\label{sec:dilemma}
\subsection{Bias and Variance}
Stochastic gradient-based optimization lies at the core of modern machine learning.~We revisit this and found that it grapples with a fundamental challenge: the trade-off between gradient bias and variance. To dissect this dilemma, we begin by unifying two prominent momentum strategies under a single framework. The proof of this section can be found in \cref{sec:appendixa}.

\begin{definition}
\label{def:main_unify_momentum}
The unified momentum update rule is defined as:
\begin{equation}
    m_t = \beta m_{t-1} + u g_t, \quad \theta_t = \theta_{t-1} - \alpha m_t,
\end{equation}
where $\alpha$ is the learning rate, $\beta \in [0, 1)$ is the momentum coefficient, and $u \ge 1 - \beta$ scales the current gradient. For all $\theta \in \mathbb{R}^d$, $f_t(\theta) = f(\theta; \xi_t)$ denotes the stochastic objective at iteration $t$ with data sample $\xi_t \sim \mathcal{D}$. The expected objective is $f(\theta) = \mathbb{E}_{\xi}[f(\theta; \xi)]$, and $g_t = \nabla f_t(\theta_{t}) + \epsilon_t$ (where $\epsilon_t \sim \mathcal{N}(0, \sigma^2 I)$) is the stochastic gradient. Specific cases include:
\begin{itemize}
    \item $u = 1 - \beta$: Exponential Moving Average (EMA),
    \item $u = 1$: Classical Momentum (CM)~\cite{polyak1964some,sutskever2013importance}.
\end{itemize}
\end{definition}

This formulation encapsulates EMA and CM, two cornerstones of gradient estimation, differing in how they weight the current gradient against historical trends. EMA through a balanced mean update, while CM aggressively incorporates the gradient direction. We dissect the nature of the two methods, quantified by the mean square error.

\begin{lemma}
For any gradient estimator $\hat{g}_t = \mathcal{A}(g_1,...,g_t)$, the estimation of the mean square error decomposes as:
\begin{equation}
    \mathbb{E}[(\hat{g}_t - \nabla f(\theta_t))^2] = \underbrace{(\mathbb{E}[\hat{g}_t] - \nabla f(\theta_t))^2}_{\mathrm{Bias}^2} + \underbrace{(\hat{g}_t - \mathbb{E}[\hat{g}_t])^2}_{\mathrm{Variance}}
\end{equation}
\label{lem:main_bias_variance}
\end{lemma}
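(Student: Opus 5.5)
The plan is the standard add-and-subtract argument, with one point of interpretation settled first. As literally written, the variance term $(\hat{g}_t - \mathbb{E}[\hat{g}_t])^2$ carries no expectation. The identity can only hold once the whole right-hand side is read in expectation, i.e. $\mathrm{Variance} = \mathbb{E}[(\hat{g}_t - \mathbb{E}[\hat{g}_t])^2]$. I will prove the statement in that form. Two further conventions: I treat the target $\nabla f(\theta_t)$ as deterministic with respect to the expectation, i.e. conditioning on $\theta_t$; and I read squares coordinatewise, or as squared Euclidean norms, with the argument identical in either case.

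\textbf{Step 1 (decomposition).} Insert and subtract the mean $\mu_t := \mathbb{E}[\hat{g}_t]$, which is finite whenever $\hat{g}_t$ has a finite second moment. This gives
\begin{equation*}
\hat{g}_t - \nabla f(\theta_t) = (\hat{g}_t - \mu_t) + (\mu_t - \nabla f(\theta_t)).
\end{equation*}
Squaring yields three terms: $(\hat{g}_t-\mu_t)^2$, then $(\mu_t-\nabla f(\theta_t))^2$, then the cross term $2(\hat{g}_t-\mu_t)(\mu_t-\nabla f(\theta_t))$. For vectors the cross term is the inner product $2\langle \hat{g}_t-\mu_t,\ \mu_t-\nabla f(\theta_t)\rangle$.

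\textbf{Step 2 (expectations).} Take expectations term by term and use linearity. The second term is deterministic, so it equals $\mathrm{Bias}^2$. In the cross term the factor $(\mu_t-\nabla f(\theta_t))$ is a constant and can be pulled out. What remains is $\mathbb{E}[\hat{g}_t-\mu_t]=0$, so the cross term vanishes. The first term is the variance.

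\textbf{Main obstacle.} The only delicate point is that $\theta_t$ depends on past samples, so $\nabla f(\theta_t)$ is itself random. I would handle this by making every expectation conditional on $\mathcal{F}_{t-1}$, the history through $\theta_t$. Under that conditioning $\nabla f(\theta_t)$ and $\mu_t=\mathbb{E}[\hat{g}_t\mid\mathcal{F}_{t-1}]$ are both measurable, so the cross-term argument goes through verbatim. Taking a total expectation afterwards preserves the identity, because the conditional cross term is identically zero.

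Nothing about the specific estimator $\mathcal{A}$, such as the form from \cref{def:main_unify_momentum}, is used. Only square integrability of $\hat{g}_t$ is needed.
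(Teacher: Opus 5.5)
Your proof is correct and is essentially the paper's argument: add and subtract $\mathbb{E}[\hat{g}_t]$, expand the square, and kill the cross term using $\mathbb{E}[\hat{g}_t-\mathbb{E}[\hat{g}_t]]=0$, with the variance read as $\mathbb{E}[\|\hat{g}_t-\mathbb{E}[\hat{g}_t]\|^2]$, exactly as in the appendix version of the lemma. Your conditioning on $\mathcal{F}_{t-1}$ refines a point the paper glosses over by treating $\nabla f(\theta_t)$ as fixed, but note that after taking total expectation it yields the decomposition around the conditional mean $\mathbb{E}[\hat{g}_t\mid\mathcal{F}_{t-1}]$, not around the unconditional $\mathbb{E}[\hat{g}_t]$ that appears in the statement.
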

\vspace{-1.5em}
\cref{lem:main_bias_variance} establishes that the error in gradient estimation arises from two sources: bias, reflecting systematic deviation from the true gradient, and variance, capturing sensitivity to stochastic fluctuations. To explore how EMA and CM navigate this trade-off, we extend prior work on stochastic differential equations (SDEs) for vanilla SGD~\cite{stephan2017stochastic}, reformulating momentum in continuous time.

\begin{theorem}
\label{main:thm:bias_variance_momentum}
Consider the unified momentum estimator $m(t)$ defined by the stochastic differential equation (SDE) from \cref{lem:unify_momentum_sde}, with solution given in \cref{lem:unify_momentum_solution}. Let the bias be defined relative to the expected true gradient: $\mathrm{Bias}(m(t)) = \mathbb{E}[m(t)] - \mathbb{E}[\nabla f(\theta(t))]$. Assuming that the gradient $\nabla f(\theta(t))$ is bounded and Lipschitz continuous, the asymptotic bounds (as $t \to \infty$) for the bias and variance of $m(t)$ as an estimator are given by:
\begin{multline}
    \left\| \mathrm{Bias}(m(t)) \right\|^2 \leq \Bigg( \frac{u^2 \alpha L G}{(1 - \beta)^3} + \frac{u^2 \alpha \sigma L }{\sqrt{2}(1 - \beta)^{2.5}} \\
    + \left( \frac{u}{1 - \beta} - 1 \right) G \Bigg)^2,
\end{multline}

where $L$ is the Lipschitz constant, $G$ bounds the gradient norm $\|\nabla f(\theta(t))\|$, and the second term inside the parenthesis explicitly captures the parameter-shift bias induced by the stochastic noise $\sigma$.

\begin{equation}
    \mathrm{Var}(m(t)) \leq \frac{u^2 \sigma^2}{1 - \beta} + \frac{2 u^2 V^2}{(1 - \beta)^2},
\end{equation}
where $\sigma^2$ is the total variance of the stochastic gradient noise, and $V^2$ conservatively bounds the variance of the true gradient sequence, \ie, $\mathrm{Var}(\nabla f(\theta(t))) \leq V^2$.

\end{theorem}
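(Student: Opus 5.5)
The plan is to work directly from the closed-form solution of \cref{lem:unify_momentum_solution}. I take the continuous-time momentum to be the Ornstein--Uhlenbeck-type SDE
\begin{equation*}
dm = -(1-\beta)\, m\, dt + u\,\nabla f(\theta(t))\, dt + u\sigma\, dW_t,
\end{equation*}
with solution
\begin{equation*}
m(t) = e^{-(1-\beta)t} m(0) + u\int_0^t e^{-(1-\beta)(t-s)} \nabla f(\theta(s))\, ds + u\sigma \int_0^t e^{-(1-\beta)(t-s)}\, dW_s .
\end{equation*}
As $t\to\infty$ the initial-condition term vanishes. The noise term has zero mean, and it has a stationary variance given by the It\^o isometry. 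So the deterministic convolution carries all of the bias.

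\textbf{Bias.} I write $\mathbb{E}[m(t)] - \mathbb{E}[\nabla f(\theta(t))]$ as
\begin{equation*}
u\int_0^t e^{-(1-\beta)(t-s)}\, \mathbb{E}\bigl[\nabla f(\theta(s)) - \nabla f(\theta(t))\bigr]\, ds \;+\; \Bigl(u\int_0^t e^{-(1-\beta)(t-s)}\, ds - 1\Bigr)\, \mathbb{E}[\nabla f(\theta(t))].
\end{equation*}
The second piece tends to $(u/(1-\beta)-1)\,\mathbb{E}\nabla f$, and its norm is bounded by $(u/(1-\beta)-1)G$. This is nonnegative because $u\ge 1-\beta$.

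For the first piece I use Lipschitz continuity to get $\|\nabla f(\theta(s))-\nabla f(\theta(t))\| \le L\,\mathbb{E}\|\theta(t)-\theta(s)\|$. Since $d\theta = -\alpha m\, dt$, this gives $\|\theta(t)-\theta(s)\| \le \alpha \int_s^t \|m(r)\|\, dr$. Then I bound $\mathbb{E}\|m(r)\|$ by a drift part and a noise part:
\begin{itemize}
\item the drift part is at most $uG/(1-\beta)$;
\item the noise part is at most $u\sigma/\sqrt{2(1-\beta)}$, by Jensen applied to the stationary variance.
\end{itemize}
Together these give $\|\theta(t)-\theta(s)\| \le \alpha (t-s)\bigl(uG/(1-\beta) + u\sigma/\sqrt{2(1-\beta)}\bigr)$.

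Integrating against the kernel uses $\int_0^\infty \tau e^{-(1-\beta)\tau}\, d\tau = (1-\beta)^{-2}$. This produces
\begin{equation*}
\frac{u^2\alpha LG}{(1-\beta)^3} + \frac{u^2\alpha L\sigma}{\sqrt2\,(1-\beta)^{2.5}} .
\end{equation*}
The triangle inequality then combines the three pieces, and squaring gives the stated bias bound.

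\textbf{Variance.} I split $m(t)-\mathbb{E}m(t)$ into the noise integral and the centered drift convolution $u\int e^{-(1-\beta)(t-s)}\bigl(\nabla f(\theta(s)) - \mathbb{E}\nabla f(\theta(s))\bigr)\, ds$, and use $\operatorname{Var}(A+B) \le 2\operatorname{Var}A + 2\operatorname{Var}B$. The It\^o isometry gives the noise contribution $u^2\sigma^2/(2(1-\beta))$, and doubling it yields the $u^2\sigma^2/(1-\beta)$ term. For the drift part, Minkowski's inequality in $L^2$ gives a standard deviation of at most $uV\int_0^\infty e^{-(1-\beta)\tau}\, d\tau = uV/(1-\beta)$. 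Its doubled square gives $2u^2V^2/(1-\beta)^2$.

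\textbf{Main obstacle.} The hard step is controlling the parameter drift $\theta(t)-\theta(s)$ inside the bias. The drift is coupled to $m$ itself, so a bound on $\mathbb{E}\|m\|$ is needed before the bias estimate is finished. I would resolve this circularity a priori, using the asymptotic boundedness of $m$ that follows from the explicit solution together with $\|\nabla f\|\le G$, rather than through a fixed-point argument. It also needs care that the constants come out exactly as $(1-\beta)^{-3}$ and $(1-\beta)^{-2.5}$. This requires pairing the $1/(1-\beta)$ bound on $\mathbb{E}\|m\|$ with the $(1-\beta)^{-2}$ moment of the exponential kernel.
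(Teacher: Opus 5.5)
Your proposal is correct and matches the paper's proof essentially step for step. You use the same split of the bias into a lag integral plus the mismatch term $(u/(1-\beta)-1)G$. The lag integral is controlled by Lipschitz continuity, the bound $\mathbb{E}\|m\|\le uG/(1-\beta)+u\sigma/\sqrt{2(1-\beta)}$ and the kernel moment $(1-\beta)^{-2}$. That bound on $\mathbb{E}\|m\|$ holds uniformly in time, so the circularity you worry about does not actually arise. You also use the same $2\|a\|^2+2\|b\|^2$ split of the variance, with the It\^o isometry for the noise part.

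The only deviation is that you bound the centered drift convolution with Minkowski's inequality in $L^2$. The paper instead applies Cauchy--Schwarz after writing the kernel as $e^{-(1-\beta)(t-s)/2}\cdot e^{-(1-\beta)(t-s)/2}$. Both give the same $u^2V^2/(1-\beta)^2$ before doubling.
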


\begin{table}[ht]
\centering
\caption{Bias and variance bounds for different momentum estimators. As $\beta \to 1$, static estimators suffer from either diverging bias or diverging variance, highlighting the fundamental trade-off.}
\label{tab:bias_var_tradeoff}
\resizebox{\columnwidth}{!}{%
\begin{tabular}{lccc}
\toprule
\textbf{Method} & \textbf{Bias Bound} & \textbf{Variance Bound} & \textbf{Limit as $\beta \to 1$} \\
\midrule
\textbf{SGD} ($\beta=0$) & $0$ (Assuming $\alpha \to 0$) & $\sigma^2 + 2V^2$ & N/A \\
\addlinespace
\textbf{EMA} ($u=1-\beta$) & $\left( \frac{\alpha L G}{1 - \beta} + \frac{\alpha \sigma L }{\sqrt{2(1 - \beta)}} \right)^2$ & $(1 - \beta)\sigma^2 + 2V^2$ & Bias $\to \infty$, Var $\to 2V^2$ \\
\addlinespace
\textbf{CM} ($u=1$) & $\left( \frac{\alpha L G}{(1 - \beta)^3} + \frac{\alpha L \sigma}{\sqrt{2}(1 - \beta)^{2.5}} + \frac{\beta G}{1 - \beta} \right)^2$ & $\frac{\sigma^2}{1 - \beta} + \frac{2V^2}{(1 - \beta)^2}$ & Bias $\to \infty$, Var $\to \infty$ \\
\bottomrule
\end{tabular}%
}
\vspace{-2em}
\end{table}

Prior analyses typically regarded momentum-based gradient estimators as unbiased under the assumption that the parameter $\theta_t$ remains stationary~\cite{sutskever2013importance, kingma2014adam, cutkosky2020momentum}. However, as $\theta_t$ evolves over training, an additional \emph{parameter-shift bias} arises. \cref{main:thm:bias_variance_momentum} explicitly quantifies this effect to fill a critical gap left by prior analyses, revealing that the stochastic noise $\sigma$ itself exacerbates this shift.

To illustrate the implications of this theorem, \cref{tab:bias_var_tradeoff} summarizes the bias and variance bounds for standard momentum formulations and their asymptotic behaviors. For EMA ($u = 1 - \beta$), the initialization bias strictly vanishes. EMA effectively acts as a low-pass filter: as $\beta \to 1$, it successfully damps high-frequency stochastic noise (variance drops), but at the severe expense of unbounded bias driven by outdated gradients and accumulated noise-drift. Consequently, EMA is unbiased only under strict, often unrealistic conditions in deep learning, such as the learning rate or curvature approaching zero ($\alpha, L \to 0$). Conversely, CM ($u = 1$) exhibits an aggressive momentum nature. As shown in \cref{tab:bias_var_tradeoff}, both its bias and variance diverge sharply as $\beta \to 1$, amplifying systematic lag and noise susceptibility (the complex mechanics of CM are further discussed in \cref{sec:appendix_discuss}).

Consider the other extreme: when $\beta = 0$, both methods reduce to vanilla SGD, minimizing momentum-induced bias but retaining the full base variance. These bounds expose a fundamental limitation of conventional optimization paradigms: static choices of $u$ and $\beta$ lock the estimator into a rigid, predetermined trade-off, rendering it ill-suited to the dynamic noise and curvature of objective functions. 

This analysis reveals an inherent dilemma in momentum methods: structurally reducing variance inevitably amplifies bias, while minimizing bias exposes the estimator to higher variance. This naturally raises a fundamental question: Can we design an adaptive gain that dynamically reduces the dependence on momentum to minimize bias during low-variance phases, while heavily utilizing the momentum update to aggressively filter noise when variance is high?

%% file: texfiles/Method.tex
\section{Method}
In the \cref{sec:dilemma}, we analyzed a challenge in momentum-based methods: how to effectively balance the bias and variance in gradient estimation. The analysis suggests that introducing a variable gain could help mitigate this dilemma by adaptively adjusting the contribution of past and current gradients. Motivated by this, and inspired by the minimum mean square error (MMSE) principle~\cite{kay1993fundamentals} in Optimal Linear Filter~\cite{Kalman1960}, we design a novel online gradient estimator tailored for stochastic optimization. In the following, we describe the estimator's design and theoretical grounding.

\begin{algorithm}[htbp]
\caption{SGDF: Online Filter Estimate Gradient (element-wise).}
\label{alg:SGDF}
\textbf{Input}: $\theta_0$: initial parameter, $f(\theta)$: stochastic objective function\\
\textbf{Parameter}: $\{\alpha_t\}_{t=1}^{T}$: step size, $\{\beta_1, \beta_2\}$: attenuation coefficients, $\{\varepsilon\}$: numerical stability,$\{\textcolor{cvprblue}{\gamma}\}$: power scaling.\\
\textbf{Output}: $\theta_{T}$: resulting parameters.
\begin{algorithmic}[1]
\State Initialize: $m_0 \leftarrow 0$, $s_0 \leftarrow 0$
\While{$t \leq T$}  
    \State $g_{t} \leftarrow \nabla f_t(\theta_{t-1})$
    \State $m_{t} \leftarrow \beta_1 m_{t-1} + (1 - \beta_1) g_t$
    \State $s_{t} \leftarrow \beta_2 s_{t-1} + (1 - \beta_2) (g_t - m_t)^2$
    \State $\widehat{m}_t \leftarrow \dfrac{m_t}{1 - \beta_1^t}$, 
           $\widehat{s}_t \leftarrow 
           \dfrac{{\color{cvprblue}(1-\beta_1)(1-\beta_1^{2t})}\, s_t}
           {{\color{cvprblue}(1+\beta_1)}(1 - \beta_2^t)}$
    \State $K_{t} \leftarrow \dfrac{\widehat{s}_{t}}{\widehat{s}_t + (g_t - \widehat{m}_t)^2 +\varepsilon} $
    \State $\widehat{g}_t \leftarrow \widehat{m}_{t} + K_t^{\begingroup\color{cvprblue}\gamma\endgroup} (g_t - \widehat{m}_{t})$
    \State $\theta_{t} \leftarrow \theta_{t-1} - \alpha_t \widehat{g}_t$
\EndWhile
\State \textbf{return} $\theta_{T}$
\end{algorithmic}
\end{algorithm}


\subsection{SGDF General Introduction}
\label{sec:method intro}
In this subsection, we introduce the SGDF by building on principles of optimal gradient estimation. The detailed derivation of the SGDF is provided in \cref{proof_method}. The complete algorithm is summarized in \cref{alg:SGDF}. Below, we summarize its key steps and motivation.

In stochastic gradient descent (SGD), the core challenge is to estimate a reliable gradient direction under noise. Suppose we are given a sequence of stochastic gradients $\{g_i\}_{i=1}^{t}$, our goal is to estimate a smoothed direction $\hat{g}_t$ that effectively combines the current observation $g_t$ and past gradients. 

Inspired by the principle of minimum mean squared error (MMSE), we begin with a naive average:
\begin{equation}
\hat{g}_t = \frac{1}{t} \sum_{i=1}^{t-1} g_i + \frac{1}{t} g_t = \left(1 - \frac{1}{t}\right)\bar{g}_{1:t-1} + \frac{1}{t}g_t,
\end{equation}
where $\bar{g}_{1:t - 1} = \frac{1}{t - 1} \sum_{i=1}^{t - 1} g_i$ denotes the averaging of the gradient under different model parameters. Then, we rewrite this as a linear interpolation:
\begin{equation}
\hat{g}_t = (1 - K_t)\bar{g}_{1:t-1} + K_t g_t, \quad \text{where} \quad K_t = \frac{1}{t}.
\end{equation}

To enable efficient computation, we replace $\bar{g}_{1:t-1}$ with a bias-corrected momentum estimate $\widehat{m}_t$, giving:
\begin{equation}
\label{interpolation formula}
\hat{g}_t = \widehat{m}_t + K_t (g_t - \widehat{m}_t).
\end{equation}

This form mirrors the update structure of an Optimal Linear Filter, which recursively refines estimates by weighting prediction and observation with gain derived from their respective uncertainties. Here, $K_t$ acts as a gain to balance trust between the prior $\widehat{m}_t$ and the observation $g_t$.

To find an optimal gain, we minimize the variance of $\hat{g}_t$, assuming $\widehat{m}_t$ and $g_t$ are independent:
\begin{equation}
\mathrm{Var}(\hat{g}_t) = (1 - K_t)^2 \mathrm{Var}(\widehat{m}_t) + K_t^2 \mathrm{Var}(g_t).
\end{equation}
Solving $\frac{\mathrm{d}}{\mathrm{d}K_t} \mathrm{Var}(\hat{g}_t) = 0$ yields the optimal gain:
\begin{equation}
K_t^\star = \frac{\mathrm{Var}(\widehat{m}_t)}{\mathrm{Var}(\widehat{m}_t) + \mathrm{Var}(g_t)}.
\end{equation}

This form naturally down-weights noisy gradients and shifts the estimate toward the more reliable direction. 

To estimate $\mathrm{Var}(\widehat{m}_t)$ in practice, we follow~\cite{zhuang2020adabelief} and compute the second-order moment $s_t$ as the exponential moving average of $(g_t - m_t)^2$, applying Adam’s bias correction~\cite{kingma2014adam} to obtain $\widehat{m}_t$ and $\widehat{s}_t$. To further refine this estimate, we introduce a variance correction factor \textcolor{cvprblue}{$(1-\beta_1)(1-\beta_1^{2t})/(1+\beta_1)$}, derived in Appendix~\ref{proof_correction}, which improves $\widehat{s}_t$ under independent gradients with bounded variance. The \Figref{fig:correction} in \cref{app:subsec:ablation} compares the performance with and without this correction factor, showing that this correction improves performance and supports our theoretical framework. Finally, to improve responsiveness in noisy regimes, we scale $K_t$ by $\textcolor{cvprblue}{\gamma=\frac{1}{2}}$, an operation we formally justify in \cref{proof_scaling} as mathematically equivalent to modulating the effective observation variance.

\subsection{Fusion of Gaussian Distributions}
\label{sec:section3.2}
Building on the MMSE-based principle in \cref{sec:method intro}, we now seek a deeper statistical view of the SGDF. Specifically, we interpret the gain-controlled interpolation between the momentum estimate $\widehat{m}_t$ and the current gradient $g_t$ as the fusion of two uncertain sources. This subsection presents both a variance-weighted linear view and a Bayesian interpretation, showing that SGDF performs optimal Gaussian fusion under the assumption of independent noise.

\textbf{Linear combination view.}
Starting from the interpolation formula \Eqref{interpolation formula}, we write the following:
\begin{equation}
    \hat{g}_t = (1 - K_t)\widehat{m}_t + K_t g_t .
\end{equation}

We assume that the stochastic gradients can be modeled as Gaussian distributions~\cite{bernstein2018signsgd,liu2019variance}, \ie, $\widehat{m}_t \sim \mathcal{N}(\mu_m,\sigma_m^2)$ and $g_t \sim \mathcal{N}(\mu_g,\sigma_g^2)$, with independence between the two sources.

Under these assumptions, the fused mean and variance are
\begin{align}
\mathbb{E}[\hat{g}_t] &= (1-K_t)\mu_m + K_t\mu_g  = \frac{\sigma_g^2\mu_m + \sigma_m^2\mu_g}{\sigma_m^2+\sigma_g^2},\\
\mathrm{Var}(\hat{g}_t) &= (1-K_t)^2\sigma_m^2 + K_t^2\sigma_g^2 = \frac{\sigma_m^2\sigma_g^2}{\sigma_m^2+\sigma_g^2}.
\end{align}

\textbf{Bayesian fusion view.}
An equivalent result is obtained by multiplying the two Gaussian densities and normalising~\cite{faragher2012understanding}:
\begin{equation}
    p(\hat{g}_t) \propto
    \exp\!\Biggl[-\frac{(\hat{g}_t-\mu_m)^2}{2\sigma_m^2}
               -\frac{(\hat{g}_t-\mu_g)^2}{2\sigma_g^2}\Biggr],
\end{equation}
which yields the posterior
\begin{equation}
\widehat{g}_t \sim \mathcal{N}  \Biggl(
  \dfrac{\sigma_g^2\mu_m+\sigma_m^2\mu_g}{\sigma_m^2+\sigma_g^2},
  \dfrac{\sigma_m^2\sigma_g^2}{\sigma_m^2+\sigma_g^2}\Biggr).
\end{equation}

The fused mean \(\mu_{\hat{g}_t}\) is a variance-weighted average of \(\mu_{m}\) and \(\mu_{g}\), assigning greater weight to the source with lower variance to reflect higher confidence in more stable estimates. Similarly, the fused variance \(\sigma_{\hat{g}_t}^2\) is smaller than both \(\sigma_{m}^2\) and \(\sigma_{g}^2\), indicating reduced uncertainty in the gradient estimate. This reduction results from the Optimal Linear Filter's optimality in minimizing the mean squared error. The full derivation is provided in \cref{proof_fusion}.

\input{resources/main_figure_cifar}

\subsection{Convex and Non-convex Convergence Analysis}
\label{sec:section3.4}
Finally, we provide the convergence property of SGDF as shown in \cref{thm:main_convex} and \cref{thm:main_non_convex}. The assumptions are common and standard when analyzing the convergence of convex and non-convex functions via SGD-based methods~\cite{kingma2014adam,chen2018convergence}. Proofs for convergence in convex and non-convex cases are provided in \cref{convex} and \cref{non-convex}, respectively. 

\begin{theorem}[Convergence in Convex Optimization]
\label{thm:main_convex}
Assume that the objective functions $f_t$ are convex. Let the gradients be bounded such that $\|\nabla f_t\|_\infty \leq G_\infty$, and the optimization domain be bounded with $\|\theta_m - \theta_n\|_\infty \leq D_\infty$. Suppose the momentum coefficient $\beta_1, \beta_2 \in [0, 1)$ is constant, the power scaling factor follows $\gamma_t = \gamma_0 / \sqrt{t}$ for some $\gamma_0 > 0$. For a learning rate $\alpha_t = \alpha/\sqrt{t}$, SGDF achieves the following cumulative regret bound for all $T \geq 1$:
\vspace{-1em}
\begin{multline}
    R(T) \leq \sum_{i=1}^{d} \left( \frac{D_\infty^2}{2\alpha} + \alpha G_\infty^2 \frac{1+\beta_1}{1-\beta_1} \right) \sqrt{T} \\
    + \sum_{i=1}^{d} \frac{\beta_1 G_\infty D_\infty}{1-\beta_1} \left( 2 + \sum_{t=1}^{T-1} \vert K_{t,i} - K_{t+1,i} \vert \right)
\end{multline}

\end{theorem}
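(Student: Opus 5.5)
We follow the standard online-convex-optimization regret template of Adam/AMSGrad, adapted to the non-adaptive update $\theta_t = \theta_{t-1} - \alpha_t \widehat{g}_t$. The first step is to rewrite the filtered direction as a convex combination,
\begin{equation*}
\widehat{g}_t = (1-\kappa_t)\widehat{m}_t + \kappa_t g_t, \qquad \kappa_{t,i} := K_{t,i}^{\gamma_t}\in[0,1],
\end{equation*}
and to unroll $\widehat{m}_t$ in terms of past gradients. By convexity, $R(T)=\sum_t f_t(\theta_t)-f_t(\theta^*) \le \sum_t\sum_i g_{t,i}(\theta_{t,i}-\theta^*_i)$. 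So it suffices to bound the linearised regret coordinate-wise, which is why the final bound is a sum over $i=1,\dots,d$.

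For each coordinate, we expand $(\theta_{t+1,i}-\theta^*_i)^2=(\theta_{t,i}-\theta^*_i)^2-2\alpha_t\widehat{g}_{t,i}(\theta_{t,i}-\theta^*_i)+\alpha_t^2\widehat{g}_{t,i}^2$. We then split $\widehat{g}_{t,i}=g_{t,i}+(1-\kappa_{t,i})(\widehat{m}_{t,i}-g_{t,i})$ and isolate $g_{t,i}(\theta_{t,i}-\theta_i^*)$. This yields three groups of terms.

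\emph{Distance term.} The first group is the telescoping distance term $\sum_t\frac{(\theta_{t,i}-\theta^*_i)^2-(\theta_{t+1,i}-\theta^*_i)^2}{2\alpha_t}$. With $\alpha_t=\alpha/\sqrt t$ and the bounded domain, it is at most $\frac{D_\infty^2}{2\alpha}\sqrt T$.

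\emph{Step-squared term.} The second group is $\sum_t\frac{\alpha_t}{2}\widehat{g}_{t,i}^2$. Because $\widehat{g}$ is a convex combination of $g_t$ and $\widehat{m}_t$, we have $|\widehat{g}_{t,i}|\le \max(|g_{t,i}|,|\widehat{m}_{t,i}|)$. With the $(1+\beta_1)/(1-\beta_1)$-type constant coming from a conservative bound on $\widehat m$ (bias-correction factor and geometric sums), and $\sum_t 1/\sqrt t\le 2\sqrt T$, this produces the $\alpha G_\infty^2\frac{1+\beta_1}{1-\beta_1}\sqrt T$ term.

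\emph{Momentum term.} The third group is the cross term $\sum_t (1-\kappa_{t,i})(g_{t,i}-\widehat{m}_{t,i})(\theta_{t,i}-\theta_i^*)$. This is where the main obstacle lies. The difference $g_{t,i}-\widehat{m}_{t,i}$ is a weighted sum of consecutive increments, $\sum_{j<t}\beta_1^{t-j}(g_{j+1}-g_j)$ up to normalisation. We will not bound it crudely by $G_\infty D_\infty$ per step, since that would give linear regret. Instead we apply Abel summation in $t$ to the product of $(1-\kappa_{t,i})$ with the momentum residual. Summation by parts moves the differences onto the gain, producing boundary terms (each $\le \frac{\beta_1}{1-\beta_1}G_\infty D_\infty$, giving the constant $2$) and interior terms weighted by $|\kappa_{t,i}-\kappa_{t+1,i}|$. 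These are bounded by $\frac{\beta_1 G_\infty D_\infty}{1-\beta_1}$ times the variation of the gain.

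\emph{Passing to $K$.} To express the interior terms through $|K_{t,i}-K_{t+1,i}|$, we use $\gamma_t=\gamma_0/\sqrt t$ together with the elementary inequality $|x^{a}-y^{b}|\le |x-y|+|x^a-x^b|$. The second piece must be absorbed. Here we would try to show it is dominated by the $\sqrt T$ terms, or else accept it inside the $|K|$-variation by monotonicity of $x\mapsto x^{\gamma}$ on $[0,1]$.

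This last step, controlling the Abel-summed momentum residual and transferring the variation from $\kappa_t=K_t^{\gamma_t}$ to $K_t$, is the delicate part. We expect the constants there to need the most care; the remaining steps are routine AMSGrad-style bookkeeping.
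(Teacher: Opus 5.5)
Your skeleton is the paper's: linearize by convexity, work per coordinate, split $g_{t,i}=\widehat g_{t,i}+(1-K_{t,i})(g_{t,i}-\widehat m_{t,i})$, telescope the $\widehat g$ part, and sum the residual by parts. However, the momentum step as you describe it does not close.

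\textbf{The residual must be a first difference.} Summation by parts in $t$ works only once the residual is written as a first difference of a bounded sequence. The convolution form $\sum_{j<t}\beta_1^{t-j}(g_{j+1}-g_j)$ does not give you that. The missing identity is $g_{t,i}-m_{t,i}=\frac{\beta_1}{1-\beta_1}(m_{t,i}-m_{t-1,i})$. The bias-correction remainder $-\frac{\beta_1^t}{1-\beta_1^t}m_{t,i}$ must be handled separately. There, $|m_{t,i}|\le(1-\beta_1^t)G_\infty$ cancels the denominator, and $\sum_t\beta_1^t\le\frac{\beta_1}{1-\beta_1}$ gives one $\frac{\beta_1}{1-\beta_1}G_\infty D_\infty$. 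Because $m_0=0$, summation by parts leaves only one boundary term, and that term supplies the other unit of the constant $2$.

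\textbf{The omitted parameter-step term.} Summation by parts differences the whole weight $(1-K_{t,i})(\theta_{t,i}-\theta_i^*)$, not just the gain:
\[
(1-K_{t+1,i})(\theta_{t+1,i}-\theta_i^*)-(1-K_{t,i})(\theta_{t,i}-\theta_i^*)=-(1-K_{t+1,i})\alpha_t\widehat g_{t,i}+(K_{t,i}-K_{t+1,i})(\theta_{t,i}-\theta_i^*).
\]
The first piece, against $|m_{t,i}|\le G_\infty$ and the prefactor $\frac{\beta_1}{1-\beta_1}$, contributes $\frac{2\beta_1}{1-\beta_1}\alpha G_\infty^2\sqrt T$. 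This is the actual source of the factor $\frac{1+\beta_1}{1-\beta_1}$. The step-squared term is only $\alpha G_\infty^2\sqrt T$: you already have $|\widehat g_{t,i}|\le G_\infty$ because $|\widehat m_{t,i}|\le G_\infty$ exactly. Then $1+\frac{2\beta_1}{1-\beta_1}=\frac{1+\beta_1}{1-\beta_1}$. If you charge the step-squared term the full $\frac{1+\beta_1}{1-\beta_1}$ via a ``conservative bound on $\widehat m$'', there is no room left for the term your summation by parts omitted, and the bound overshoots.

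\textbf{The ``passing to $K$'' step.} This step should be dropped, and as written it is wrong. Setting $a=b$, your inequality $|x^a-y^b|\le|x-y|+|x^a-x^b|$ becomes $|x^b-y^b|\le|x-y|$. That fails on $[0,1]$ for $b=\gamma_{t+1}<1$, since the slope of $x^b$ blows up near $0$. Moreover, any such transfer would need $\sum_t|K_{t,i}^{\gamma_t}-K_{t+1,i}^{\gamma_{t+1}}|\le\sum_t|K_{t,i}-K_{t+1,i}|$. This already fails when the raw gain is a constant in $(0,1)$: the right side is $0$ while $K^{\gamma_t}$ still drifts.

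In the paper's proof, $K_{t,i}\in[0,1]$ simply denotes the interpolation coefficient actually used in the update, i.e.\ the power-scaled gain. So the displayed inequality holds for an arbitrary gain sequence in $[0,1]$. The schedule $\gamma_t=\gamma_0/\sqrt t$ enters only afterwards, through the total-variation lemma, to show $\sum_t|K_{t,i}-K_{t+1,i}|=O(\sqrt T)$ and hence $R(T)=O(\sqrt T)$.
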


In Adam-type optimizers, decaying $\beta_{1,t}$ to zero is crucial for convex analysis~\cite{kingma2014adam,zhuang2020adabelief}. By contrast, SGDF maintains constant coefficients ($\beta_1, \beta_2$) and delegates the stabilization role to the dynamic power-scaled gain $K_t^{\gamma_t}$. With $\gamma_t = \mathcal{O}(1/\sqrt{t})$ that \(\sum \vert K_{t,i} - K_{t+1,i} \vert \leq \mathcal{O}(1) + \mathcal{O}(\sqrt{T})\). Therefore, in the convex case, \cref{thm:main_convex} establishes that the regret of SGDF is upper bounded by $\mathcal{O}(\sqrt{T})$. 
 
\begin{theorem}{(Convergence for Non-convex Stochastic Optimization)}
\label{thm:main_non_convex}
Assume Assumptions 1--4 hold and the step size is $\alpha_t = \alpha/\sqrt{t}$. For all $T \geq 1$, SGDF satisfies:
\begin{enumerate}[leftmargin=*]
    \item \textbf{Bounded Variables:} $\|\theta-\theta^{*}\|_{2} \leq D$ (or $\|\theta_{i}-\theta_{i}^{*}\|_{2} \leq D_{i}$ for each dimension $i$) for all $\theta, \theta^{*}$.
    \item \textbf{Unbiased Noise:} The noise $\zeta_{t} = g_{t} - \nabla f(\theta_{t})$ satisfies $\mathbb{E}[\zeta_{t}] = 0$, $\mathbb{E}[\|\zeta_{t}\|_{2}^{2}] \leq \sigma^{2}$, and $\zeta_{t_1}, \zeta_{t_2}$ are independent for $t_1 \neq t_2$.
    \item \textbf{Bounded Gradients:} Both the true and noisy gradients are uniformly bounded, i.e., $\|\nabla f(\theta_t)\|_2 \leq G$ and $\|g_t\|_2 \leq G$ for all $t \geq 1$.
    \item \textbf{Function Properties:} $f$ is differentiable, lower bounded, and $L$-smooth, i.e., $\|\nabla f(x) - \nabla f(y)\|_2 \leq L\|x-y\|_2$ for all $x, y$.
\end{enumerate}

The convergence guarantee is given by:
\begin{equation}
    \mathbb{E}(T) \leq \frac{C_{7}\alpha^2 (\log T + 1) + C_{8}}{\alpha\sqrt{T}},
\end{equation}
where $\mathbb{E}(T) = \min_{t=1,\ldots,T} \mathbb{E}_{t-1}[\|\nabla f(\theta_{t})\|_{2}^{2}]$ is the minimum expected squared gradient norm, $\alpha$ is the initial learning rate, and $C_7, C_8$ are constants independent of $T$ ($C_7$ also independent of $d$). The expectation is taken over all randomness in $\{g_t\}$.
\end{theorem}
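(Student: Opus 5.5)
The plan is to write SGDF as an SGD method with a bounded, data-dependent preconditioner-free direction, and run the standard descent-lemma argument used for Adam-type methods in the non-convex setting, following the template of \cite{chen2018convergence}. Write the update as $\theta_{t+1}=\theta_t-\alpha_t\widehat g_t$ with $\widehat g_t=(1-K_t^{\gamma})\widehat m_t+K_t^{\gamma}g_t$. Each coordinate is a convex combination of $\widehat m_{t,i}$ and $g_{t,i}$, since $K_{t,i}\in[0,1)$. Hence $\|\widehat g_t\|_2\le C_0 G$, where $C_0$ depends only on $\beta_1$ and is needed because of the bias correction $1/(1-\beta_1^t)\le 1/(1-\beta_1)$. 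This uniform bound is the only property of the filter gain I intend to use, apart from a measurability issue discussed below.

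\textbf{Step 1: one-step descent.} Apply $L$-smoothness:
\[f(\theta_{t+1})\le f(\theta_t)-\alpha_t\langle\nabla f(\theta_t),\widehat g_t\rangle+\tfrac{L}{2}\alpha_t^2C_0^2G^2.\]
Decompose $\widehat g_t=g_t+(1-K_t^{\gamma})(\widehat m_t-g_t)$.

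The term involving $g_t$ gives $-\alpha_t\|\nabla f(\theta_t)\|^2$ in conditional expectation, provided it is decoupled from $K_t$. Here lies the main obstacle: $K_t$ depends on $g_t$, so $\mathbb{E}_{t-1}[K_t^\gamma g_t]\neq K_t^\gamma\nabla f$. I would handle this as in the AdaBound and AMSGrad analyses. Either replace $K_t$ by a $\mathcal F_{t-1}$-measurable surrogate $\tilde K_t$, built from $\widehat s_{t-1}$ and a worst-case value of $(g_t-\widehat m_t)^2$ bounded by $4C_0^2G^2$, and bound $|K_t^\gamma-\tilde K_t^\gamma|$ by a constant. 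Or, more simply, absorb the whole correlation term into $\alpha_t\cdot\mathcal O(G^2)$ times a factor that is summable after multiplying by $\alpha_t$.

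\textbf{Step 2: the momentum-mismatch term.} The term $\langle\nabla f(\theta_t),\widehat m_t-g_t\rangle$ is split into two pieces:
\begin{itemize}
\item $\widehat m_t-\nabla f(\theta_t)$, a weighted average of $\nabla f(\theta_k)-\nabla f(\theta_t)$ plus noise. By $L$-smoothness, $\|\nabla f(\theta_k)-\nabla f(\theta_t)\|\le L\sum_{j=k}^{t-1}\alpha_jC_0G$. The geometric weights $\beta_1^{t-k}$ then give an $\mathcal O(\alpha_t LG/(1-\beta_1)^2)$ bound, so this contributes $\alpha_t^2$-order terms.
\item The independent noise $\zeta_t$, whose cross terms vanish in expectation up to the same decoupling issue.
\end{itemize}
I would instead use Young's inequality to produce $\tfrac12\alpha_t\|\nabla f\|^2$ plus $\alpha_t\cdot$ bias$^2$. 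With the parameter-shift bias of order $\alpha_t$, this again yields $\alpha_t^3$ and $\alpha_t^2$ terms.

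\textbf{Step 3: telescoping.} Sum over $t=1,\dots,T$ and take total expectation. Use $f(\theta_1)-\inf f$ and the bound $D$ to absorb boundary terms from the decoupling into $C_8$. Collect all $\alpha_t^2$ terms, using $\sum_{t\le T}\alpha^2/t\le\alpha^2(\log T+1)$, into $C_7\alpha^2(\log T+1)$, where $C_7$ depends on $L,G,\sigma,\beta_1$ only; using $\ell_2$ norms throughout keeps it free of $d$. The left side is at least $\sum_t\alpha_t\,\mathbb{E}\|\nabla f(\theta_t)\|^2/2\ge \alpha\sqrt T\,\mathbb{E}(T)/2$, since $\sum_{t\le T}1/\sqrt t\ge\sqrt T$. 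Dividing gives the stated bound.

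The delicate point remains Step 1's dependence of $K_t$ on $g_t$. If the surrogate approach fails to give a clean constant, the fallback is to use the bounded-variables assumption to control that term. That would contribute to $C_8$ (which may depend on $d$ through $D_i$), consistent with the statement's allowance that only $C_7$ is $d$-independent.
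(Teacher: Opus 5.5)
Your Step~2 is where the argument breaks, and it is also where the paper takes a different route. The residual $\widehat m_t-g_t$ is not of order $\alpha_t$. From the update,
\[
\widehat m_t-g_t=\frac{\beta_1}{1-\beta_1^t}\,m_{t-1}-\frac{\beta_1-\beta_1^t}{1-\beta_1^t}\,g_t ,
\]
so besides the $\mathcal{O}(\alpha_t)$ parameter-shift drift it contains two noise pieces: $-\frac{\beta_1-\beta_1^t}{1-\beta_1^t}\zeta_t$ and the averaged past noise $\frac{\beta_1(1-\beta_1)}{1-\beta_1^t}\sum_{k<t}\beta_1^{t-1-k}\zeta_k$. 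Both are of size $\Theta(\sigma)$. Young's inequality therefore leaves $\tfrac{\alpha_t}{2}\mathbb{E}\|(1-K_t^{\gamma})(\widehat m_t-g_t)\|^2$, which is of order $\alpha_t\sigma^2$, not $\alpha_t^2$, since $1-K_t^{\gamma}$ is not small in general. Summed, this is of order $\alpha\sigma^2\sqrt T$; after dividing by $\alpha\sqrt T$ you are left with a non-vanishing floor rather than the stated rate. You also cannot drop the cross terms with past noise in expectation, because $\theta_t$ is a function of $\zeta_1,\dots,\zeta_{t-1}$.

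The missing idea is the paper's: apply the smoothness inequality to the auxiliary iterate $\xi_t=\theta_t+\frac{\beta_1}{1-\beta_1}(\theta_t-\theta_{t-1})$. Then $m_{t-1}$ enters $\xi_{t+1}-\xi_t$ only through the coefficient difference $\eta_t(1-K_t)-\eta_{t-1}(1-K_{t-1})$, where $\eta_t=\alpha_t/(1-\beta_1^t)$. The accumulated momentum noise is thus charged to $\eta_{t-1}-\eta_t$, which telescopes, and to $\eta_{t-1}|K_{t-1,i}-K_{t,i}|$. Everything else involves only $g_t$ and $g_{t-1}$, so the only noises paired with $\nabla f(\theta_t)$ at order $\alpha_t$ are $\zeta_t$ and $\zeta_{t-1}$. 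These have zero conditional mean (the latter after an $\mathcal{O}(\alpha_{t-1}^2)$ smoothness correction), modulo the gain issue below.

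The obstacle you flag in Step~1 is genuine, and none of your three remedies removes it.
\begin{itemize}
\item Because $K_t$ is computed from $g_t$, the term $\alpha_t\mathbb{E}_{t-1}\langle\nabla f(\theta_t),K_t^{\gamma}\zeta_t\rangle$ can be of order $\alpha_t G\sigma$.
\item A predictable surrogate with $|K_t^{\gamma}-\tilde K_t^{\gamma}|=\mathcal{O}(1)$ still leaves an $\mathcal{O}(\alpha_t G^2)$ error per step. That sums to order $\alpha\sqrt T$ and gives only an $\mathcal{O}(1)$ bound.
\item Nothing in the assumptions supplies a decaying ``summable factor''.
\item The bounded-variables fallback does not apply, because the offending term is an inner product with $\nabla f(\theta_t)$, not with $\theta_t-\theta^*$.
\end{itemize}
The AMSGrad/AdaBound analyses succeed because their effective step sizes are monotone or clipped, so their variation telescopes; $K_t$ has no such structure.

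The paper does not derive this step either. When taking expectations it treats $K_t$ as independent of $\zeta_t$, and at points as $\{0,1\}$-valued. It also places the accumulated gain variation $S_{\text{total}}=\sum_t\sum_i|K_{t-1,i}-K_{t,i}|$ into $C_8$ as if it were independent of $T$. To reach the stated bound you would need to adopt those hypotheses explicitly: a gain that is predictable with respect to $\zeta_t$ and has bounded total variation. You would need this on top of switching to the $\xi_t$ sequence.
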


\cref{thm:main_non_convex} shows that the convergence rate of SGDF in the non-convex setting is $\mathcal{O}(\log T / \sqrt{T})$, which matches the rates established for Adam-type optimizers~\cite{reddi2018convergence,chen2018convergence}. In our analysis, the terms involving the estimated gain $K_t$ were upper bounded by their maximal possible values to simplify the derivation of the final bound. We adopted the general $L$-smoothness assumption to obtain this rate. Furthermore, if a fixed decay schedule $\alpha/\sqrt{T}$ is used, where $T$ represents the total number of iterations, instead of $\alpha/\sqrt{t}$ for infinite iterations, the convergence rate improves to $\mathcal{O}(1/\sqrt{T})$~\cite{wang2023closing}.

%% file: resources/main_figure_cifar.tex
\begin{figure*}[t]
    \vspace{0.2em}
    \centering
    \setlength{\tabcolsep}{0pt}
    \begin{tabular}{ccc}
        \subfloat[VGG11 on CIFAR-10]{\includegraphics[width=0.33\linewidth]{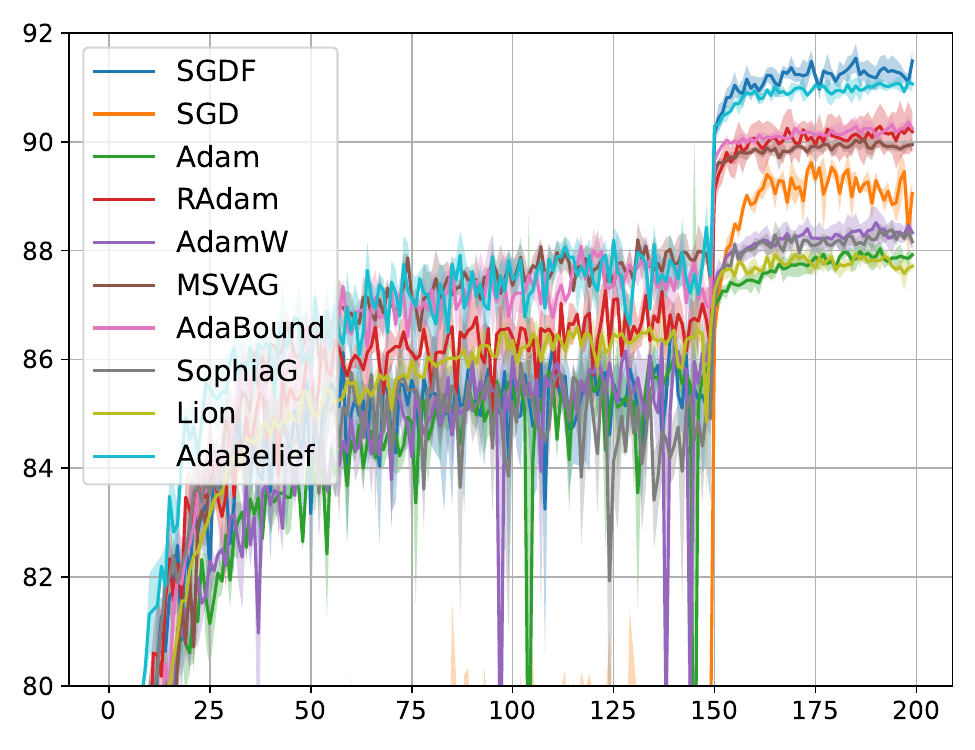}\label{subfig:main_vgg_test_cifar10}} &
        \subfloat[ResNet34 on CIFAR-10]{\includegraphics[width=0.33\linewidth]{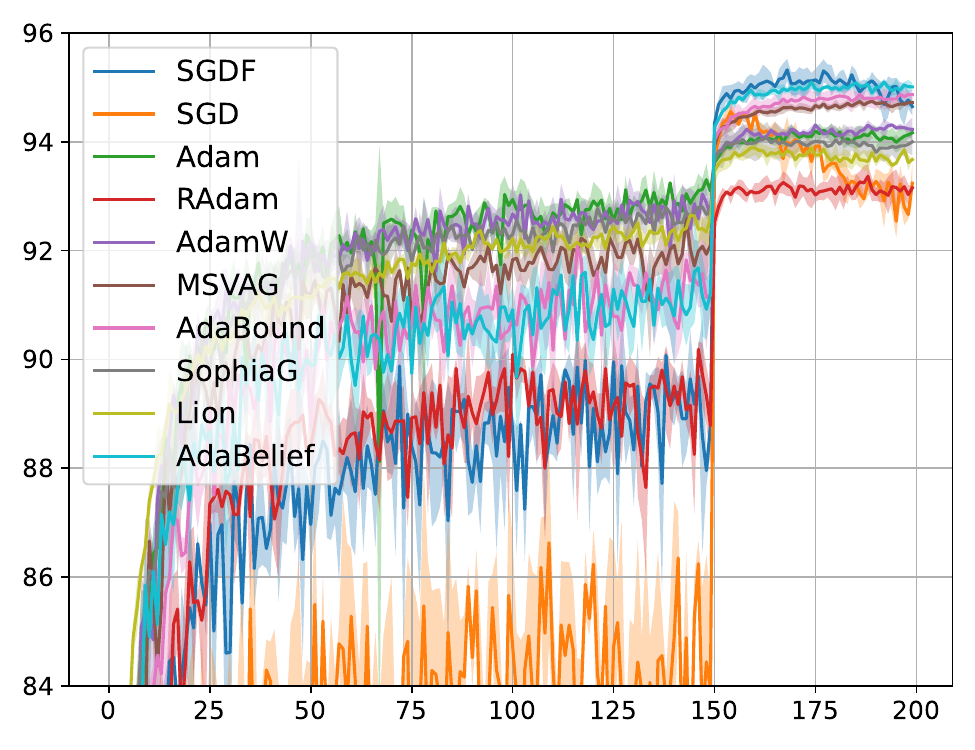}\label{subfig:main_resnet_test_cifar10}} &
        \subfloat[DenseNet121 on CIFAR-10]{\includegraphics[width=0.33\linewidth]{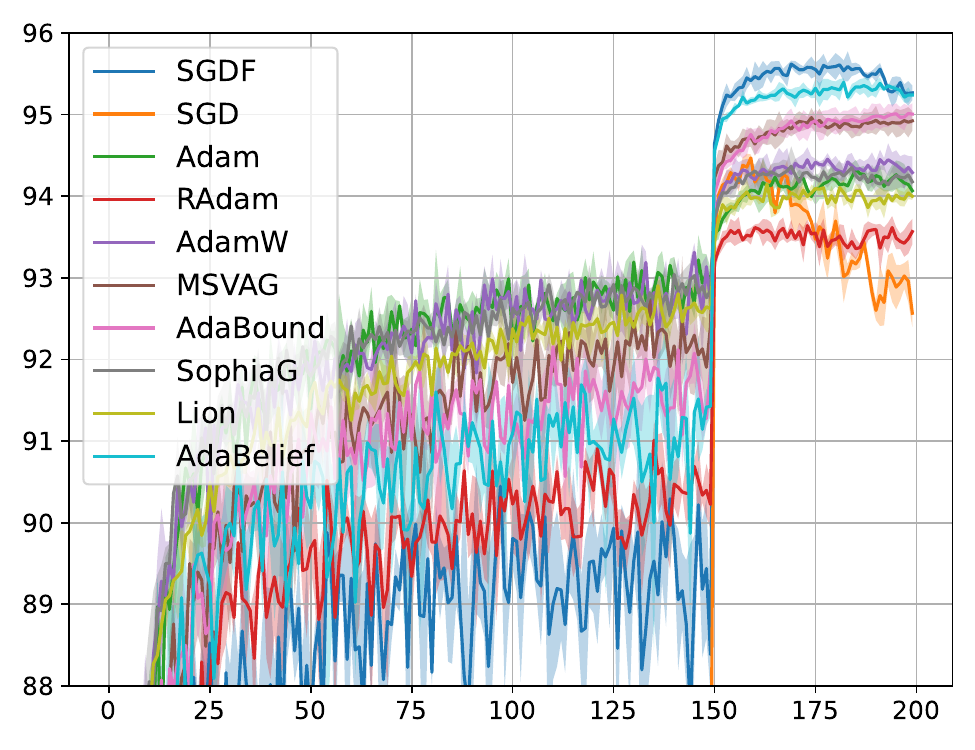}\label{subfig:main_densenet_test_cifar10}} \\
        
        \subfloat[VGG11 on CIFAR-100]{\includegraphics[width=0.33\linewidth]{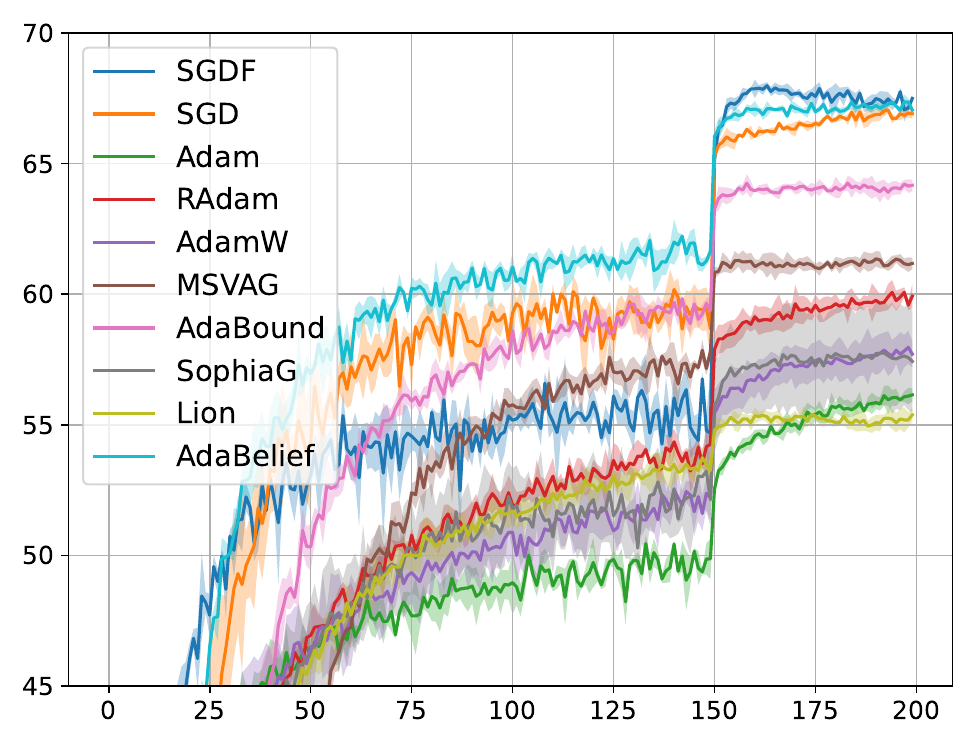}\label{subfig:main_vgg_test_cifar100}} &
        \subfloat[ResNet34 on CIFAR-100]{\includegraphics[width=0.33\linewidth]{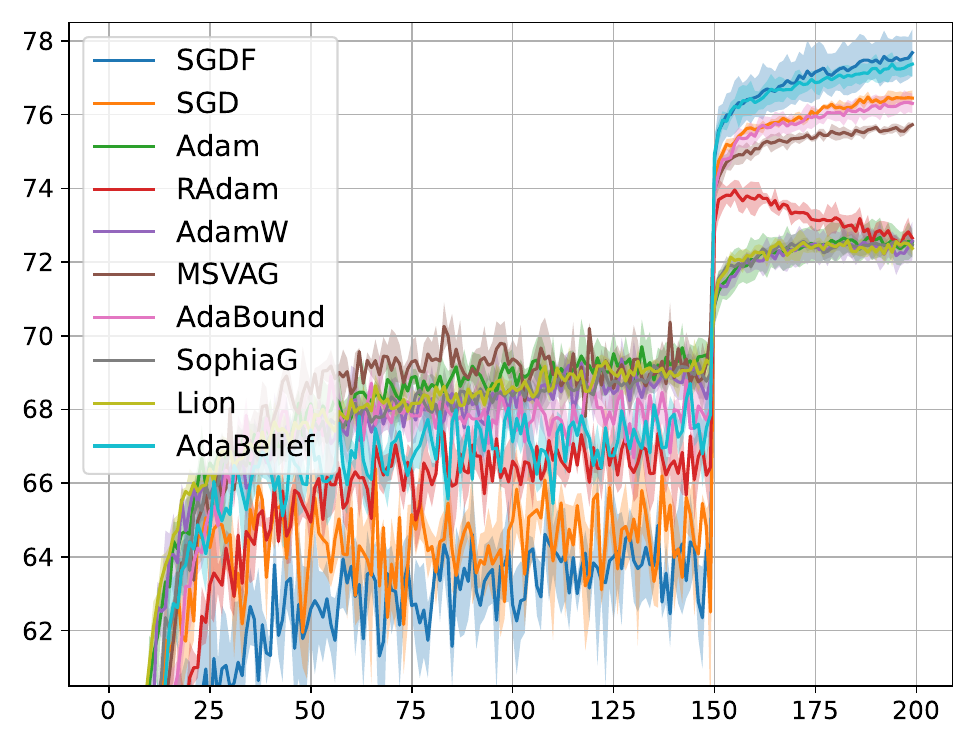}\label{subfig:main_resnet_test_cifar100}} &
        \subfloat[DenseNet121 on CIFAR-100]{\includegraphics[width=0.33\linewidth]{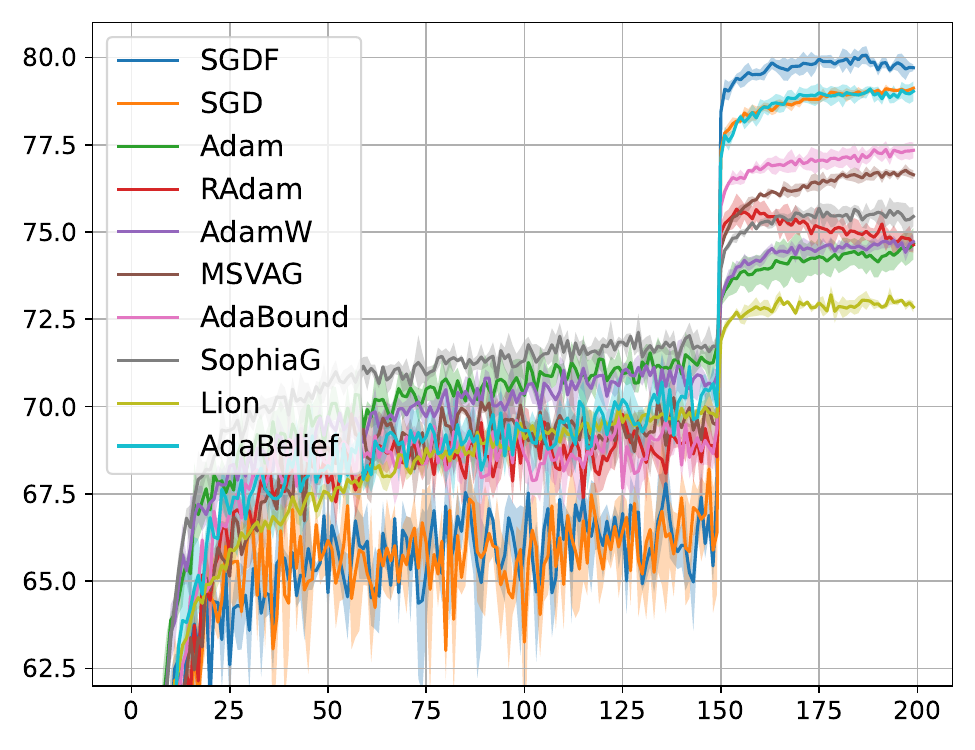}\label{subfig:main_densenet_test_cifar100}}
    \end{tabular}
    \caption{Test accuracy ([$\mu \pm \sigma$]) on CIFAR.}
    \label{fig:main_cifar_experiments}
    \vspace{0.2em}
\end{figure*}

%% file: texfiles/Experiments.tex
\section{Experiments}
\label{sec:section4}
\subsection{Empirical Evaluation}
\label{sec:section4.1}
In this study, we focus on the following tasks: \textbf{Image Classification.} We employed the VGG~\cite{2014Very}, ResNet~\cite{he2016deep}, and DenseNet~\cite{huang2017densely} models for image classification tasks on the CIFAR dataset~\cite{krizhevsky2009learning}. The major difference between these three network architectures is the residual connectivity, which we will discuss in \Secref{sec:section4.4}. We evaluated and compared the performance of SGDF with other optimizers such as SGD, Adam, RAdam~\cite{liu2019variance}, AdamW~\cite{loshchilov2017decoupled}, MSVAG~\cite{balles2018dissecting}, Adabound~\cite{luo2019adaptive}, Sophia~\cite{liu2023sophia}, Lion~\cite{chen2023symbolic}, and AdaBelief~\cite{zhuang2020adabelief}, all of which were implemented based on the official PyTorch. Additionally, we further tested the performance of SGDF on the ImageNet dataset~\cite{Deng2009} using the ResNet model. \textbf{Object Detection.} Object detection was performed on the PASCAL VOC dataset~\cite{pascal-voc-2010} using Faster-RCNN~\cite{faster_rcnn} integrated with FPN. \textbf{Post-training in ViT.} We test the performance of transformer architecture networks by post-training ViT~\cite{dosovitskiy2020image} on six benchmark dataset. \textbf{More experimental results are summarized in \cref{sec:appendixe}.} 

\textbf{Hyperparameter tuning.} Following \cite{zhuang2020adabelief}, we delved deep into the optimal hyperparameter settings for our experiments. 
\begin{itemize}[leftmargin=*]
\item \textit{SGDF:} We followed to Adam's original parameter values except learning rate: $\alpha=0.5$, $\beta_1=0.9$, $\beta_2=0.999$, $\epsilon=10^{-8}$. The learning rate was searched same as SGD research set.
\item \textit{SGD:} We set the momentum 0.9, which is the default for networks like ResNet and DenseNet. The learning rate was searched in the set \{10.0, 1.0, 0.5, 0.1, 0.01, 0.001\}.
\item \textit{Adam, RAdam, MSVAG, AdaBound, AdaBelief:} Exploring the hyperparameter space, we tested \(\beta_{1} \) values in \(\{0.5,0.6,0.7,0.8,0.9\}\), examined \(\alpha \) as in SGD, while keeping other parameters to their literary defaults.
\item \textit{AdamW, SophiaG, Lion:} Following Adam's parameter search pattern, we fixed weight decay at \(5 \times 10^{-4}\); yet for AdamW, whose optimal decay often exceeds norms~\citep{loshchilov2017decoupled}, we ranged weight decay over \(\left\{10^{-4}, 5 \times 10^{-4}, 10^{-3}, 10^{-2}, 10^{-1}\right\}\).
\item \textit{SophiaG, Lion:} We searched for the learning rate among $\{10^{-3}, 10^{-4}, 10^{-5}\}$ and adjusted Lion's learning rate~\cite{liu2023sophia}. Following ~\cite{liu2023sophia, chen2023symbolic}, we set $\beta_{1}$=0.965, 0.9 and default parameters $\beta_2$=0.99.
\end{itemize}

\input{resources/main_table_imagenet}

\textbf{CIFAR-10/100 Experiments.} We trained on the CIFAR-10 and CIFAR-100 datasets using the VGG, ResNet and DenseNet models and access the performance of the SGDF optimizer. In these experiments, we employ basic data augmentation techniques such as random horizontal flip and random cropping. The results represent the mean and standard deviation of 3 runs by fixing random seeds \{0, 1, 2\}. 

As \Figref{fig:main_cifar_experiments} shows, it is evident that the SGDF optimizer exhibited convergence speeds comparable to adaptive optimization algorithms. Additionally, SGDF's final test set accuracy was better than others. These consistent results across multiple architectures indicate that SGDF can effectively adapt to networks of varying depths and complexities. We summarize the numerical results for the mean best test accuracies, standard deviations, and parameter details in \cref{app:subsec:cifar} (\Tabref{tab:cifarhyperparameters} and \Tabref{tab:cifar_values}).


\textbf{ImageNet Experiments.} We applied standard data augmentation techniques, including random cropping and random horizontal flipping~\cite{zhuang2020adabelief}, using random seeds \{0, 1, 2\}. To eliminate the effect of stepwise learning rate schedules, we adopted cosine learning rate decay as suggested by~\cite{chen2023symbolic,Zhang2023}. Following~\cite{chen2018closing,zhuang2020adabelief}, ResNet18 was trained for 100 epochs to compare with popular optimizers using their best-reported results~\cite{chen2018closing,liu2019variance,zhuang2020adabelief}.

We further trained additional architectures for 90 epochs following the experimental setup of GAM~\cite{Zhang2023} to benchmark against SGD, including VGG11/13 (with BN), ResNet34/50, and DenseNet121/161. Our results in \Tabref{tab:resnet18_imagenet} and \Tabref{tab:more_imagenet} show that SGDF consistently outperforms SGD. These results highlight SGDF’s robustness and scalability across networks of different capacities. Moreover, the optimizer maintained smooth convergence behavior throughout training, demonstrating strong generalization on large-scale datasets. Detailed training/test curves and hyperparameter settings are provided in \cref{app:subsec:imagenet} (\Figref{fig:imagenet_curve} and \Tabref{tab:imagenethyperparameters}).

\input{resources/main_table_detection}
\textbf{Object Detection.} We conducted object detection experiments on the PASCAL VOC dataset~\citep{pascal-voc-2010}. The model used in these experiments was pre-trained on the COCO dataset~\citep{coco-dataset}, obtained from the official website. We trained this model on the VOC2007 and VOC2012 train-val dataset (17K) and evaluated it on the VOC2007 test dataset (5K). The utilized model was Faster-RCNN~\cite{faster_rcnn} with FPN, and the backbone was ResNet50~\citep{he2016deep}. Results are summarized in ~\Tabref{tab:object_detection}. To facilitate result reproduction, we provide the parameter table for this subpart in \cref{app:subsec:detection} \Tabref{tab:hyperparameters_object_detection}. As expected, SGDF outperforms other methods in detection accuracy and stability. These results demonstrate the efficiency and robustness of our method in complex detection tasks, highlighting its consistent optimization behavior across vision architectures.
%

\textbf{Post-training in ViT.} To evaluate SGDF, we conducted ViT post-training by following the standard protocol in~\cite{dosovitskiy2020image}, which serves as a critical benchmark where SGD with momentum sets the state-of-the-art. We tested on six datasets (CIFAR-10/100, Oxford-IIIT-Pets~\cite{parkhi2012cats}, Oxford Flowers-102~\cite{nilsback2008automated}, Food101~\cite{bossard2014food}, ImageNet-1K) using ViT-B/32 and ViT-L/32 that were pre-trained on ImageNet-21K. Consistent with ViT's original setup, the MLP classification head was replaced to match dataset categories, while backbone weights remained frozen to preserve pre-trained representations. We upsized images to $384 \times 384$ with 2D-interpolated position encodings and used cosine learning rate decay, no weight decay, batch size 512, and gradient clipping (norm 1), all of which faithfully replicate ViT's training details. All runs were trained for 10 epochs with seed \{0, 1, 2\}. Results in \Tabref{tab:vitresult} and hyperparameters to \cref{app:subsec:post-training} \Tabref{tab:vithyperparameters} validate SGDF's performance against the established SGD.

\input{resources/main_table_vit}

\subsection{Extensibility of Filter-Estimated Gradients}
\label{sec:section4.4}
Beyond vanilla SGD, our optimal linear filter serves as a plug-and-play module that enhances first-moment gradient estimates across diverse optimization frameworks. To demonstrate its broad compatibility, we integrate it with Adam, Sign-based optimizers~\cite{bernstein2018signsgd}, and the Muon~\cite{jordan2024muon}.

\input{resources/main_table_adam_filter}

\textbf{Integration with Adam.} We first evaluated our filter's integration with Adam by substituting the first-moment gradient estimates in the vanilla optimizer with our filtered counterparts.  As shown in \Tabref{tab:accuracy_comparison} (with detailed test curves in \cref{app:subsec:extensibility} \Figref{fig:compare}), experiments on the CIFAR-100 dataset reveal distinct behaviors across network architectures. For networks without Batch Normalization (BN) like VGG, the filter significantly improves performance by providing more accurate gradient estimates and reducing noise-induced errors. For architectures like ResNet and DenseNet that inherently promote stable gradient updates through BN and residual/dense connections, the filter still maintains highly competitive performance, albeit with less pronounced marginal gains. This structural dynamic effectively explains the variance in improvements across architectures.

\textbf{Extension to Sign-based Optimizers.} To further evaluate the robustness of our gradient estimation in different update paradigms, we tested a sign-based variant. Specifically, we updated the parameters using the sign of our filtered gradient ($\text{sign}(\widehat{g}_t)$)~\cite{bernstein2018signsgd}. We evaluated this \textit{Sign SGDF} on diffusion models, specifically DiT/SiT-Base (130M parameters)~\cite{peebles2023scalable,ma2024sit} in ImageNet. For a fair comparison, we aligned all hyperparameters with Adam and used an unscaled $K_t$. As shown in \Figref{fig:diffusion_fid}, Sign SGDF converges significantly faster and achieves a better FID score than the Adam baseline, proving that our filter effectively captures the true gradient direction even when the magnitude is discarded.

\input{resources/main_figure_sign_diffusion} 

\textbf{Compatibility with Muon.} Finally, we explored the compatibility of our filter with Muon, a recently proposed optimizer that relies on orthogonal momentum. We conducted a preliminary experiment on DiT-B/4 by replacing Muon's standard momentum formulation with our filter-estimated gradient. As presented in \Tabref{tab:sgdf_muon}, this integration yields superior results compared to the standard Adam baseline across multiple generation metrics at 400K training iterations. This underscores the potential of our optimal linear filter to serve as a fundamental gradient refinement step for state-of-the-art training recipes.

\input{resources/main_table_muon}

\subsection{Top Eigenvalues of Hessian and Hessian Trace}
The success of optimization algorithms in deep learning depends on both minimizing training loss and the quality of the solutions they find. So we numerically verified the hessian matrix properties between the different methods. We computed the Hessian spectrum of ResNet-18 trained on the CIFAR-100 dataset for 200 epochs. These experiments ensure that all methods achieve similar results on the training set. We employed power iteration~\cite{2018Hessian} to compute the top eigenvalues of Hessian and Hutchinson’s method~\cite{2020PyHessian} to compute the Hessian trace. Histograms illustrating the distribution of the top 50 Hessian eigenvalues for each optimization method are presented in \Figref{fig:main_hessian_spectrum}. SGDF brings lower eigenvalue and trace of the hessian matrix, which explains the fact that SGDF demonstrates better performance than SGD as the categorization category increases.

\input{resources/main_figure_hessian}




%% file: resources/main_table_imagenet.tex
\begin{table*}[t]
       \vspace{-1mm}
	\centering
	\caption{Top-1, 5 accuracy (\%) of ResNet18 on ImageNet. $^{*}$ \ $^{\dagger}$ \ $^{\ddagger}$ is reported in~\cite{zhuang2020adabelief, chen2018closing, liu2019variance}.}
    \resizebox{1.0\linewidth}{!}{
	\begin{tabular}{c|cccccccccc}
		\toprule 
		Method & SGDF & SGD & AdaBelief & PAdam &AdaBound & Yogi & MSVAG & Adam & RAdam & AdamW \\
		\midrule 
		Top-1 & $\textbf{70.51}_{\pm 0.05}$ & 70.23$^{\dagger}$ & 70.08$^{*}$& 70.07$^{\dagger}$ & 68.13$^{\dagger}$ & 68.23$^{\dagger}$ & 65.99$^{*}$ & 63.79$^{\dagger}$ (66.54$^{\ddagger}$) & 67.62$^{\ddagger}$ & 67.93$^{\dagger}$ \\
		Top-5 & $\textbf{89.69}_{\pm 0.16}$ & 89.40$^{\dagger}$& -& 89.47$^{\dagger}$ &88.55$^{\dagger}$& 88.59$^{\dagger}$ & - & 85.61$^{\dagger}$& - & 88.47$^{\dagger}$ \\
		\bottomrule 
	\end{tabular}
    }
	\label{tab:resnet18_imagenet}
       \vspace{-0mm}
\end{table*}

\begin{table*}[htbp]
    \vspace{-0mm}
	\centering
	\caption{Comparison of top-1 accuracy (\%) across different model variants and optimizers on the ImageNet classification.}
    \resizebox{0.80\linewidth}{!}{
	\begin{tabular}{c|cccccc}
		\toprule
		Model & VGG11 & VGG13 & ResNet34 & ResNet50 & DenseNet121 & DenseNet161 \\
		\midrule
		SGD\tablefootnote{Results from \href{https://pytorch.org/vision/main/models.html}{PyTorch official pre-trained models}.}  & 70.37 & 71.58 & 73.31 & 76.13 & 74.43 & 77.13 \\		
		\midrule
        SGDF & $\textbf{71.34}_{\pm 0.21}$ & $\textbf{72.74}_{\pm 0.07}$ & $\textbf{74.07}_{\pm 0.21}$ & $\textbf{76.72}_{\pm 0.09}$ & $\textbf{75.75}_{\pm 0.09}$ & $\textbf{78.34}_{\pm 0.08}$\\
		\bottomrule
	\end{tabular}
    }
	\label{tab:more_imagenet}
	\vspace{-1mm}
\end{table*}


%% file: resources/main_table_detection.tex
\begin{table}[htbp]
\vspace{-0.5mm}
\centering
\caption{The mAP on PASCAL VOC. $^{*}$\,$^{\dagger}$ is reported in~\cite{zhuang2020adabelief, yuan2020eadam}.}
\begin{adjustbox}{max width=1.0\linewidth}
\begin{tabular}{c|ccccccc}
    \toprule 
    Method & SGDF & AdaBelief & EAdam & SGD & Adam & AdamW & RAdam \\
    \midrule 
    mAP & \textbf{83.81} & 81.02$^{*}$ & 80.62$^{\dagger}$ & 80.43 & 78.67 & 78.48 & 75.21 \\
    \bottomrule 
\end{tabular}
\label{tab:object_detection}
\end{adjustbox}
\vspace{-0.5mm}
\end{table}

%% file: resources/main_table_vit.tex
\begin{table*}[ht]
\vspace{-1em}
\centering
\caption{Post-training in ViT. We report Top-1 accuracy (\%).}
\resizebox{1.0\linewidth}{!}{
\begin{tabular}{@{}lccccccc@{}}
\toprule
Model & Method & CIFAR-10 & CIFAR-100 & Oxford-IIIT-Pets & Oxford Flowers-102 & Food101 & ImageNet \\
\midrule
\multirow{2}{*}{ViT-B/32}  
& SGD  & $98.71_{\pm 0.03}$ & $90.62_{\pm 0.07}$  & $89.71_{\pm 0.32}$ & $96.79_{\pm 0.29}$ & $88.56_{\pm 0.05}$ & $81.42_{\pm 0.04}$ \\
& SGDF & $\textbf{98.74}_{\pm 0.10}$ & $\textbf{91.44}_{\pm 0.13}$   & $\textbf{92.68}_{\pm 0.04}$ & $\textbf{97.17}_{\pm 0.47}$ & $\textbf{89.35}_{\pm 0.09}$ & $\textbf{81.52}_{\pm 0.02}$ \\
\midrule
\multirow{2}{*}{ViT-L/32} 
& SGD & $98.73_{\pm 0.05}$  & $91.30_{\pm 0.17}$ & $85.21_{\pm 0.39}$ & $96.52_{\pm 0.15}$ & $89.13_{\pm 0.20}$ & $81.28_{\pm 0.04}$ \\ 
& SGDF & $\textbf{98.83}_{\pm 0.04}$ & $\textbf{92.20}_{\pm 0.14}$   & $\textbf{91.96}_{\pm 0.18}$  & $\textbf{96.79}_{\pm 0.12}$ & $\textbf{90.04}_{\pm 0.08}$ & $\textbf{81.38}_{\pm 0.01}$   \\
\bottomrule
\end{tabular}
}
\label{tab:vitresult}
\vspace{-1.0em}
\end{table*}

%% file: resources/main_table_adam_filter.tex
\begin{table}[ht]
\vspace{-0.2em}
\centering
\caption{Accuracy comparison between Adam and Filter-Adam.}
\vspace{-0.5em}
\begin{adjustbox}{max width=0.8\linewidth}
\begin{tabular}{c|cccc}
    \toprule
    Model & VGG11 & ResNet34 & DenseNet121  \\
    \midrule
    Filter-Adam & \textbf{62.64} & \textbf{73.98} & \textbf{74.89} \\
    \midrule
    Vanilla-Adam & 56.73 & 72.34 & \textbf{74.89}\\
    \bottomrule
\end{tabular}
\end{adjustbox}
\label{tab:accuracy_comparison}
\vspace{-0.2em}
\end{table}

%% file: resources/main_figure_sign_diffusion.tex
\begin{figure}[h]
\vspace{-0.0em}
\centering
\includegraphics[width=0.9\linewidth]{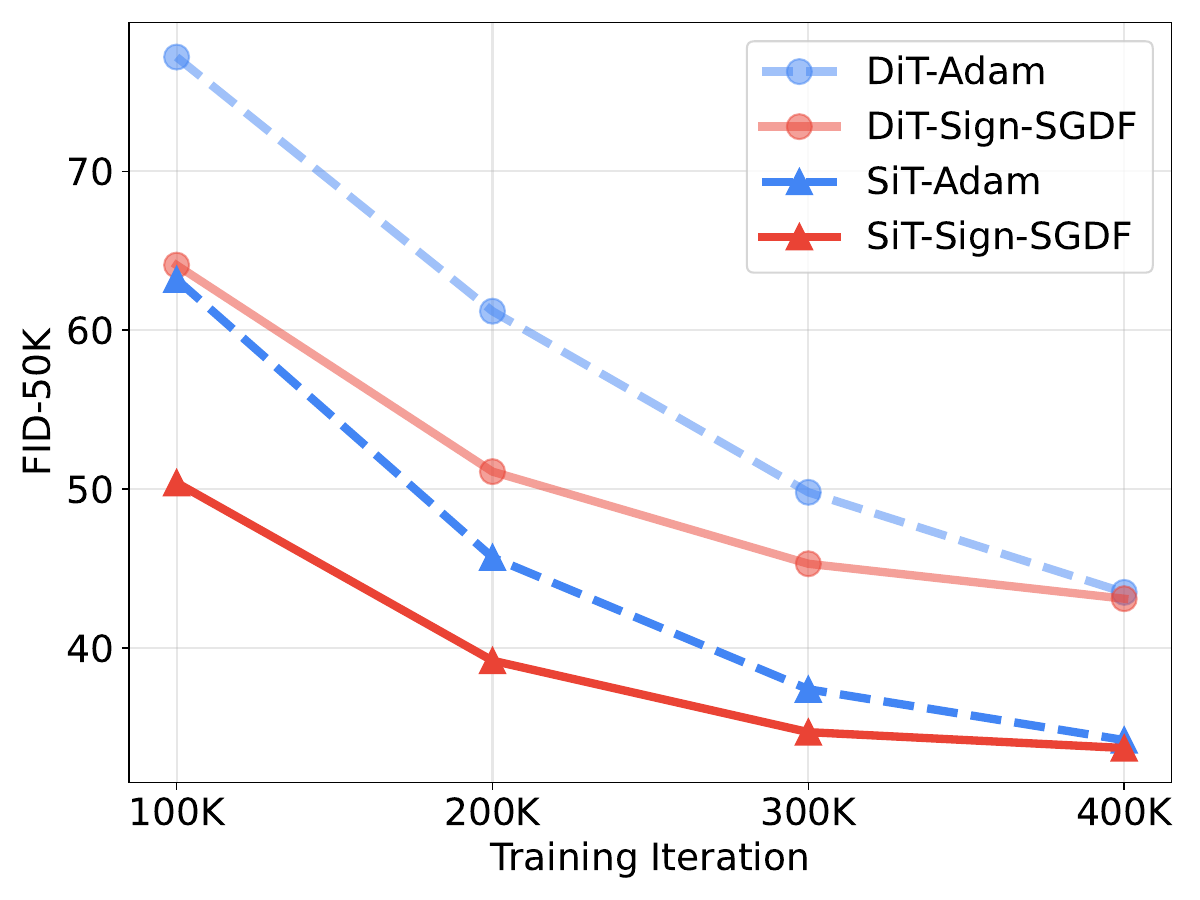}
\vspace{-1.0em}
\caption{Convergence comparison between Sign SGDF and Adam.}
\label{fig:diffusion_fid}
\vspace{-0.0em}
\end{figure}

%% file: resources/main_table_muon.tex
\begin{table}[ht]
\centering
\caption{Compatibility of filter-estimated gradient with Muon.}
\vspace{-0.5em}
\label{tab:sgdf_muon}
\begin{adjustbox}{max width=0.9\linewidth}
\begin{tabular}{lccccc}
\toprule
\textbf{Method} & FID$\downarrow$ & sFID$\downarrow$ & IS$\uparrow$ & Pre.$\uparrow$ & Rec.$\uparrow$ \\
\midrule
Adam & 68.32 & 13.63 & 20.51 & 0.36 & 0.53 \\
Muon + SGDF & \textbf{64.24} & \textbf{12.43} & \textbf{22.26} & \textbf{0.37} & \textbf{0.59}\\
\bottomrule
\end{tabular}
\end{adjustbox}
\vspace{-0.0em}
\end{table}

%% file: resources/main_figure_hessian.tex
\begin{figure}[htbp] 
\centering
\begin{subfigure}[t]{0.48\linewidth}
    \centering
    \begin{overpic}[width=\linewidth]{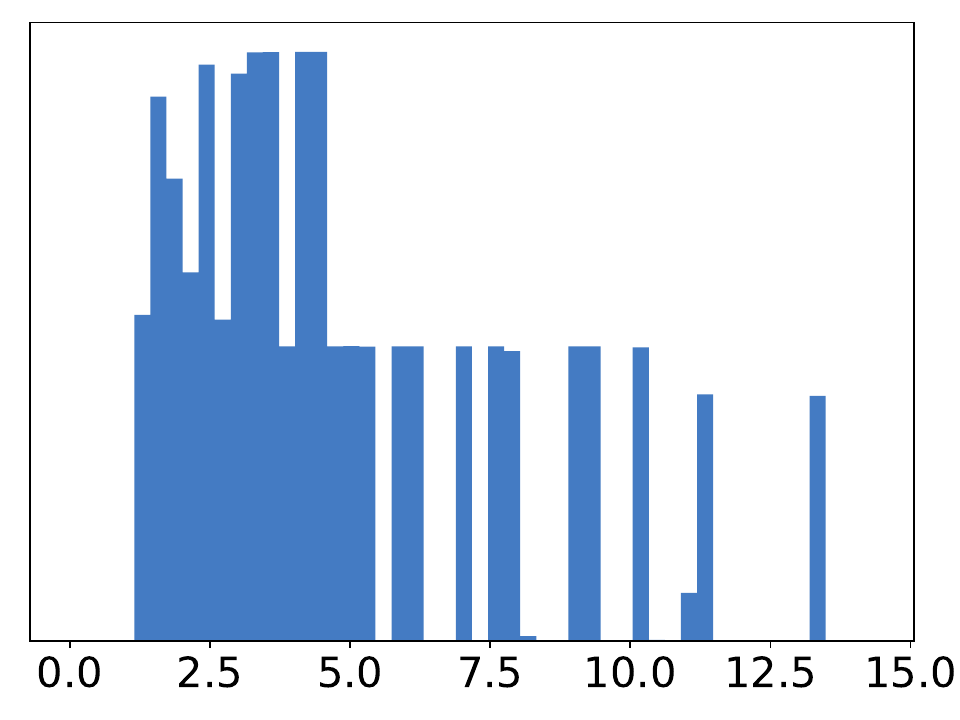}
        \put(60,65){\scriptsize Trace: 192.47 }
        \put(64,55){\scriptsize $\lambda_{\text{max}}$: 13.32}
    \end{overpic}
    \caption{SGDF}
    \label{subfig:main_hessian_sgdf}
\end{subfigure}%
\begin{subfigure}[t]{0.48\linewidth}
    \centering
    \begin{overpic}[width=\linewidth]{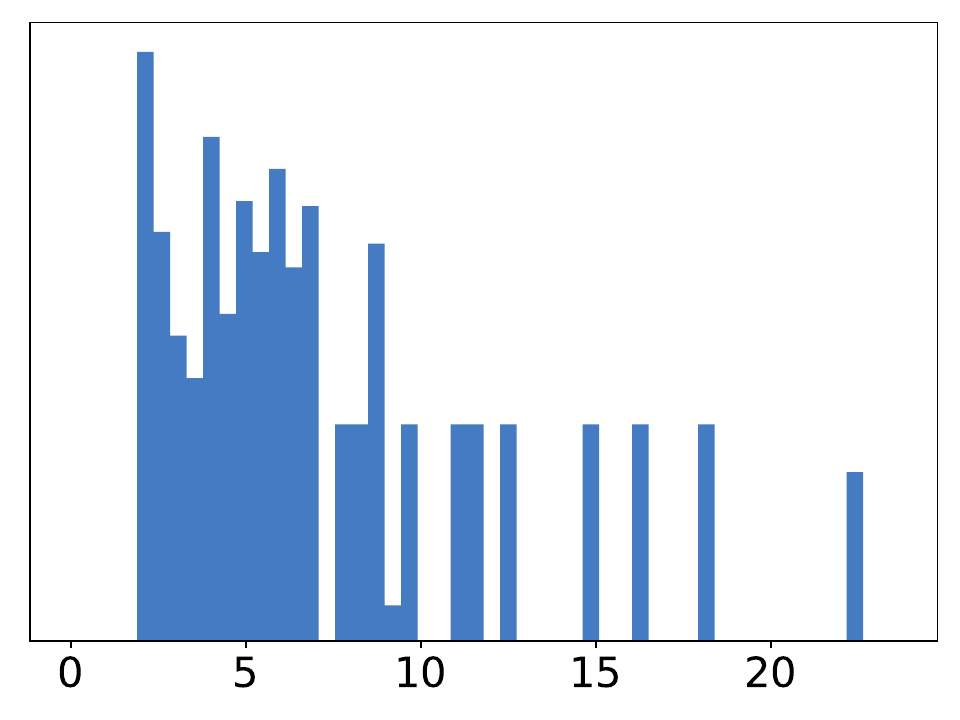}
        \put(62,65){\scriptsize Trace: 419.30}
        \put(67,55){\scriptsize $\lambda_{\text{max}}$: 22.51}
    \end{overpic}
    \caption{SGD}
    \label{subfig:main_hessian_sgdm}
\end{subfigure}%

\begin{subfigure}[t]{0.48\linewidth}
    \centering
    \begin{overpic}[width=\linewidth]{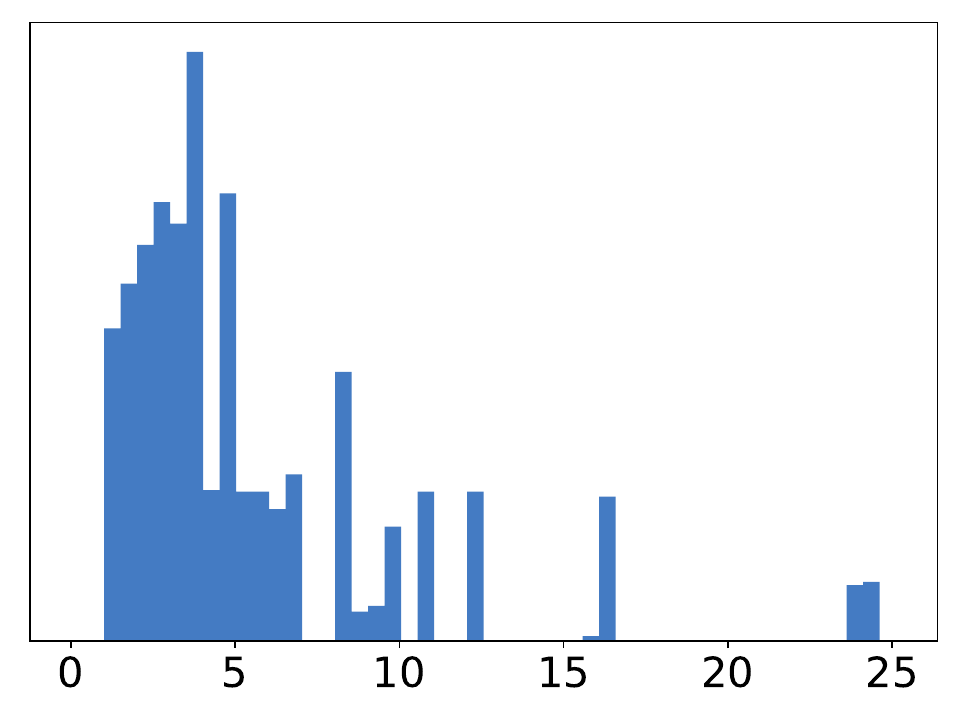}
        \put(62,65){\scriptsize Trace: 284.38}
        \put(67,55){\scriptsize $\lambda_{\text{max}}$: 24.11}
    \end{overpic}
    \caption{SGD-EMA}
    \label{subfig:main_hessian_sgd}
\end{subfigure}%
\begin{subfigure}[t]{0.48\linewidth}
    \centering
    \begin{overpic}[width=\linewidth]{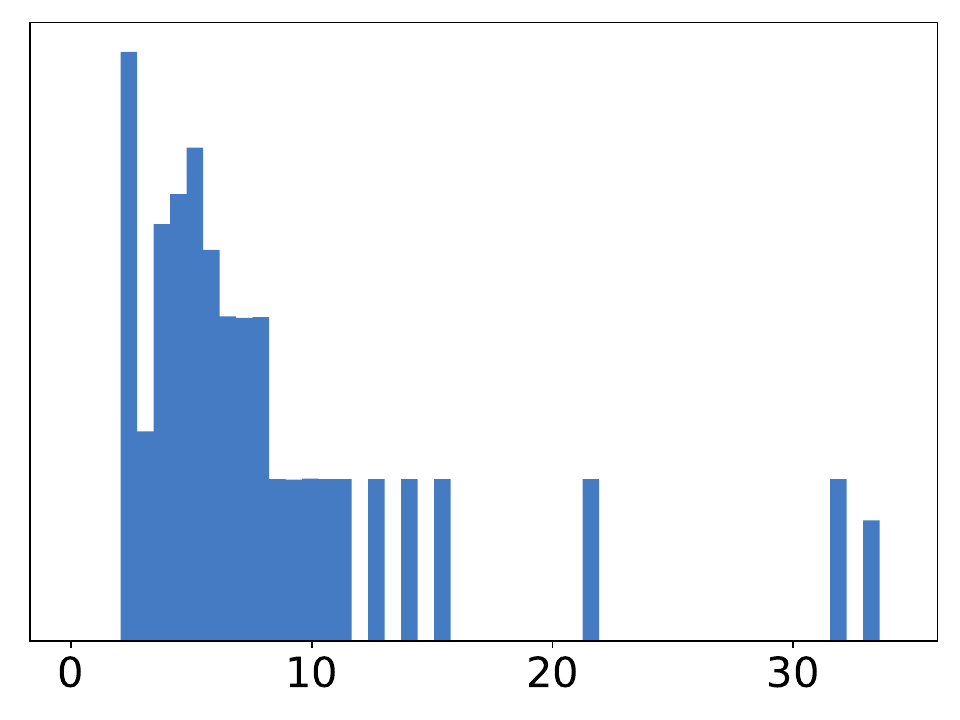}
        \put(62,65){\scriptsize Trace: 491.63}
        \put(67,55){\scriptsize $\lambda_{\text{max}}$: 32.61}
    \end{overpic}
    \caption{SGD-CM}
    \label{subfig:main_hessian_adam}
\end{subfigure}
\caption{Histogram of Top 50 Hessian Eigenvalues. Lower values indicate better performance on the test dataset.}
\label{fig:main_hessian_spectrum}
\vspace{-1em}
\end{figure}

%% file: texfiles/Related_work.tex
\section{Related Works}
Early optimization efforts focused on variance reduction~\cite{johnson2013accelerating,Defazio2014saga,Schmidt2017minimizing,balles2018dissecting} and adaptive learning rates~\cite{Duchi2011Adaptive,2012ADADELTA,dozat2016incorporating}. However, standard adaptive optimizers often converge to sharp minima with weak generalization in high-dimensional non-convex landscapes~\cite{2017The,2015Train,2022On,2014Identifying,lucchi2022theoretical}. This motivated geometry-aware regularization, such as Sharpness-Aware Minimization (SAM) and its variants~\cite{foret2021sharpness,zhuang2022surrogate,Zhang2023}, alongside momentum tuning strategies like Adaptive Inertia~\cite{xie2022adaptive}, to explicitly penalize sharpness and escape sharp basins.

Beyond loss geometry, recent studies highlight the critical role of dynamically balancing gradient bias and variance for stable learning~\cite{ghosh2022gradient,ha2024fine}. To achieve this, parallel works manipulate updates directly: Quasi-Hyperbolic Momentum (QHM)~\cite{ma2018quasi} introduces a static interpolation between current gradients and momentum, while Grokfast~\cite{lee2024grokfast} heuristically amplifies low-frequency gradient signals. Alternatively, methods based on second-order approximations or Kalman filtering~\cite{yao2020adahessian,liu2023sophia,Kalman1960,Vuckovic2018,Ollivier2019,davtyan2022koala,gomes2024adafisher} refine updates using local curvature or Fisher information. While principled, these techniques significantly inflate computational overhead and parameter complexity, limiting their practical scalability. 

In contrast, we formulate momentum fundamentally as an online filtering process to rigorously resolve the bias-variance dilemma without the heavy overhead of curvature estimation. Unlike QHM's static interpolation or Grokfast's heuristic frequency scaling, SGDF fuses past and current gradients via an optimal, time-varying gain that adaptively minimizes mean-squared estimation error. Consequently, SGDF achieves noise-aware, stable updates through a lightweight statistical design, ensuring principled gradient refinement at no additional computational cost.

%% file: texfiles/Limited.tex
\section{Discussion and Future Work}
\label{dicuss}
Our SDE framework readily accommodates existing continuous-time analyses~\cite{stephan2017stochastic,li2019stochastic,xie2022adaptive}. A compelling theoretical direction is coupling first-order gradient statistics with zeroth-order loss moments via Itô's lemma, establishing unified bounds that link local stability to global generalization in non-convex settings. Practically, while SGDF currently requires a memory footprint comparable to standard Adam, integrating block-wise or reduced-state approximations, which are inspired by recent memory-efficient designs like Adam-mini~\cite{zhang2024adam}, presents a highly promising avenue to minimize overhead. Together, these theoretical and empirical extensions will rigorously refine adaptive optimization for large-scale learning systems.

%% file: texfiles/Conclusion.tex
\section{Conclusion}
In this work, we introduced SGDF, a novel optimizer rooted in statistical signal processing. By dynamically minimizing the mean-squared estimation error, SGDF balances gradient bias and variance, overcoming the suboptimal tradeoffs of traditional momentum to yield highly accurate first-moment estimates. Extensive experiments across diverse architectures confirm that this principled approach achieves a superior tradeoff between convergence speed and generalization compared to current state-of-the-art optimizers.

%% file: texfiles/Acknowledgement.tex
\section*{Acknowledgement}
The authors gratefully acknowledge the financial support provided by the Scientific Research Project of Liaoning Provincial Department of Education under Grant LJ212510149013. Rui Yu received no funding in support of this work. Zhipeng Yao thanks to \href{https://araachie.github.io/}{Aram Davtyan} and \href{https://cvg.unibe.ch/people/favaro}{Prof. Dr. Paolo Favaro} in the Computer Vision Group at the University of Bern for discussing and improving the paper.

%% file: texfiles/Appendix.tex
\begin{center}
\Large
\textbf{Dynamic Momentum Recalibration in Online Gradient Learning}\\
\textbf{Appendix}
\end{center}

\section{Bias-Variance Decomposition (Section 2 in main paper)}
\label{sec:appendixa}
\begin{definition}
\label{def:unify_momentum}
The unified momentum update rule is defined as:
\begin{equation}
    m_t = \beta m_{t-1} + u g_t, \quad 
    \theta_t = \theta_{t-1} - \alpha m_t,
\end{equation}
where $\beta \in [0, 1)$ denotes the momentum (decay) coefficient, and $u \ge 1 - \beta$ is a scaling parameter controlling the contribution of the current gradient. The stochastic gradient is given by $g_t = \nabla f_t(\theta_t) + \epsilon_t, \epsilon_t \sim \mathcal{N}(0, \sigma^2 I)$, where $f_t(\theta) = f(\theta; \xi_t)$ denotes the stochastic objective at iteration $t$ with $\xi_t$ sampled from the data distribution $\mathcal{D}$. Specific cases include: 
\begin{itemize}
    \item $u = 1 - \beta$: Exponential Moving Average (EMA),
    \item $u = 1$: Classical Momentum (CM)~\cite{polyak1964some,sutskever2013importance}.
\end{itemize}
\end{definition}

\begin{assumption}
\label{bias-variance}
We make the following assumptions on the smoothness and stochasticity of the objective function $f$:
\begin{enumerate}
    \item \textbf{Lipschitz continuity:} There exists a constant $L > 0$ such that, for any $\theta$ and $\phi$, $\|\nabla f(\theta) - \nabla f(\phi)\| \le L \|\theta - \phi\|.$
    
    \item \textbf{Bounded gradients:} There exists a constant $G > 0$ such that, for all $t$, $\|\nabla f(\theta_t)\| \le G.$
    
    \item \textbf{Bounded gradient noise:} The stochastic gradient noise $\epsilon_t$ is temporally uncorrelated with zero mean, i.e., $\mathbb{E}[\epsilon_t] = \mathbf{0}$, for all $t$. Furthermore, the variance of the stochastic gradients is uniformly bounded, meaning there exists a constant $\sigma > 0$ such that $\mathbb{E}\!\left[\|g_t - \nabla f(\theta_t)\|^2\right] \le \sigma^2.$
\end{enumerate}
\end{assumption}

\begin{lemma}[Bias-Variance Decomposition]
\label{lem:bias_variance}
Let the stochastic gradient be given by $g_t = \nabla f(\theta_t) + \epsilon_t,  \epsilon_t \sim \mathcal{N}(0, \sigma^2 I)$, where $\nabla f(\theta_t)$ denotes the true gradient and $\epsilon_t$ represents zero-mean Gaussian noise. For any gradient estimator $\hat{g}_t = \mathcal{A}(g_1, \ldots, g_t)$ produced by an arbitrary algorithm $\mathcal{A}$, the mean squared error (MSE) satisfies the bias-variance decomposition:
\begin{equation}
    \mathbb{E}\big[\|\hat{g}_t - \nabla f(\theta_t)\|^2\big]
    = 
    \underbrace{\|\mathbb{E}[\hat{g}_t] - \nabla f(\theta_t)\|^2}_{\mathrm{Bias}^2}
    +
    \underbrace{\mathbb{E}\big[\|\hat{g}_t - \mathbb{E}[\hat{g}_t]\|^2\big]}_{\mathrm{Variance}}.
\end{equation}
\end{lemma}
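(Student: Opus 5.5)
The plan is the standard add-and-subtract argument around the mean $\mu_t := \mathbb{E}[\hat{g}_t]$.

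We treat the target $\nabla f(\theta_t)$ as a fixed, deterministic vector with respect to the expectation. Equivalently, we condition on $\theta_t$, so that the only randomness in $\hat{g}_t$ comes from the noise sequence $\epsilon_1,\dots,\epsilon_t$. Under this convention $\nabla f(\theta_t)$ and $\mu_t$ are both constants.

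\emph{Step 1.} Write
\begin{equation*}
\hat{g}_t - \nabla f(\theta_t) = \bigl(\hat{g}_t - \mu_t\bigr) + \bigl(\mu_t - \nabla f(\theta_t)\bigr).
\end{equation*}

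\emph{Step 2.} Expand the squared Euclidean norm of this sum. This gives three terms:
\begin{equation*}
\|\hat{g}_t - \mu_t\|^2, \qquad \|\mu_t - \nabla f(\theta_t)\|^2, \qquad 2\,\bigl\langle \hat{g}_t - \mu_t,\; \mu_t - \nabla f(\theta_t) \bigr\rangle .
\end{equation*}

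\emph{Step 3.} Take expectations term by term.
\begin{itemize}
\item The first term becomes the variance $\mathbb{E}\|\hat{g}_t - \mathbb{E}[\hat{g}_t]\|^2$.
\item The second term is deterministic, so it passes through the expectation unchanged as $\mathrm{Bias}^2$.
\item In the cross term the second factor is constant. By linearity of expectation and of the inner product, it equals $2\langle \mathbb{E}[\hat{g}_t] - \mu_t,\; \mu_t - \nabla f(\theta_t)\rangle = 0$.
\end{itemize}
Adding the three pieces yields the identity. Nothing about $\mathcal{A}$, Gaussianity, or Assumption~\ref{bias-variance} is needed beyond finiteness of $\mathbb{E}\|\hat{g}_t\|^2$, which we assume so that all terms are well defined.

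The only real obstacle is interpretive. If $\theta_t$ is itself random, because it depends on the past noise, then $\nabla f(\theta_t)$ is not deterministic and the cross term need not vanish. We handle this explicitly: all expectations are taken conditionally on $\theta_t$, written $\mathbb{E}[\,\cdot \mid \theta_t]$. Alternatively, one can read the statement as the decomposition around a fixed target vector. This matches how the lemma is used in \cref{main:thm:bias_variance_momentum}, where the bias is measured against $\mathbb{E}[\nabla f(\theta(t))]$.

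The scalar, element-wise form in \cref{lem:main_bias_variance} of the main text follows identically, either coordinate by coordinate or by replacing the norm with absolute value.
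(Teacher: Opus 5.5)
Your proposal is correct and follows the paper's proof: add and subtract $\mathbb{E}[\hat{g}_t]$, expand the square, and kill the cross term using $\mathbb{E}[\hat{g}_t - \mathbb{E}[\hat{g}_t]] = 0$ with $\mathbb{E}[\hat{g}_t] - \nabla f(\theta_t)$ treated as a constant. Your explicit remark that $\nabla f(\theta_t)$ must be non-random for this (e.g.\ by conditioning on $\theta_t$) spells out an assumption the paper's proof uses only implicitly.
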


\begin{proof}
\begin{align}
    \mathbb{E}\!\left[\|\hat{g}_t - \nabla f(\theta_t)\|^2\right]
    &= 
    \mathbb{E}\!\left[\|\hat{g}_t - \mathbb{E}[\hat{g}_t] + \mathbb{E}[\hat{g}_t] - \nabla f(\theta_t)\|^2\right] \nonumber \\
    &= 
    \mathbb{E}\!\left[\|\hat{g}_t - \mathbb{E}[\hat{g}_t]\|^2\right]
    + \|\mathbb{E}[\hat{g}_t] - \nabla f(\theta_t)\|^2 \nonumber  + 2\,\mathbb{E}\!\left[\langle \hat{g}_t - \mathbb{E}[\hat{g}_t],\, \mathbb{E}[\hat{g}_t] - \nabla f(\theta_t) \rangle \right] \nonumber \\
    &= 
    \mathrm{Var}(\hat{g}_t)
    + \mathrm{Bias}^2(\hat{g}_t)
    + 2 \underbrace{\langle \mathbb{E}[\hat{g}_t - \mathbb{E}[\hat{g}_t]],\, \mathbb{E}[\hat{g}_t] - \nabla f(\theta_t) \rangle}_{= 0} \nonumber \\
    &= 
    \mathrm{Var}(\hat{g}_t)
    + \mathrm{Bias}^2(\hat{g}_t).
\end{align}
\end{proof}


\begin{lemma}
\label{lem:unify_momentum_sde}
Refer to the SDEs of vanilla SGD~\cite{stephan2017stochastic}, the \cref{def:unify_momentum} with learning rate $\alpha$ can be represented in continuous time as the stochastic differential equation (SDE):
\begin{equation}
    \begin{cases}
        d m(t) = [-(1 - \beta) m(t) + u \nabla f(\theta(t))]\, dt + u \sigma\, dW(t), \\
        d \theta(t) = -\alpha\, m(t)\, dt,
    \end{cases}
\end{equation}
where $m(t)$ is the momentum, $\theta(t)$ is the parameter, $\beta \in [0, 1)$ is the momentum coefficient, $u \ge 1-\beta$ is the gradient scaling factor, $\alpha > 0$ is the learning rate, $\nabla f(\theta(t))$ is the gradient of the objective function, $\sigma$ is the noise standard deviation, and $W(t)$ is a standard $d$-dimensional Wiener process. This approximation holds when the learning rate $\alpha$ is sufficiently small.
\end{lemma}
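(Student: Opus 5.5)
The plan is to obtain the SDE in \cref{lem:unify_momentum_sde} as the weak (diffusion-approximation) limit of the recursion in \cref{def:unify_momentum}, in the spirit of the modified-equation argument of \cite{stephan2017stochastic}. First, rewrite the discrete update as an increment equation:
\begin{equation}
m_t - m_{t-1} = -(1-\beta) m_{t-1} + u\,\nabla f(\theta_{t}) + u\,\epsilon_t, \qquad \theta_t - \theta_{t-1} = -\alpha\, m_t .
\end{equation}
Here we use $g_t = \nabla f_t(\theta_t) + \epsilon_t$, and the sampling fluctuation $\nabla f_t - \nabla f$ is absorbed into the zero-mean noise term. That term is uncorrelated in time and has variance bounded by $\sigma^2$ by \cref{bias-variance}, and we model it as isotropic Gaussian with covariance $\sigma^2 I$ as in \cref{def:unify_momentum}.

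Next, identify each iteration with a unit time increment $dt$ (time measured in iterations) and interpret the terms one at a time. The conditional mean of the momentum increment is $[-(1-\beta)m + u\nabla f(\theta)]\,dt$, which gives the drift. The conditional covariance of the momentum increment is $u^2\sigma^2 I\,dt$. A Gaussian increment with this covariance has the same law as $u\sigma\,(W(t+dt)-W(t))$, which gives the diffusion $u\sigma\,dW(t)$. For the parameter, the increment is $-\alpha m\,dt$ and carries no direct noise, since the noise enters $\theta$ only through $m$.

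To justify the approximation, we compare one-step moments. We show that the first and second conditional moments of the discrete increments match those of the Euler--Maruyama discretization of the SDE up to an error of order $O(\alpha)$ per step. We also show that all higher moments are $o(dt)$, which holds because the noise is Gaussian. A standard weak-approximation theorem (Milstein-type, as used by Li et al.\ for SGD SDEs) then gives weak order-one closeness on bounded time horizons. This argument uses the Lipschitz and bounded-gradient conditions of \cref{bias-variance}.

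The main obstacle is the mismatch between the evaluation points. The discrete update uses $\nabla f(\theta_t)$ and $m_t$ rather than $\theta_{t-1}$ and $m_{t-1}$, and the time scaling is not a genuine small-step limit in $m$. I would handle the first issue with the Lipschitz bound:
\begin{equation}
\|\nabla f(\theta_t)-\nabla f(\theta_{t-1})\| \le L\alpha\|m_t\|.
\end{equation}
Since $\|m_t\| = O\!\left(u(G+\sigma)/(1-\beta)\right)$ in mean square, this error vanishes as $\alpha\to 0$. This is exactly the condition under which the statement asserts the approximation holds. I would treat the coupling $d\theta = -\alpha m\,dt$ as the slow variable in this scaling, and present the result as a heuristic-rigorous continuous-time model rather than a sharp convergence theorem.
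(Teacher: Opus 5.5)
The heuristic core of your plan (one iteration equals one unit of time, drift read off from the conditional mean, $u\epsilon_n$ having exactly the law of $u\sigma\,(W(n+1)-W(n))$, and $d\theta=-\alpha m\,dt$) is exactly the paper's proof, and the paper stops there: it is a formal step-time identification, not a limit theorem. The step that fails is your upgrade to weak closeness via a Milstein/Li--Tai--E theorem as $\alpha\to0$. Such theorems compare recursions with a step $h\to0$, whose per-step drift and covariance vanish with $h$, against an SDE. Here $dt=1$ is fixed and the $m$-recursion does not involve $\alpha$ at all. Its per-step drift $-(1-\beta)m+u\nabla f$ and covariance $u^2\sigma^2 I$ therefore stay of order one, and ``higher moments are $o(dt)$'' has no content. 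Up to the $O(\alpha)$ evaluation-point shifts that you correctly control via $L\alpha\|m_t\|$, the recursion \emph{is} Euler--Maruyama with unit step. Matching its moments is therefore tautological, and unit-step Euler--Maruyama is not close to the SDE. Concretely, take $f\equiv 0$, which satisfies every assumption. Then $m_t=\beta m_{t-1}+u\epsilon_t$ has per-coordinate variance $u^2\sigma^2(1-\beta^{2t})/(1-\beta^2)$, while the SDE gives $u^2\sigma^2(1-e^{-2(1-\beta)t})/(2(1-\beta))$. Already at $t=1$ these are $u^2\sigma^2$ versus $u^2\sigma^2(1-e^{-2(1-\beta)})/(2(1-\beta))$. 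The stationary values differ by the factor $2/(1+\beta)$ (a factor $2$ at $\beta=0$), and this holds for every $\alpha$. So no argument with $\alpha$ as the only small parameter can give weak approximation of the $m$-component. The Li et al.\ framework you cite lives in the slow-time scaling that the paper's remark explicitly sets aside.

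If you want a genuine limit statement, the small parameter has to be $h:=1-\beta$. Set $\tilde m_n=(1-\beta)m_n/u$ and $\tau=(1-\beta)n$; the recursion becomes $\tilde m_{n+1}=\tilde m_n+h\,(-\tilde m_n+\nabla f(\theta_n))+h\,\epsilon_n$. This is a stochastic-modified-equation scheme with learning rate $h$, whose SME is $d\tilde m=(-\tilde m+\nabla f)\,d\tau+\sqrt{h}\,\sigma\,dW(\tau)$. Rewritten in iteration time, it is exactly the stated $m$-equation. The $\theta$-equation fits the same framework if $\alpha u=O((1-\beta)^2)$, e.g.\ $\alpha=O(1-\beta)$ for EMA, consistent with the paper's remark. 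Under the usual smoothness hypotheses this yields weak error $O(1-\beta)$ over $O(1/(1-\beta))$ iterations. The honest hypothesis is therefore ``$\beta$ close to $1$, with $\alpha$ tied to $1-\beta$'', not ``$\alpha$ small''. Otherwise, drop the weak-approximation claim and present the lemma, as the paper does, as a formal continuous-time model.
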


\begin{proof}
Start with the discrete momentum update rule from \cref{def:unify_momentum}:
\begin{equation}
	m_{n+1} = \beta m_n + u g(t_n), \quad \theta_{n+1} = \theta_n - \alpha m_{n+1},
\end{equation}
where $m_n = m(t_n)$ is the momentum, $\theta_n = \theta(t_n)$ is the parameter, $g(t_n) = \nabla f(\theta_n) + \epsilon_n$ with $\epsilon_n \sim \mathcal{N}(0, \sigma^2 I)$, and $\alpha > 0$ is the learning rate.

Rewrite the momentum update in terms of the increment:
\begin{equation}
	\begin{aligned}
		m_{n+1} - m_n &= \beta m_n + u g(t_n) - m_n \\
		&= -(1 - \beta) m_n + u g(t_n) \\
		&= -(1 - \beta) m_n + u \nabla f(\theta_n) + u \epsilon_n.
	\end{aligned}
\end{equation}

For the parameter:
\begin{equation}
	\theta_{n+1} - \theta_n = -\alpha m_{n+1}.
\end{equation}

To model this as a continuous-time SDE, assume the learning rate $\alpha$ is sufficiently small, controlling the step size of the discrete updates. Define $t_n = n$ to index discrete iterations, each corresponding to a unit time step $dt = 1$. The learning rate $\alpha$ controls the update magnitude but does not rescale time.

For the momentum, interpret the increment as the rate of change over one iteration:
\begin{equation}
	m_{n+1} - m_n \approx [-(1 - \beta) m_n + u \nabla f(\theta_n)]\, dt + u \epsilon_n,
	\quad dt = 1.
\end{equation}

For small constant step sizes $\alpha$, this yields the drift:
\begin{equation}
	d m(t) = [-(1 - \beta) m(t) + u \nabla f(\theta(t))]\, dt.
\end{equation}

For the stochastic part, assume $\epsilon_n = \sigma Z_n,\; Z_n \sim \mathcal{N}(0, I)$, so that the stochastic increment $u \epsilon_n$ has variance $u^2 \sigma^2$ per iteration, matching the Brownian term $u \sigma\, dW(t)$ under the step-time scaling $dt = 1$.

Therefore, in the SDE limit, the momentum dynamics can be written as:
\begin{equation}
    d m(t) = [-(1 - \beta) m(t) + u \nabla f(\theta(t))]\, dt + u \sigma\, dW(t).
\end{equation}

For the parameter update:
\begin{equation}
	\theta_{n+1} - \theta_n = -\alpha m_{n+1} \approx -\alpha m(t) \cdot (\text{time step}),
\end{equation}

where the time step is implicitly $dt$ in the continuous limit, yielding:
\begin{equation}
	d \theta(t) = -\alpha m(t)\, dt.
\end{equation}

Combining both, when $\alpha$ is small, the discrete updates approximate:
\begin{equation}
	\begin{cases}
		d m(t) = [-(1 - \beta) m(t) + u \nabla f(\theta(t))]\, dt + u \sigma\, dW(t), \\
		d \theta(t) = -\alpha m(t)\, dt.
	\end{cases}
\end{equation}

This SDE captures the dynamics of the momentum and parameter updates, with $\alpha$ as the learning rate driving the continuous approximation.
\end{proof}

\begin{remark}[Step-time scaling]
Our continuous-time formulation adopts the \emph{step-time} scaling of Mandt et al.~\cite{stephan2017stochastic}.
An alternative is the \emph{slow-time} scaling $t = n \alpha$, often used in stochastic modified equations~\cite{li2019stochastic}.
In that regime, one typically sets $1 - \beta = \Theta(\alpha)$, and the diffusion term scales with $\sqrt{\alpha}$.
We do not adopt this scaling here, since doing so would modify both the drift and diffusion coefficients, as well as the form of $d\theta$.
\end{remark}

\begin{lemma}
\label{lem:unify_momentum_solution}
Under \cref{lem:unify_momentum_sde}, the solution to the stochastic differential equation (SDE), with initial conditions $m(0) = 0$ and $\theta(0) = \theta_0$, is given by:
\begin{equation}
	\begin{cases}
		m(t) = u \displaystyle\int_0^t e^{-(1 - \beta) (t - s)} \nabla f(\theta(s))\, ds 
		+ u \sigma \displaystyle\int_0^t e^{-(1 - \beta) (t - s)}\, dW(s), \\[6pt]
		\theta(t) = \theta_0 - \alpha \displaystyle\int_0^t m(s)\, ds,
	\end{cases}
\end{equation}
where $W(t)$ is a standard Wiener process, and the integrals represent the stochastic evolution driven by the gradient $\nabla f(\theta(t))$ and noise.
\end{lemma}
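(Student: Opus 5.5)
The plan is to solve the linear SDE of \cref{lem:unify_momentum_sde} by variation of constants with an integrating factor. The system is linear in $m$ with time-dependent forcing $u\nabla f(\theta(t))$. The forcing depends on $\theta$, which in turn depends on $m$, but for the purpose of writing the solution we treat $s\mapsto \nabla f(\theta(s))$ as a given adapted process, so the formula is an implicit representation rather than a closed form.

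\textbf{Step 1: integrating factor.} Set $Y(t) = e^{(1-\beta)t} m(t)$. Since $e^{(1-\beta)t}$ is deterministic and $C^1$, Itô's product rule has no quadratic-covariation correction, and
\begin{equation*}
dY(t) = e^{(1-\beta)t}\left[(1-\beta)m(t)\,dt + dm(t)\right] = e^{(1-\beta)t}\left[u\nabla f(\theta(t))\,dt + u\sigma\,dW(t)\right].
\end{equation*}
The $-(1-\beta)m\,dt$ drift cancels exactly against the derivative of the integrating factor.

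\textbf{Step 2: integrate and apply the initial condition.} Integrate from $0$ to $t$ and use $Y(0)=m(0)=0$. Multiplying by $e^{-(1-\beta)t}$ gives
\begin{equation*}
m(t) = u\int_0^t e^{-(1-\beta)(t-s)}\nabla f(\theta(s))\,ds + u\sigma\int_0^t e^{-(1-\beta)(t-s)}\,dW(s).
\end{equation*}
This is the first line of the claim.

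\textbf{Step 3: the parameter equation.} The equation $d\theta = -\alpha m\,dt$ has no diffusion term, so $\theta$ is absolutely continuous. Integrating directly with $\theta(0)=\theta_0$ gives $\theta(t) = \theta_0 - \alpha\int_0^t m(s)\,ds$.

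\textbf{Main obstacle: well-posedness of the coupled system.} The explicit computation is routine. The only delicate point is making sure the representation is meaningful, i.e.\ that a unique strong solution $(m,\theta)$ exists. I would argue this as follows. The drift $(m,\theta)\mapsto(-(1-\beta)m + u\nabla f(\theta),\, -\alpha m)$ is globally Lipschitz by the Lipschitz continuity in \cref{bias-variance}, and the diffusion coefficient is constant. The standard existence–uniqueness theorem for SDEs with Lipschitz coefficients then yields a unique strong, adapted solution. That solution makes $\nabla f(\theta(s))$ adapted and continuous, so the deterministic integral in Step 2 is well defined. The stochastic integral has a deterministic integrand, so it is a Gaussian Wiener integral and needs no further care.

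Finally, to confirm that the pair solves the SDE rather than merely being a consequence of it, I would differentiate the derived formula with the Leibniz rule (again via Itô's product rule). This recovers $dm = [-(1-\beta)m + u\nabla f(\theta)]\,dt + u\sigma\,dW$, and uniqueness then identifies the formula with the solution.
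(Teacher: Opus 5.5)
Your proposal is correct and follows essentially the same route as the paper: you integrate the drift-only $\theta$ equation directly, and you solve the linear $m$ equation with the integrating factor $e^{(1-\beta)t}$ and the initial condition $m(0)=0$. Your extra remarks are sound refinements that the paper leaves implicit: the Itô product rule needs no correction because the integrating factor is deterministic, a unique strong solution exists because the drift is Lipschitz and the diffusion is constant, and differentiating the formula recovers the SDE.
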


\begin{proof}
We solve the coupled stochastic differential equation (SDE) system step-by-step:
\begin{equation}
	\begin{cases}
		d m(t) = [-(1 - \beta) m(t) + u \nabla f(\theta(t))]\, dt + u \sigma\, dW(t), \\
		d \theta(t) = -\alpha\, m(t)\, dt,
	\end{cases}
\end{equation}
with initial conditions $m(0) = 0$ and $\theta(0) = \theta_0$.

The $\theta(t)$ dynamics have drift only (no explicit diffusion term), but $\theta(t)$ is still random because $m(t)$ is random.
Integrate:
\begin{equation}
	\begin{aligned}
		d \theta(t) &= -\alpha\, m(t)\, dt, \\
		\theta(t) - \theta(0) &= -\alpha \int_0^t m(s)\, ds.
	\end{aligned}
\end{equation}

Since $\theta(0) = \theta_0$, we obtain:
\begin{equation}
	\theta(t) = \theta_0 - \alpha \int_0^t m(s)\, ds.
\end{equation}

This expresses $\theta(t)$ as a functional of $m(t)$, which we now determine.

Consider the linear SDE for $m(t)$ with a time-dependent forcing term:
\begin{equation}
	d m(t) = [-(1 - \beta) m(t) + u \nabla f(\theta(t))]\, dt + u \sigma\, dW(t).
\end{equation}

Rewrite it in standard form:
\begin{equation}
	d m(t) + (1 - \beta) m(t)\, dt = u \nabla f(\theta(t))\, dt + u \sigma\, dW(t).
\end{equation}

To solve this, apply the integrating factor $e^{\int_0^t (1 - \beta)\, ds} = e^{(1 - \beta) t}$. Multiply through by $e^{(1 - \beta) t}$:
\begin{equation}
	\begin{aligned}
		e^{(1 - \beta) t}\, d m(t) + (1 - \beta) e^{(1 - \beta) t} m(t)\, dt 
		&= u e^{(1 - \beta) t} \nabla f(\theta(t))\, dt + u \sigma e^{(1 - \beta) t}\, dW(t).
	\end{aligned}
\end{equation}

Recognize the left-hand side as the differential of a product:
\begin{equation}
	\begin{aligned}
		d [e^{(1 - \beta) t} m(t)] 
		&= e^{(1 - \beta) t}\, d m(t) + (1 - \beta) e^{(1 - \beta) t} m(t)\, dt \\
		&= u e^{(1 - \beta) t} \nabla f(\theta(t))\, dt + u \sigma e^{(1 - \beta) t}\, dW(t).
	\end{aligned}
\end{equation}

Integrate both sides from $0$ to $t$, with $m(0) = 0$, this simplifies to:
\begin{equation}
	\begin{aligned}
		e^{(1 - \beta) t} m(t) - e^{(1 - \beta)\cdot 0} m(0) 
		&= u \int_0^t e^{(1 - \beta) s} \nabla f(\theta(s))\, ds 
		+ u \sigma \int_0^t e^{(1 - \beta) s}\, dW(s), \\
		e^{(1 - \beta) t} m(t) 
		&= u \int_0^t e^{(1 - \beta) s} \nabla f(\theta(s))\, ds 
		+ u \sigma \int_0^t e^{(1 - \beta) s}\, dW(s), \\
		m(t) 
		&= u \int_0^t e^{-(1 - \beta)(t - s)} \nabla f(\theta(s))\, ds 
		+ u \sigma \int_0^t e^{-(1 - \beta)(t - s)}\, dW(s),
	\end{aligned}
\end{equation}
where the exponent is adjusted using $e^{(1 - \beta) s}/e^{(1 - \beta) t} = e^{-(1 - \beta)(t - s)}$.

The expression for $m(t)$ depends on $\theta(s)$ via $\nabla f(\theta(s))$, where:
\begin{equation}
	\theta(s) = \theta_0 - \alpha \int_0^s m(\tau)\, d\tau.
\end{equation}

Thus, the complete solution is:
\begin{equation}
	\begin{cases}
		m(t) = u \displaystyle\int_0^t e^{-(1 - \beta)(t - s)} \nabla f(\theta(s))\, ds 
		+ u \sigma \displaystyle\int_0^t e^{-(1 - \beta)(t - s)}\, dW(s), \\[6pt]
		\theta(t) = \theta_0 - \alpha \displaystyle\int_0^t m(s)\, ds.
	\end{cases}
\end{equation}

This integral form encapsulates the coupled dynamics, with $\nabla f(\theta(t))$ linking the equations and the stochastic term $\int e^{-(1 - \beta)(t - s)}\, dW(s)$ as an Itô integral.
\end{proof}

\begin{theorem}
\label{thm:bias_variance_momentum}
Consider the unified momentum estimator $m(t)$ defined by the stochastic differential equation (SDE) from \cref{lem:unify_momentum_sde}, with solution given in \cref{lem:unify_momentum_solution}. Let the bias be defined relative to the expected true gradient: $\mathrm{Bias}(m(t)) = \mathbb{E}[m(t)] - \mathbb{E}[\nabla f(\theta(t))]$. Assuming that the gradient $\nabla f(\theta(t))$ is bounded and Lipschitz continuous, the asymptotic bounds (as $t \to \infty$) for the bias and variance of $m(t)$ as an estimator are given by:
\begin{equation}
     \left\| \mathrm{Bias}(m(t)) \right\|^2 \leq \left( \frac{u^2 \alpha L G}{(1 - \beta)^3} + \frac{u^2 \alpha L \sigma}{\sqrt{2}(1 - \beta)^{2.5}} + \left( \frac{u}{1 - \beta} - 1 \right) G \right)^2,
\end{equation}
where $L$ is the Lipschitz constant, $G$ bounds the gradient norm $\|\nabla f(\theta(t))\|$, and the second term inside the parenthesis explicitly captures the parameter-shift bias induced by the stochastic noise $\sigma$.

\begin{equation}
    \mathrm{Var}(m(t)) \leq \frac{u^2 \sigma^2}{1 - \beta} + \frac{2 u^2 V^2}{(1 - \beta)^2},
\end{equation}
where $\sigma^2$ is the total variance of the stochastic gradient noise, and $V^2$ conservatively bounds the variance of the true gradient sequence, i.e., $\mathrm{Var}(\nabla f(\theta(t))) \leq V^2$.
\end{theorem}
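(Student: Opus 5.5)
I will work from the explicit solution in \cref{lem:unify_momentum_solution}. Write $\lambda = 1-\beta$. Taking expectations and using that the Itô integral has zero mean gives
\[
\mathbb{E}[m(t)] = u\int_0^t e^{-\lambda(t-s)}\,\mathbb{E}[\nabla f(\theta(s))]\,ds .
\]
The bias then splits as
\begin{align*}
\mathrm{Bias}(m(t)) ={}& u\int_0^t e^{-\lambda(t-s)}\,\mathbb{E}\bigl[\nabla f(\theta(s))-\nabla f(\theta(t))\bigr]\,ds \\
&+ \Bigl(u\,\tfrac{1-e^{-\lambda t}}{\lambda}-1\Bigr)\,\mathbb{E}[\nabla f(\theta(t))].
\end{align*}
The second piece is bounded by $(u/\lambda - 1)G$ as $t\to\infty$; the assumption $u\ge 1-\beta$ ensures this is nonnegative. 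It vanishes for EMA.

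The first piece is the parameter-shift bias. By Lipschitz continuity it is at most $uL\int_0^t e^{-\lambda(t-s)}\,\mathbb{E}\|\theta(t)-\theta(s)\|\,ds$, and $\theta(t)-\theta(s) = -\alpha\int_s^t m(\tau)\,d\tau$. I will bound $\mathbb{E}\|m(\tau)\|$ uniformly in $\tau$ using the two parts of the solution:
\begin{itemize}
\item the drift part is at most $uG/\lambda$;
\item the Itô part, by Jensen and Itô isometry, is at most $u\sigma\bigl(\int_0^\tau e^{-2\lambda(\tau-r)}\,dr\bigr)^{1/2}\le u\sigma/\sqrt{2\lambda}$.
\end{itemize}
Hence $\mathbb{E}\|\theta(t)-\theta(s)\|\le \alpha(t-s)\,u\,(G/\lambda+\sigma/\sqrt{2\lambda})$. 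Using $\int_0^\infty (t-s)e^{-\lambda(t-s)}\,ds = 1/\lambda^2$, the shift term is at most
\[
u^2\alpha L\Bigl(\frac{G}{\lambda^3}+\frac{\sigma}{\sqrt2\,\lambda^{2.5}}\Bigr).
\]
These are exactly the first two terms in the statement. The triangle inequality then gives the norm of the bias, and squaring gives the claimed bound.

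For the variance, I will write $m(t)-\mathbb{E}[m(t)] = A(t)+B(t)$, where
\[
A(t) = u\int_0^t e^{-\lambda(t-s)}\bigl(\nabla f(\theta(s))-\mathbb{E}\nabla f(\theta(s))\bigr)\,ds
\]
and $B(t)$ is the Itô term. Using $\|a+b\|^2\le 2\|a\|^2+2\|b\|^2$ would give a factor $2$ on both terms. To match the stated bound, I will instead expand $\mathbb{E}\|A+B\|^2 = \mathbb{E}\|A\|^2+\mathbb{E}\|B\|^2+2\,\mathbb{E}\langle A,B\rangle$ and handle the terms as follows.
\begin{itemize}
\item \textbf{Noise term.} Itô isometry gives $\mathbb{E}\|B\|^2 = u^2\sigma^2(1-e^{-2\lambda t})/(2\lambda)\le u^2\sigma^2/\lambda$, with the slack absorbing the cross term, or the cross term can be bounded by Cauchy--Schwarz and Young.
\item \textbf{Gradient term.} By Cauchy--Schwarz / Minkowski in $L^2$, $\mathbb{E}\|A\|^2\le u^2\bigl(\int_0^t e^{-\lambda(t-s)}V\,ds\bigr)^2\le u^2V^2/\lambda^2$.
\item \textbf{Cross term.} By Young's inequality, $2\,\mathbb{E}\langle A,B\rangle\le \mathbb{E}\|A\|^2+\mathbb{E}\|B\|^2$. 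Combined with $\mathbb{E}\|B\|^2\le u^2\sigma^2/(2\lambda)$, this yields $u^2\sigma^2/\lambda + 2u^2V^2/\lambda^2$, which is exactly the stated bound.
\end{itemize}

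The main obstacle is that $\theta(s)$ depends on the Brownian path through $m$. This coupling makes $A$ and $B$ correlated, and it makes $\mathbb{E}\|\theta(t)-\theta(s)\|$ require a uniform moment bound on $m$. I resolve both without decoupling: the cross term is handled by Young's inequality, and the moment bound on $m$ is obtained crudely from the triangle inequality, which is what produces the noise-induced shift term.
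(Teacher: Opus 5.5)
Your proposal is correct and follows the paper's proof essentially step for step. It uses the same split of the bias, the same uniform bound $M=uG/(1-\beta)+u\sigma/\sqrt{2(1-\beta)}$ on $\mathbb{E}\|m\|$, and for the variance your expand-then-Young step is exactly the inequality $\|a+b\|^2\le 2\|a\|^2+2\|b\|^2$ the paper uses. Your Minkowski bound on $A$ replaces the paper's weighted Cauchy--Schwarz but yields the same $u^2V^2/(1-\beta)^2$.

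Your worry that the factor $2$ overshoots is unfounded. It\^{o} isometry gives $\mathbb{E}\|B\|^2\le u^2\sigma^2/(2(1-\beta))$, so doubling it lands exactly on the stated $u^2\sigma^2/(1-\beta)$. You should also drop the remark about the noise-term slack absorbing the cross term, since that slack alone cannot control a cross term that scales with $V$.
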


\begin{proof}
We compute the bias and variance of $m(t)$ relative to $\mathbb{E}[\nabla f(\theta(t))]$.

\textbf{1. Bias Calculation}

Consider the unified momentum update rule:
\begin{equation}
    m_t = \beta m_{t-1} + u g_t, \quad \theta_t = \theta_{t-1} - \alpha m_t,
\end{equation}
where $\beta \in [0, 1)$ represents the decay or momentum factor, $u \in [1 - \beta, 1]$ is a scaling parameter controlling the gradient contribution, $\alpha > 0$ is the learning rate, and $g_t = \nabla f(\theta_t) + \epsilon_t$ with $\epsilon_t \sim \mathcal{N}(0, \sigma^2 I)$.

In continuous time, the expectation of $m(t)$ is:
\begin{equation}
    \mathbb{E}[m(t)] = u \int_0^t e^{-(1 - \beta)(t - s)} \mathbb{E}[\nabla f(\theta(s))]\, ds,
\end{equation}
since the stochastic term has zero mean:
\begin{equation}
    \mathbb{E}\!\left[ u \sigma \int_0^t e^{-(1 - \beta)(t - s)}\, dW(s) \right] = 0.
\end{equation}

The squared bias is defined as:
\begin{equation}
\begin{aligned}
    \left( \mathrm{Bias}(m(t)) \right)^2 &= \left( \mathbb{E}[m(t)] - \mathbb{E}[\nabla f(\theta(t))] \right)^2 \\
    &= \left( u \int_0^t e^{-(1 - \beta)(t - s)} \mathbb{E}[\nabla f(\theta(s))]\, ds - \mathbb{E}[\nabla f(\theta(t))] \right)^2.
\end{aligned}
\end{equation}

We assume $\nabla f$ is Lipschitz continuous with constant $L > 0$:
\begin{equation}
    \|\nabla f(\theta) - \nabla f(\phi)\| \leq L \|\theta - \phi\|, \quad \forall \theta, \phi.
\end{equation}

Given $u \geq 1 - \beta$, so $\frac{u}{1 - \beta} \geq 1$. From the continuous-time dynamics $\frac{d\theta}{dt} = -\alpha m(t)$, integrating from $s$ to $t$ ($s < t$) yields:
\begin{equation}
    \theta(s) - \theta(t) = \alpha \int_s^t m(u)\, du.
\end{equation}

To bound $\mathbb{E}[\|\theta(s) - \theta(t)\|]$, we must first bound the magnitude of the momentum $\mathbb{E}[\|m(u)\|]$ considering both the gradient drift and the noise diffusion:
\begin{equation}
    m(u) = u \int_0^u e^{-(1 - \beta)(u - v)} \nabla f(\theta(v))\, dv + u \sigma \int_0^u e^{-(1 - \beta)(u - v)}\, dW(v).
\end{equation}

Taking the expectation of the norm and applying Jensen's inequality to the stochastic term:
\begin{equation}
\begin{aligned}
    \mathbb{E}[\|m(u)\|] &\leq u \int_0^u e^{-(1 - \beta)(u - v)} \mathbb{E}[\|\nabla f(\theta(v))\|]\, dv + \mathbb{E}\!\left[ \left\| u \sigma \int_0^u e^{-(1 - \beta)(u - v)}\, dW(v) \right\| \right] \\
    &\leq \frac{u G}{1 - \beta} + \sqrt{ u^2 \sigma^2 \frac{1 - e^{-2(1 - \beta)u}}{2(1 - \beta)} } \\
    &\leq \frac{u G}{1 - \beta} + \frac{u \sigma}{\sqrt{2(1 - \beta)}} := M.
\end{aligned}
\end{equation}

Thus, taking the expected norm for the parameter difference:
\begin{equation}
    \mathbb{E} \left[ \|\theta(s) - \theta(t)\| \right] \leq \alpha \int_s^t \mathbb{E}[\|m(u)\|]\, du \leq \alpha M (t - s).
\end{equation}

Rewrite the bias by splitting the integral:
\begin{equation}
\begin{aligned}
    \mathrm{Bias}(m(t)) &= u \int_0^t e^{-(1 - \beta)(t - s)} \left( \mathbb{E}[\nabla f(\theta(s))] - \mathbb{E}[\nabla f(\theta(t))] \right)\, ds \\
    &\quad + \mathbb{E}[\nabla f(\theta(t))] \left( u \int_0^t e^{-(1 - \beta)(t - s)}\, ds - 1 \right).
\end{aligned}
\end{equation}

Compute the second integral:
\begin{equation}
    u \int_0^t e^{-(1 - \beta)(t - s)}\, ds = u \frac{1 - e^{-(1 - \beta)t}}{1 - \beta}.
\end{equation}

Apply the triangle inequality and the bounded gradient assumption:
\begin{equation}
    \|\mathrm{Bias}(m(t))\| \leq \underbrace{\left\| u \int_0^t e^{-(1 - \beta)(t - s)} \left( \mathbb{E}[\nabla f(\theta(s))] - \mathbb{E}[\nabla f(\theta(t))] \right)\, ds \right\|}_{:= I_1} + \left| u \frac{1 - e^{-(1 - \beta)t}}{1 - \beta} - 1 \right| G.
\end{equation}

Bound $I_1$ using Lipschitz continuity and our bound $M$:
\begin{equation}
\begin{aligned}
    \|I_1\| &\leq u \int_0^t e^{-(1 - \beta)(t - s)} \mathbb{E}\big[\|\nabla f(\theta(s)) - \nabla f(\theta(t))\|\big]\, ds \\
    &\leq u L \alpha M  \int_0^t e^{-(1 - \beta)(t - s)} (t - s)\, ds.
\end{aligned}
\end{equation}

Evaluate the integral:
\begin{equation}
\begin{aligned}
    \int_0^t e^{-(1 - \beta)(t - s)} (t - s)\, ds &= \int_0^t e^{-(1 - \beta)\tau} \tau\, d\tau \\
    &= \frac{1}{(1 - \beta)^2} - \left( \frac{t}{1 - \beta} + \frac{1}{(1 - \beta)^2} \right) e^{-(1 - \beta)t} \leq \frac{1}{(1 - \beta)^2},
\end{aligned}
\end{equation}
thus $\|I_1\| \leq \frac{u \alpha L M}{(1 - \beta)^2}$. Then:
\begin{equation}
    \|\mathrm{Bias}(m(t))\| \leq \frac{u \alpha L M}{(1 - \beta)^2} + \left| u \frac{1 - e^{-(1 - \beta)t}}{1 - \beta} - 1 \right| G.
\end{equation}

As $t \to \infty$, $e^{-(1 - \beta)t} \to 0$, giving $\left| u \frac{1 - e^{-(1 - \beta)t}}{1 - \beta} - 1 \right| \to \frac{u}{1 - \beta} - 1$. Substitute $M = \frac{u G}{1 - \beta} + \frac{u \sigma}{\sqrt{2(1 - \beta)}}$ and square the bound:
\begin{equation}
\begin{aligned}
    \left\| \mathrm{Bias}(m(t)) \right\|^2 &\leq \left( \frac{u \alpha L M}{(1 - \beta)^2} + \left( u \frac{1 - e^{-(1 - \beta)t}}{1 - \beta} - 1 \right) G \right)^2 \\
    &\leq \left( \frac{u \alpha L}{(1 - \beta)^2} \left( \frac{u G}{1 - \beta} + \frac{u \sigma}{\sqrt{2(1 - \beta)}} \right) + \left( \frac{u}{1 - \beta} - 1 \right) G \right)^2 \\
    &= \left( \frac{u^2 \alpha L G}{(1 - \beta)^3} + \frac{u^2 \alpha L \sigma}{\sqrt{2}(1 - \beta)^{2.5}} + \left( \frac{u}{1 - \beta} - 1 \right) G \right)^2.
\end{aligned}
\end{equation}

\textbf{2. Variance Calculation}

The fluctuation $m(t) - \mathbb{E}[m(t)]$ is:
\begin{equation}
    m(t) - \mathbb{E}[m(t)] = \underbrace{ u \int_0^t e^{-(1 - \beta)(t - s)} \left[ \nabla f(\theta(s)) - \mathbb{E}[\nabla f(\theta(s))] \right]\, ds }_{\mathcal{T}_\text{Grad Diff}} + \underbrace{ u \sigma \int_0^t e^{-(1 - \beta)(t - s)}\, dW(s) }_{\mathcal{T}_\text{Noise Diff}}.
\end{equation}

Using the inequality $\|a + b\|^2 \le 2\|a\|^2 + 2\|b\|^2$, the variance becomes:
\begin{equation}
\begin{aligned}
    \mathrm{Var}(m(t)) &= \mathbb{E}\!\left[ \left\| m(t) - \mathbb{E}[m(t)] \right\|^2 \right] \\
    &\le 2\, \mathbb{E}\!\left[ \|\mathcal{T}_\text{Grad Diff}\|^2 \right] + 2\, \mathbb{E}\!\left[ \|\mathcal{T}_\text{Noise Diff}\|^2 \right].
\end{aligned}
\end{equation}

The noise variance term is derived using the Itô isometry:
\begin{equation}
\begin{aligned}
    2\, \mathbb{E}\!\left[ \|\mathcal{T}_\text{Noise Diff}\|^2 \right] &= 2 u^2 \sigma^2 \mathbb{E}\!\left[ \left( \int_0^t e^{-(1 - \beta)(t - s)}\, dW(s) \right)^2 \right] \\
    &= 2 u^2 \sigma^2 \int_0^t e^{-2(1 - \beta)(t - s)}\, ds \quad (\text{Itô isometry}) \\
    &= 2 u^2 \sigma^2 \frac{1 - e^{-2(1 - \beta)t}}{2(1 - \beta)} \leq \frac{u^2 \sigma^2}{1 - \beta}.
\end{aligned}
\end{equation}

For the gradient variance term, we apply the Cauchy-Schwarz inequality to properly bound the squared norm of the integral:
\begin{equation}
\begin{aligned}
    2\, \mathbb{E}\!\left[ \|\mathcal{T}_\text{Grad Diff}\|^2 \right] &= 2 u^2\, \mathbb{E}\!\left[ \left\| \int_0^t e^{-(1 - \beta)(t - s)/2} \cdot e^{-(1 - \beta)(t - s)/2} \left[ \nabla f(\theta(s)) - \mathbb{E}[\nabla f(\theta(s))] \right]\, ds \right\|^2 \right] \\
    &\leq 2 u^2\, \mathbb{E}\!\left[ \left( \int_0^t e^{-(1 - \beta)(t - s)}\, ds \right) \left( \int_0^t e^{-(1 - \beta)(t - s)} \left\| \nabla f(\theta(s)) - \mathbb{E}[\nabla f(\theta(s))] \right\|^2\, ds \right) \right] \\
    &\leq \frac{2 u^2}{1 - \beta} \int_0^t e^{-(1 - \beta)(t - s)} \mathrm{Var}(\nabla f(\theta(s)))\, ds.
\end{aligned}
\end{equation}

By Assumption~\ref{bias-variance}, $\mathrm{Var}(\nabla f(\theta(s))) \leq V^2$, yielding:
\begin{equation}
\begin{aligned}
    2\, \mathbb{E}\!\left[ \|\mathcal{T}_\text{Grad Diff}\|^2 \right] &\leq \frac{2 u^2 V^2}{1 - \beta} \int_0^t e^{-(1 - \beta)(t - s)}\, ds \\
    &\leq \frac{2 u^2 V^2}{1 - \beta} \left( \frac{1}{1 - \beta} \right) = \frac{2 u^2 V^2}{(1 - \beta)^2}.
\end{aligned}
\end{equation}

Combining both bounds, the total variance is bounded by:
\begin{equation}
    \mathrm{Var}(m(t)) \leq \frac{u^2 \sigma^2}{1 - \beta} + \frac{2 u^2 V^2}{(1 - \beta)^2}.
\end{equation}

\end{proof}

\newpage

\section{Method Derivation (Section 3 in main paper)}
\subsection{Optimal Linear Filter Derivation for Gradient Estimation (Main paper Section 3.1)}
\label{proof_method}
In the stochastic gradient descent (SGD) process, given the sequence of gradients $\{g_i\}_{i=1}^t$, our objective is to estimate $\hat{g}_t$, which incorporates information from both historical gradients and the current gradient. The Optimal Linear Filter provides a mechanism to minimize the mean squared error in this estimation. We start by constructing $\hat{g}_t$ as a simple average and then refine it using the properties of the Optimal Linear Filter.
\begin{equation}
\begin{aligned}
    \hat{g}_t 
    &= \frac{1}{t} \sum_{i=1}^{t} g_i = \frac{1}{t} \left( \sum_{i=1}^{t-1} g_i + g_t \right) = \frac{1}{t} \sum_{i=1}^{t-1} g_i + \frac{1}{t} g_t \\
    &= \frac{1}{t} \bigg[ (t-1) \cdot \frac{1}{t-1} \sum_{i=1}^{t-1} g_i \bigg] + \frac{1}{t} g_t \\
    &= \frac{t-1}{t} \, \bar{g}_{1:t-1} + \frac{1}{t} g_t,
\end{aligned}
\end{equation}

where $\bar{g}_{1:t - 1} = \frac{1}{t - 1} \sum_{i=1}^{t - 1} g_i$ denoting the averaging of the gradient under different \(\theta_t\) to differentiate \(\bar{g}_t\) which in fixed parameter \(\theta\).

To better capture historical information, we replace the arithmetic mean $\bar{g}_{1:t - 1}$ with the momentum term $\widehat{m}_t$. Here we substitute the iteration $m_{t-1}$ with $m_{t}$ because of the absence of $m_0$ and the ease of implementation. Thus, we rewrite $\hat{g}_t$ as follows:
\begin{equation}
\begin{aligned}
	\hat{g}_t &\approx \frac{t - 1}{t} \widehat{m}_t + \frac{1}{t} g_t \\
	&= \left(1 - \frac{1}{t}\right) \widehat{m}_t + \frac{1}{t} g_t \\
	&= \widehat{m}_t - K_t \widehat{m}_t + K_t g_t \\
	&= \widehat{m}_t + K_t (g_t - \widehat{m}_t),
\end{aligned}
\end{equation}

where $K_t = \frac{1}{t}$ serves as an initial estimation gain that balances the influence of $\widehat{m}_t$ and $g_t$.

To achieve an optimal balance, we define $\hat{g}_t$ as a weighted combination of $\widehat{m}_t$ and $g_t$, aiming to minimize the variance of $\hat{g}_t$. Assuming independence between $\widehat{m}_t$ and $g_t$, we express the variance as:

\begin{equation}
\begin{aligned}
	\mathrm{Var}(\hat{g}_t) &= \mathrm{Var}((1 - K_t)\widehat{m}_t + K_t g_t) \\
	&= (1 - K_t)^2 \mathrm{Var}(\widehat{m}_t) + K_t^2 \mathrm{Var}(g_t).
\end{aligned}
\end{equation}

To find the optimal $K_t$, we take the derivative of $\mathrm{Var}(\hat{g}_t)$ with respect to $K_t$ and set it to zero:
\begin{equation}
\begin{aligned}
	\frac{\mathrm{d} \mathrm{Var}(\hat{g}_t)}{\mathrm{d} K_t} &= -2(1 - K_t) \mathrm{Var}(\widehat{m}_t) + 2 K_t \mathrm{Var}(g_t) = 0, \\
	(1 - K_t) \mathrm{Var}(\widehat{m}_t) &= K_t \mathrm{Var}(g_t),
\end{aligned}
\end{equation}

solving for $K_t$ gives:
\begin{equation}
	K_t = \frac{\mathrm{Var}(\widehat{m}_t)}{\mathrm{Var}(\widehat{m}_t) + \mathrm{Var}(g_t)}.
\end{equation}

The final expression for $K_t$ indicates that the optimal interpolation coefficient is the ratio of the variance of the momentum term to the total variance. This embodies the Optimal Linear Filter’s principle: optimally combining historical estimates with new observations to minimize estimation error due to stochastic noise in the gradient signal.

\subsection{Variance Correction (Correction factor in main paper Section 3.1)}
\label{proof_correction}

The momentum term \( m_{t} \) in stochastic gradient descent is defined as:
\begin{equation}
	m_{t} = (1 - \beta_{1}) \sum_{i=1}^{t} \beta_{1}^{t-i} g_{i},
\end{equation}

which means that \( m_t \) is a weighted sum of past gradients, where the weights decrease exponentially over time according to the factor \( \beta_1 \).

To accurately estimate the variance of $m_t$ using the variance of $g_t$, we derive a correction factor under the assumption that the stochastic gradients $g_t$ are independent with bounded variance $\sigma_{g}^{2}$. 

Each weighted gradient term \( \beta_{1}^{t-i} g_i \) has a variance of \( \beta_{1}^{2(t-i)} \sigma_{g}^{2} \), because the variance scaling factor becomes \( \beta_{1}^{2(t-i)} \) in the variance computation due to the quadratic nature of the variance operator.

Given that \( m_t \) is a sum of these weighted terms and assuming independence among \( g_i \), the variance of \( m_t \) is the sum of the variances of all weighted gradients:
\begin{equation}
	\sigma_{m_{t}}^{2} = (1 - \beta_{1})^{2} \sigma_{g}^{2} \sum_{i=1}^{t} \beta_{1}^{2(t-i)}.
\end{equation}

The factor \( (1 - \beta_{1})^2 \) appears from the multiplication factor \( (1 - \beta_{1}) \) in the definition of \( m_t \), which also applies to the variance calculation.

The summation \( \sum_{i=1}^{t} \beta_{1}^{2(t-i)} \) forms a geometric series:
\begin{equation}
	\sum_{i=1}^{t} \beta_{1}^{2(t-i)} = \frac{1 - \beta_{1}^{2t}}{1 - \beta_{1}^{2}}.
\end{equation}

As \( t \rightarrow \infty \) and given that \( \beta_1 < 1 \), we find that \( \beta_1^{2t} \rightarrow 0 \), so the series converges to:
\begin{equation}
	\sum_{i=1}^{t} \beta_1^{2(t-i)} \approx \frac{1}{1 - \beta_1^2}.
\end{equation}

Substituting back, we obtain the long-term variance of \( m_t \) as:
\begin{equation}
	\sigma^2_{m_t} = \frac{\left(1 - \beta_1\right)^2}{1 - \beta_1^2} \sigma^2_g = \frac{1 - \beta_1}{1 + \beta_1} \sigma^2_g.
\end{equation}

Thus, the correction factor we derived is:
\begin{equation}
    \left(\frac{1 - \beta_1}{1 + \beta_1}\right) \cdot (1 - \beta_1^{2t}).
\end{equation}

This correction factor \(\left(\frac{1 - \beta_1}{1 + \beta_1}\right) \cdot (1 - \beta_1^{2t})\) allows us to adjust the variance of the EMA gradient to accurately estimate the variance of the momentum gradient \( m_t \) using the original variance \( \sigma^2_g \). This adjustment reflects the effect of exponentially decaying weights in \( m_t \), yielding a more stable gradient estimate with reduced noise over time.

\subsection{Fusion of Gaussian Distributions (Main paper Section 3.2)}
\label{proof_fusion}

In this section, we address the fusion of two Gaussian distributions to produce a more reliable gradient estimate in the stochastic gradient descent (SGD) process. This fusion combines information from both the historical momentum term $\widehat{m}_t$ and the current gradient $g_t$, resulting in an estimate with reduced uncertainty. Here, "fusion" refers to finding an optimal combined distribution that minimizes mean-square error by utilizing both sources of information.

Consider the following two Gaussian distributions:
\begin{itemize}
	\item The momentum term $\widehat{m}_t$ follows a normal distribution with mean $\mu_{m}$ and variance $\sigma_{m}^2$, denoted as $\widehat{m}_t \sim \mathcal{N}(\mu_{m}, \sigma_{m}^2)$.
	\item The current gradient $g_t$ follows a normal distribution with mean $\mu_{g}$ and variance $\sigma_{g}^2$, denoted as $g_t \sim \mathcal{N}(\mu_{g}, \sigma_{g}^2)$.
\end{itemize}

\paragraph{Linear estimator perspective.} 
Before presenting the probability density product approach, we first derive the fused estimate $\hat{g}_t$ from a weighted linear combination perspective. We define $\hat{g}_t$ as:
\begin{equation}
	\hat{g}_t = (1 - K_t)\widehat{m}_t + K_t g_t,
\end{equation}

where $K_t$ is a variance-based weighting coefficient:
\begin{equation}
	K_t = \frac{\sigma_m^2}{\sigma_m^2 + \sigma_g^2}, \quad \text{thus} \quad 1 - K_t = \frac{\sigma_g^2}{\sigma_m^2 + \sigma_g^2}.
\end{equation}

Assuming independence between $\widehat{m}_t$ and $g_t$, the expectation of $\hat{g}_t$ is:
\begin{equation}
	\mathbb{E}[\hat{g}_t] = (1 - K_t)\mu_m + K_t \mu_g = \frac{\sigma_g^2 \mu_m + \sigma_m^2 \mu_g}{\sigma_m^2 + \sigma_g^2}.
\end{equation}

The variance of $\hat{g}_t$ becomes:
\begin{equation}
	\begin{aligned}
	\mathrm{Var}(\hat{g}_t) &= (1 - K_t)^2 \sigma_m^2 + K_t^2 \sigma_g^2 \\
	&= \left( \frac{\sigma_g^2}{\sigma_m^2 + \sigma_g^2} \right)^2 \sigma_m^2 + \left( \frac{\sigma_m^2}{\sigma_m^2 + \sigma_g^2} \right)^2 \sigma_g^2 \\
	&= \frac{\sigma_g^4 \sigma_m^2 + \sigma_m^4 \sigma_g^2}{(\sigma_m^2 + \sigma_g^2)^2} = \frac{\sigma_m^2 \sigma_g^2}{\sigma_m^2 + \sigma_g^2}.
	\end{aligned}
\end{equation}

\paragraph{Probability density product derivation.}
We now show that the same fused Gaussian distribution arises from multiplying the two individual Gaussian probability densities:
\begin{equation}
	N(\widehat{m}_t; \mu_{m}, \sigma_{m}) \cdot N(g_t; \mu_{g}, \sigma_{g}) = \frac{1}{2\pi\sigma_{m}\sigma_{g}} \exp\left(-\frac{(\hat{g}_t-\mu_{m})^2}{2\sigma_{m}^2} -\frac{(\hat{g}_t-\mu_{g})^2}{2\sigma_{g}^2}\right).
\end{equation}

To derive the fused form, we simplify the exponent by completing the square:
\begin{equation}
	\begin{aligned}
		\text{Exponent} &= -\frac{(\hat{g}_t - \mu_{m})^2}{2 \sigma_{m}^2} - \frac{(\hat{g}_t - \mu_{g})^2}{2 \sigma_{g}^2} \\
		&= -\frac{\sigma_{g}^2(\hat{g}_t - \mu_{m})^2 + \sigma_{m}^2(\hat{g}_t - \mu_{g})^2}{2 \sigma_{m}^2 \sigma_{g}^2} \\
		&= -\frac{\left(\hat{g}_t - \frac{\sigma_{g}^2 \mu_{m} + \sigma_{m}^2 \mu_{g}}{\sigma_{m}^2 + \sigma_{g}^2}\right)^2}{\frac{2 \sigma_{m}^2 \sigma_{g}^2}{\sigma_{m}^2 + \sigma_{g}^2}} + \frac{(\mu_{m} - \mu_{g})^2}{2(\sigma_{m}^2 + \sigma_{g}^2)}.
	\end{aligned}
\end{equation}

Ignoring constant terms, we identify the resulting fused distribution:
\begin{equation}
	\mu_{\hat{g}_t} = \frac{\sigma_{g}^2 \mu_{m} + \sigma_{m}^2 \mu_{g}}{\sigma_{m}^2 + \sigma_{g}^2}, \quad \sigma_{\hat{g}_t}^2 = \frac{\sigma_{m}^2 \sigma_{g}^2}{\sigma_{m}^2 + \sigma_{g}^2}.
\end{equation}

\paragraph{Equivalence and insight.}
This demonstrates that the PDF product view yields the same fused mean and variance as the minimum mean-square error (MMSE) linear estimator. The fused mean $\mu_{\hat{g}_t}$ is closer to the distribution with smaller variance, indicating greater trust in more certain estimates~\cite{faragher2012understanding}. The fused variance $\sigma_{\hat{g}_t}^2$ is always less than either original variance, demonstrating the variance reduction benefit of fusion. 

This equivalence between statistical estimation and probabilistic fusion confirms the theoretical soundness of the SGDF method. It validates the fusion-based formulation from both a Bayesian and a signal processing perspective.

\subsection{Modulating Observation Variance through Power Scaling}
\label{proof_scaling}
The preceding analysis yields the optimal linear fusion gain under the MMSE criterion
\begin{equation}
    K_t \;=\; \frac{\sigma_m^2}{\sigma_m^2 + \sigma_g^2},
\end{equation}

which balances the historical momentum estimate and the instantaneous stochastic gradient according to their uncertainties.

In practice, the variance estimates (especially $\sigma_g^2$) can be noisy or biased due to mini-batch stochasticity and nonstationarity.
To improve robustness while remaining consistent with our convergence analysis, we adopt a power-scaled gain
\begin{equation}
    \tilde{K}_t \;=\; K_t^{\gamma}, 
    \qquad \gamma=\tfrac{1}{2},
\end{equation}
since $K_t \in [0,1]$ element-wise, the scaled gain still satisfies $\|\tilde{K}_t\|_{\infty}\le 1$, which is the sole requirement for our convergence guarantees; therefore, the theoretical results remain unchanged.

With $K_t=\frac{\sigma_m^2}{\sigma_m^2+\sigma_g^2}$, the choice $\gamma=\tfrac12$ gives
\begin{equation}
    \tilde{K}_t \;=\; \sqrt{K_t}
    \;=\; \sqrt{\frac{\sigma_m^2}{\sigma_m^2+\sigma_g^2}}.
\end{equation}

We show that $\tilde{K}_t$ can be written in the same variance-fusion form as $K_t$ by introducing an \emph{effective} observation variance $\tilde{\sigma}_g^2$ such that
\begin{equation}
    \tilde{K}_t \;=\; \frac{\sigma_m^2}{\sigma_m^2+\tilde{\sigma}_g^2}.
\end{equation}

Equating the two expressions and solving for $\tilde{\sigma}_g^2$:
\begin{align}
    \frac{\sigma_m^2}{\sigma_m^2+\tilde{\sigma}_g^2}
    &= \sqrt{\frac{\sigma_m^2}{\sigma_m^2+\sigma_g^2}},
    \nonumber \\
    \frac{\sigma_m^2+\tilde{\sigma}_g^2}{\sigma_m^2}
    &= \sqrt{\frac{\sigma_m^2+\sigma_g^2}{\sigma_m^2}}
    = \sqrt{1+\frac{\sigma_g^2}{\sigma_m^2}},
    \nonumber \\
    1+\frac{\tilde{\sigma}_g^2}{\sigma_m^2}
    &= \sqrt{1+\frac{\sigma_g^2}{\sigma_m^2}},
    \nonumber \\
    \frac{\tilde{\sigma}_g^2}{\sigma_m^2}
    &= \sqrt{1+\frac{\sigma_g^2}{\sigma_m^2}} - 1.
    \label{eq:ratio}
\end{align}

Therefore, according to~\Eqref{eq:ratio}, we have:
\[
\tilde{\sigma}_g^2
\;=\;
\sigma_m^2\!\left(\sqrt{1+\tfrac{\sigma_g^2}{\sigma_m^2}} - 1\right).
\]

Therefore, the scaled gain $\tilde{K}_t=\sqrt{K_t}$ is equivalently a standard variance-based fusion gain with a reparameterized (effective) observation variance:
\begin{equation}
    \tilde{K}_t
    \;=\;
    \frac{\sigma_m^2}{\sigma_m^2 + \tilde{\sigma}_g^2},
    \qquad
    \tilde{\sigma}_g^2
    \;=\;
    \sigma_m^2\!\left(\sqrt{1+\tfrac{\sigma_g^2}{\sigma_m^2}} - 1\right).
\end{equation}
This view shows that power-scaling preserves the fusion structure while introducing a controlled regularization against overconfident (noisy) instantaneous gradient observations.

\newpage

\section{Convergence analysis in convex online learning case (Theorem 3.2 in main paper).}
\label{convex}
\begin{assumption}
	\label{conas}	
	Variables are bounded: \(\exists D, D_\infty \text{ such that } \forall t, \Vert\theta_t - \theta^*\Vert_2 \leq D, \Vert\theta_t - \theta^*\Vert_\infty \leq D_\infty\). Gradients are bounded: \(\exists G, G_\infty \text{ such that } \forall t, \Vert g_t\Vert_2 \leq G, \Vert g_t\Vert_\infty \leq G_\infty\). The interpolation parameter satisfies \( K_{t,i} \in [0, 1] \). Furthermore, we assume the interpolation parameter sequence has sublinear total variation, i.e., \(\sum_{t=1}^{T-1} \vert K_{t,i} - K_{t+1,i} \vert \leq \mathcal{O}(\sqrt{T})\).
\end{assumption}

\begin{definition}
	Let \(f_t(\theta_t)\) be the loss at time \(t\) and \(f_t(\theta^*)\) be the loss of the best possible strategy at the same time. The cumulative regret \(R(T)\) at time \(T\) is defined as:
	\begin{equation}
		R(T) = \sum_{t=1}^{T} \left( f_t(\theta_t) - f_t(\theta^* ) \right)
	\end{equation}
\end{definition}

\begin{definition}
	\label{defA.9}
	A function \(f: \mathbb{R}^d \rightarrow \mathbb{R}\) is convex if for all \(x, y \in \mathbb{R}^d\) and for all \(\lambda \in [0, 1]\),
	\begin{equation}
		\lambda f(x) + (1-\lambda) f(y) \geq f(\lambda x + (1-\lambda) y)
	\end{equation}
\end{definition}

Also, notice that a convex function can be lower bounded by a hyperplane at its tangent.

\begin{lemma}
\label{lem.B.4}
If a function \(f: \mathbb{R}^{d} \rightarrow \mathbb{R}\) is convex, then for all \(x, y \in \mathbb{R}^{d}\) ,
\begin{equation}
    f(x) - f(y) \leq \nabla f(x)^{T}(x-y)
\end{equation}
\end{lemma}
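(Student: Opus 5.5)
We prove \cref{lem.B.4} by reducing it to the one-dimensional fact that a differentiable convex function lies above its tangent line; the subgradient inequality is then just a rearrangement.

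Fix $x,y\in\mathbb{R}^d$ and $\lambda\in(0,1]$. Applying \cref{defA.9} to the points $y$ and $x$ with weights $\lambda$ and $1-\lambda$ gives
\begin{equation}
    f\big(x+\lambda(y-x)\big)=f\big(\lambda y+(1-\lambda)x\big)\le \lambda f(y)+(1-\lambda)f(x).
\end{equation}
Subtract $f(x)$ from both sides and divide by $\lambda>0$:
\begin{equation}
    \frac{f\big(x+\lambda(y-x)\big)-f(x)}{\lambda}\le f(y)-f(x).
\end{equation}

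Now let $\lambda\to 0^+$. Since $f$ is differentiable (as is implicit in writing $\nabla f$), the left-hand side converges to the directional derivative $\nabla f(x)^{T}(y-x)$. This gives $\nabla f(x)^{T}(y-x)\le f(y)-f(x)$. Multiplying by $-1$ yields $f(x)-f(y)\le \nabla f(x)^{T}(x-y)$, which is the claim.

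The only delicate point is the limit, which requires differentiability of $f$ at $x$. Without it, one would replace $\nabla f(x)$ by any subgradient and take the one-sided limit instead. In the regret analysis that follows, this inequality will be applied per coordinate with $x=\theta_t$ and $y=\theta^*$, to bound $f_t(\theta_t)-f_t(\theta^*)$ by $g_t^{T}(\theta_t-\theta^*)$.
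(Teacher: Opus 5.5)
Your proof is correct. The convexity inequality along the segment from $x$ to $y$, divided by $\lambda$ and taken to the limit $\lambda\to 0^+$, gives $\nabla f(x)^{T}(y-x)\le f(y)-f(x)$, which rearranges to the claim. The paper gives no proof of its own beyond the remark that a convex function is bounded below by its tangent hyperplane, and your argument is the standard justification of exactly that remark.
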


The above lemma can be used to upper bound the regret, and our proof for the main theorem is constructed by substituting the hyperplane with SGDF update rules.

We define \(g_{t} \triangleq \nabla f_{t}\left(\theta_{t}\right)\) and \(g_{t, i}\) as the \(i^{\text{th}}\) element. Let \(\hat{g}_t\) be the effective update direction.

\begin{lemma}
\label{lem.B.5}
Let gradients be bounded by \(\vert g_{t,i} \vert \leq G_\infty\). For any \(T \ge 1\), the sum of squared bounded elements discounted by \(\sqrt{t}\) is strictly bounded by:
\begin{equation}
\sum_{t=1}^{T} \frac{g_{t,i}^2}{\sqrt{t}} \le 2 G_\infty^2 \sqrt{T}
\end{equation}
\end{lemma}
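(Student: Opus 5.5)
The plan is to bound each summand by $G_\infty^2/\sqrt{t}$ and then control the resulting harmonic-type sum $\sum_{t=1}^T t^{-1/2}$ by an integral comparison. By the hypothesis $\vert g_{t,i}\vert \le G_\infty$ we have $g_{t,i}^2 \le G_\infty^2$ for every $t$. The denominators $\sqrt{t}$ are positive, so this gives
\begin{equation*}
\sum_{t=1}^{T} \frac{g_{t,i}^2}{\sqrt{t}} \;\le\; G_\infty^2 \sum_{t=1}^{T} \frac{1}{\sqrt{t}} .
\end{equation*}
It therefore suffices to prove $\sum_{t=1}^T t^{-1/2} \le 2\sqrt{T}$.

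For this, the first approach is a telescoping comparison. For each $t\ge 1$ we have
\begin{equation*}
\frac{1}{\sqrt{t}} \le \frac{2}{\sqrt{t}+\sqrt{t-1}} = 2\bigl(\sqrt{t}-\sqrt{t-1}\bigr).
\end{equation*}
The inequality holds because $\sqrt{t}+\sqrt{t-1}\le 2\sqrt{t}$, and the equality is obtained by rationalizing. Summing from $t=1$ to $T$ telescopes to $2(\sqrt{T}-\sqrt{0}) = 2\sqrt{T}$.

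An alternative is the integral bound. Since $x\mapsto x^{-1/2}$ is decreasing on $(0,\infty)$, we have $t^{-1/2}\le \int_{t-1}^{t} x^{-1/2}\,dx$ for $t\ge 1$. This gives $\sum_{t=1}^T t^{-1/2}\le \int_0^T x^{-1/2}\,dx = 2\sqrt{T}$, where the improper integral at $0$ is finite.

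Combining the two steps yields $\sum_{t=1}^{T} g_{t,i}^2/\sqrt{t} \le 2G_\infty^2\sqrt{T}$, as claimed. There is no real obstacle. The only point needing care is the $t=1$ term, which must be handled by the telescoping form (with $\sqrt{0}=0$) or by the improper integral, rather than by an integral starting at $1$. Starting at $1$ would give the bound $1+2(\sqrt{T}-1) = 2\sqrt{T}-1$, which is also valid and in fact slightly stronger.
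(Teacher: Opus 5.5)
Your proposal is correct and follows essentially the same route as the paper: bound $g_{t,i}^2 \le G_\infty^2$, then control $\sum_{t=1}^T t^{-1/2}$ by an integral comparison. The paper uses $1+\int_1^T t^{-1/2}\,dt = 2\sqrt{T}-1 \le 2\sqrt{T}$, which is exactly the variant you mention at the end; your telescoping form $2(\sqrt{t}-\sqrt{t-1})$ is an equally valid way to write the same comparison.
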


\begin{proof}
Since \(g_{t,i}^2 \le G_\infty^2\), we can bound the summation using the integral test. Since \(1/\sqrt{t}\) is a monotonically decreasing function for \(t \ge 1\):
\begin{equation}
	\begin{aligned}
		\sum_{t=1}^{T} \frac{g_{t,i}^2}{\sqrt{t}} &\leq G_\infty^2 \sum_{t=1}^{T} \frac{1}{\sqrt{t}} \\
		&\leq G_\infty^2 \left( 1 + \int_{1}^{T} \frac{1}{\sqrt{t}} dt \right) \\
		&= G_\infty^2 \left( 1 + 2\sqrt{T} - 2 \right) \leq 2 G_\infty^2 \sqrt{T}
	\end{aligned}
\end{equation}
\end{proof}

\begin{lemma}
\label{lem.B.6}
Let the exponential moving average be \(m_{t,i} = \beta_1 m_{t-1,i} + (1-\beta_1)g_{t,i}\), with bias-correction \(\widehat{m}_{t,i} = m_{t,i} / (1-\beta_1^t)\). Under the update rule \(\hat{g}_{t,i} = \widehat{m}_{t,i} + K_{t,i}(g_{t,i} - \widehat{m}_{t,i})\), the effective update direction is bounded by \(\vert \hat{g}_{t,i} \vert \leq G_\infty\).
\end{lemma}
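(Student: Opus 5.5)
The plan is to show that $\hat{g}_{t,i}$ is a convex combination of quantities each lying in $[-G_\infty, G_\infty]$, since that interval is convex. First rewrite the update as
\begin{equation}
\hat{g}_{t,i} = (1-K_{t,i})\,\widehat{m}_{t,i} + K_{t,i}\, g_{t,i},
\end{equation}
with $K_{t,i}\in[0,1]$ by Assumption~\ref{conas}. The same argument works for the power-scaled gain $K_{t,i}^{\gamma}$, because $K^{\gamma}\in[0,1]$ whenever $K\in[0,1]$ and $\gamma>0$. Since $|g_{t,i}|\le G_\infty$ by assumption, it remains to show $|\widehat{m}_{t,i}|\le G_\infty$.

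For the bias-corrected momentum, unroll the recursion from $m_{0,i}=0$:
\begin{equation}
m_{t,i} = (1-\beta_1)\sum_{j=1}^{t}\beta_1^{t-j} g_{j,i}.
\end{equation}
The weights are nonnegative and sum to $(1-\beta_1)\frac{1-\beta_1^t}{1-\beta_1} = 1-\beta_1^t$. Dividing by $1-\beta_1^t$ therefore gives
\begin{equation}
\widehat{m}_{t,i} = \sum_{j=1}^t w_j\, g_{j,i}, \qquad w_j = \frac{(1-\beta_1)\beta_1^{t-j}}{1-\beta_1^t}\ge 0, \qquad \sum_j w_j = 1,
\end{equation}
so $\widehat{m}_{t,i}$ is itself a convex combination of past gradient coordinates. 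The triangle inequality then gives $|\widehat{m}_{t,i}|\le G_\infty$.

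Finally, apply the triangle inequality to the first display:
\begin{equation}
|\hat{g}_{t,i}| \le (1-K_{t,i})|\widehat{m}_{t,i}| + K_{t,i}|g_{t,i}| \le G_\infty.
\end{equation}

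The only step needing care is the normalization of the weights. This is the one place where the bias correction matters: without dividing by $1-\beta_1^t$ one only obtains $|m_{t,i}|\le(1-\beta_1^t)G_\infty$, which is weaker in the other direction and is not a convex combination. One should also take $\beta_1\in[0,1)$, so that $1-\beta_1^t>0$ for all $t\ge1$ and the case $\beta_1=0$ is trivial. Everything else is routine.
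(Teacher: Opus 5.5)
Your proof is correct and matches the paper's argument. The paper likewise bounds $|m_{t,i}|\le(1-\beta_1^t)G_\infty$, which you justify explicitly by unrolling from $m_{0,i}=0$; it then divides by $1-\beta_1^t$ to get $|\widehat{m}_{t,i}|\le G_\infty$, and concludes from $\hat{g}_{t,i}=(1-K_{t,i})\widehat{m}_{t,i}+K_{t,i}g_{t,i}$ with $K_{t,i}\in[0,1]$. One quibble: your closing remark calls $|m_{t,i}|\le(1-\beta_1^t)G_\infty$ weaker, but it is exactly the bound that yields the claim after dividing by $1-\beta_1^t$, so it is cleaner to say only that the unnormalized weights sum to $1-\beta_1^t<1$.
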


\begin{proof}
By mathematical induction, since \(m_{t,i}\) is a convex combination of past bounded gradients, we have \(\vert m_{t,i} \vert \leq (1-\beta_1^t)G_\infty\). Therefore, the bias-corrected momentum satisfies \(\vert \widehat{m}_{t,i} \vert \leq G_\infty\). The update direction is a convex combination \(\hat{g}_{t,i} = K_{t,i} g_{t,i} + (1-K_{t,i}) \widehat{m}_{t,i}\). Since both components are bounded by \(G_\infty\) and \(K_{t,i} \in [0, 1]\), we rigorously have \(\vert \hat{g}_{t,i} \vert \leq G_\infty\).

\end{proof}

\begin{lemma}[Bounded Total Variation]
\label{lem.B.7}
Assume the gradient sequence is bounded such that $\|\nabla f_t\|_\infty \leq G_\infty$ for all $t$, and the hyperparameters $\beta_1, \beta_2 \in [0, 1)$ are constants. If the power scaling factor is scheduled as $\gamma_t = \gamma_0 / \sqrt{t}$, the effective interpolation gain $\tilde{K}_{t} = K_{t}^{\gamma_t}$ satisfies:
\begin{equation}
    \sum_{t=1}^{T-1} |K_{t+1}^{\gamma_{t+1}} - K_t^{\gamma_t}| \leq \mathcal{O}(\sqrt{T})
\end{equation}
\end{lemma}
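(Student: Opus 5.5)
The plan is to split each increment into a change in the exponent with the base fixed, plus a change in the base with the exponent fixed:
\begin{equation*}
|K_{t+1}^{\gamma_{t+1}} - K_t^{\gamma_t}| \le \underbrace{|K_{t+1}^{\gamma_{t+1}} - K_{t+1}^{\gamma_t}|}_{A_t} + \underbrace{|K_{t+1}^{\gamma_t} - K_t^{\gamma_t}|}_{B_t}.
\end{equation*}
I will then show $\sum_t A_t = \mathcal{O}(\log T)$ and $\sum_t B_t = \mathcal{O}(\sqrt{T})$. Throughout I use only that $K_{t,i} \in [0,1]$ element-wise, which holds by construction in \cref{alg:SGDF}, and that $\gamma_t = \gamma_0/\sqrt{t}$ is decreasing. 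Without loss of generality I take $\gamma_0 \le 1$, or else absorb the first finitely many terms into a constant.

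\textbf{Exponent term $A_t$.} Set $a = \gamma_{t+1} < b = \gamma_t$ and $x = K_{t+1} \in [0,1]$. Then $0 \le x^a - x^b$. Maximizing over $x$, the maximum is attained where $x^{b-a} = a/b$, and the value there is
\begin{equation*}
(a/b)^{a/(b-a)}(1 - a/b) \le 1 - \frac{a}{b} = 1 - \sqrt{\frac{t}{t+1}} \le \frac{1}{2t}.
\end{equation*}
This bound is uniform in $x$, so it needs no lower bound on $K$. Summing gives $\sum_{t=1}^{T-1} A_t \le \tfrac12(1 + \log T)$, which is $\mathcal{O}(\sqrt{T})$.

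\textbf{Base term $B_t$.} For $x, y \in [\kappa, 1]$ and $\gamma \in (0,1]$, the mean value theorem applied to $\gamma \mapsto$ (or directly $1 - \kappa^\gamma \le \gamma \ln(1/\kappa)$) gives
\begin{equation*}
|x^\gamma - y^\gamma| \le 1 - \kappa^\gamma \le \gamma \ln(1/\kappa).
\end{equation*}
With this, $B_t \le \gamma_0 \ln(1/\kappa)/\sqrt{t}$, and \cref{lem.B.5}-style integral comparison yields $\sum_t B_t \le 2\gamma_0 \ln(1/\kappa)\sqrt{T}$.

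\textbf{Securing the lower bound $\kappa$.} I will derive $\kappa$ from the definition of $K_t$. By bounded gradients and the argument of \cref{lem.B.6}, $|g_t - \widehat m_t| \le 2G_\infty$. Hence
\begin{equation*}
K_t \ge \frac{\widehat s_t}{\widehat s_t + 4G_\infty^2 + \varepsilon},
\end{equation*}
so it suffices that $\widehat s_t \ge s_{\min} > 0$.

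\textbf{Main obstacle.} The difficulty is exactly this lower bound on $\widehat s_t$. The quantity $\widehat s_t$ could in principle degenerate, for instance when the gradients are constant. Without a lower bound, the factor $\ln(1/\kappa)$ blows up, and the base-change increments are only bounded by $1$, which gives $\mathcal{O}(T)$.

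I would handle this in one of two ways. The first option is to add the standard AdaBelief-style stabilizer ($s_t$ incremented by $\varepsilon$), which gives $s_{\min} \ge c\,\varepsilon$ with $c$ depending only on $\beta_1, \beta_2$ through the correction factor $(1-\beta_1)(1-\beta_1^{2t})/(1+\beta_1) \ge (1-\beta_1)^2$. The second option is to state a nondegenerate-variance condition $\widehat s_{t,i} \ge s_{\min}$ as part of the hypotheses of the lemma.

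Either way, the constant is $\kappa = s_{\min}/(s_{\min} + 4G_\infty^2 + \varepsilon)$. Combining the two sums then gives $\sum_{t=1}^{T-1} |K_{t+1}^{\gamma_{t+1}} - K_t^{\gamma_t}| \le \tfrac12(1+\log T) + 2\gamma_0 \ln(1/\kappa)\sqrt{T} = \mathcal{O}(\sqrt{T})$, coordinate-wise.
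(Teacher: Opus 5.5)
You split each increment exactly as the paper does: exponent change with the base fixed, plus base change with the exponent fixed. Both of your estimates are correct and sharper than the paper's. Your bound $x^{a}-x^{b}\le 1-a/b\le 1/(2t)$ holds uniformly on $[0,1]$, so part (A) needs no lower bound on $K_t$; the paper instead uses the mean value theorem there and already relies on $K_t\ge\delta$. Your bound $1-\kappa^{\gamma}\le\gamma\ln(1/\kappa)$ gives part (B) with constant $\ln(1/\kappa)$ instead of the paper's $\delta^{\gamma_t-1}\le 1/\delta$, and it does not need $\gamma_0\le 1$. Your diagnosis of the obstacle is also right, and it is exactly where the paper's proof is weak. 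The paper asserts $K_t\in[\delta,1]$ with $\delta=\varepsilon/(G_\infty^2+\varepsilon)$, which would require $\varepsilon$ in the numerator of the gain. With the gain of Algorithm~\ref{alg:SGDF}, $K_t\to 0$ whenever $\widehat s_t\to 0$, for example with constant gradients.

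The gap is that both of your fixes prove a different lemma (a modified algorithm, or an added hypothesis), and for $\beta_2>0$ neither is needed. The missing idea is a one-sided, multiplicative bound: $K_t$ can fall by at most a constant factor per step.
\begin{itemize}
\item Write $\widehat s_t=c_t s_t$. Then $s_{t+1}\ge\beta_2 s_t$ and $c_{t+1}\ge c_t/(1+\beta_2)\ge c_t/2$, so $\widehat s_{t+1}\ge\tfrac{\beta_2}{2}\widehat s_t$.
\item Combined with $\varepsilon\le (g_t-\widehat m_t)^2+\varepsilon\le 4G_\infty^2+\varepsilon$, this gives $K_{t+1}\ge K_t/C$ with $C=1+2(4G_\infty^2+\varepsilon)/(\beta_2\varepsilon)$.
\item Whenever $\tilde K_{t+1}<\tilde K_t$, use $K_{t+1}^{\gamma_{t+1}}\ge K_{t+1}^{\gamma_t}$ to get
\[
\tilde K_t-\tilde K_{t+1}\le K_t^{\gamma_t}\bigl(1-C^{-\gamma_t}\bigr)\le\gamma_t\ln C .
\]
\item Summing, the downward variation is at most $2\gamma_0\ln C\sqrt{T}$. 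Upward minus downward variation telescopes to $\tilde K_T-\tilde K_1\le 1$, so the total variation is at most $1+4\gamma_0\ln C\sqrt{T}$.
\end{itemize}
This argument needs no lower bound on $\widehat s_t$ and no (A)/(B) split.

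Conversely, for $\beta_2=0$, which the statement allows, the lemma is false as stated. Take $\beta_1\in(0,1)$ and alternate two kinds of steps:
\begin{itemize}
\item $g_t=m_{t-1}$, so $s_t=0$ and $\tilde K_t=0$;
\item $g_t=-G_\infty\,\mathrm{sign}(m_{t-1})$, so $\widehat s_t\ge(1-\beta_1)^2\beta_1^2G_\infty^2$, $K_t$ is bounded below by a positive constant, and $\tilde K_t\to 1$.
\end{itemize}
This produces total variation $\Omega(T)$. So an extra condition (your stabilizer, or your lower bound on $\widehat s_t$) is genuinely necessary only in the degenerate case $\beta_2=0$.
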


\begin{proof}
Decomposition of the total variation using the triangle inequality.
\begin{align}
    \sum_{t=1}^{T-1} |K_{t+1}^{\gamma_{t+1}} - K_t^{\gamma_t}| &\leq \sum_{t=1}^{T-1} \left( |K_{t+1}^{\gamma_{t+1}} - K_{t+1}^{\gamma_t}| + |K_{t+1}^{\gamma_t} - K_t^{\gamma_t}| \right) \nonumber \\
    &= \sum_{t=1}^{T-1} \underbrace{|K_{t+1}^{\gamma_{t+1}} - K_{t+1}^{\gamma_t}|}_{\text{Part (A)}} + \sum_{t=1}^{T-1} \underbrace{|K_{t+1}^{\gamma_t} - K_t^{\gamma_t}|}_{\text{Part (B)}}
\end{align}

Bound Part (A) representing the variation in the exponent $\gamma_t$. 
By the Mean Value Theorem for $f(\gamma) = x^\gamma$ where $x \in [\delta, 1]$ and $\delta = \frac{\varepsilon}{G_\infty^2 + \varepsilon}$ (with $\varepsilon > 0$ being a small constant), there exists $\xi_t \in [\gamma_{t+1}, \gamma_t]$ such that:
\begin{align}
    |K_{t+1}^{\gamma_{t+1}} - K_{t+1}^{\gamma_t}| &= |K_{t+1}^{\xi_t} \ln(K_{t+1})| \cdot |\gamma_{t+1} - \gamma_t| \nonumber \\
    &\leq C_{\varepsilon} \cdot \gamma_0 \left( \frac{1}{\sqrt{t}} - \frac{1}{\sqrt{t+1}} \right)
\end{align}
Summing over $t$ yields a telescoping series:
\begin{align}
    \sum_{t=1}^{T-1} |K_{t+1}^{\gamma_{t+1}} - K_{t+1}^{\gamma_t}| &\leq C_{\varepsilon} \gamma_0 \sum_{t=1}^{T-1} \left( \frac{1}{\sqrt{t}} - \frac{1}{\sqrt{t+1}} \right) \nonumber \\
    &= C_{\varepsilon} \gamma_0 \left( 1 - \frac{1}{\sqrt{T}} \right) = \mathcal{O}(1)
\end{align}

Bound Part (B) representing the variation in the base $K_t$. 
Assuming $\gamma_0 \leq 1$, and using the Lipschitz continuity of $x^{\gamma_t}$ on $[\delta, 1]$ with $\gamma_t \in (0, 1]$, we observe:
\begin{align}
    |K_{t+1}^{\gamma_t} - K_t^{\gamma_t}| &\leq \left( \sup_{x \in [\delta, 1]} \gamma_t x^{\gamma_t-1} \right) \cdot |K_{t+1} - K_t| \nonumber \\
    &\leq \gamma_t \delta^{\gamma_t-1} \cdot |K_{t+1} - K_t|
\end{align}
Even if $|K_{t+1} - K_t| = \mathcal{O}(1)$ due to constant $\beta_2$ and lack of smoothness, the decay of $\gamma_t$ ensures:
\begin{align}
    \sum_{t=1}^{T-1} |K_{t+1}^{\gamma_t} - K_t^{\gamma_t}| &\leq \text{Const} \cdot \sum_{t=1}^{T-1} \gamma_t \nonumber \\
    &= \text{Const} \cdot \gamma_0 \sum_{t=1}^{T-1} \frac{1}{\sqrt{t}} \nonumber \\
    &\leq 2 \cdot \text{Const} \cdot \gamma_0 \sqrt{T} = \mathcal{O}(\sqrt{T})
\end{align}

Combine the bounds to achieve the final sublinear variation.
\begin{align}
    \sum_{t=1}^{T-1} |K_{t+1}^{\gamma_{t+1}} - K_t^{\gamma_t}| &\leq \mathcal{O}(1) + \mathcal{O}(\sqrt{T}) \nonumber \\
    &= \mathcal{O}(\sqrt{T})
\end{align}

\end{proof}

\begin{theorem}
	\label{the.A.13}
	Assume that \cref{conas} holds, and \(\beta_1 \in [0, 1)\). Let the learning rate be \(\alpha_t = \alpha/\sqrt{t}\). For all \(T \geq 1\), SGDF achieves the following cumulative regret bound:
	\begin{equation}
		R(T) \leq \sum_{i=1}^{d} \left( \frac{D_\infty^2}{2\alpha} + \alpha G_\infty^2 \frac{1+\beta_1}{1-\beta_1} \right) \sqrt{T} + \sum_{i=1}^{d} \frac{\beta_1 G_\infty D_\infty}{1-\beta_1} \left( 2 + \sum_{t=1}^{T-1} \vert K_{t,i} - K_{t+1,i} \vert \right)
	\end{equation}
	Under the assumption that the total variation of the interpolation parameter is sublinear, \ie, $\sum_{t=1}^{T-1} \vert K_{t,i} - K_{t+1,i} \vert \leq \mathcal{O}(\sqrt{T})$, we have $R(T) \leq \mathcal{O}(\sqrt{T})$. Consequently, the average regret converges to zero: \(\lim_{T \to \infty} \frac{R(T)}{T} = 0\).
\end{theorem}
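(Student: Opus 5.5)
We follow the standard online-convex-optimization template of Kingma and Ba, coordinate by coordinate. By convexity (\cref{lem.B.4}), $R(T)\le\sum_{t=1}^T\sum_{i=1}^d g_{t,i}(\theta_{t,i}-\theta^*_i)$, so it suffices to bound each coordinate's linear regret. The update $\theta_{t+1,i}=\theta_{t,i}-\alpha_t\hat g_{t,i}$ is first expanded into a one-step identity. Squaring the distance to $\theta^*_i$ gives
\begin{equation}
\hat g_{t,i}(\theta_{t,i}-\theta^*_i)=\frac{(\theta_{t,i}-\theta^*_i)^2-(\theta_{t+1,i}-\theta^*_i)^2}{2\alpha_t}+\frac{\alpha_t}{2}\hat g_{t,i}^2 .
\end{equation}
Writing $\hat g_{t,i}=(1-K_{t,i})\widehat m_{t,i}+K_{t,i}g_{t,i}$ and $\widehat m_{t,i}=\frac{\beta_1 m_{t-1,i}+(1-\beta_1)g_{t,i}}{1-\beta_1^t}$, we solve for $g_{t,i}$. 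The identity then expresses $g_{t,i}(\theta_{t,i}-\theta^*_i)$ as a positive multiple of the right-hand side above, minus a cross term. That cross term has the form $c_t(1-K_{t,i})\beta_1 m_{t-1,i}(\theta_{t,i}-\theta^*_i)$, with a coefficient $c_t$ depending on $\beta_1,\beta_1^t,K_{t,i}$. A cleaner alternative is to write $g_{t,i}=\hat g_{t,i}+(g_{t,i}-\hat g_{t,i})$ and note $g_{t,i}-\hat g_{t,i}=(1-K_{t,i})(g_{t,i}-\widehat m_{t,i})$. The deviation $g_{t,i}-\widehat m_{t,i}$ is then expressed via the EMA recursion as $\frac{\beta_1}{1-\beta_1}$ times a difference of consecutive momenta, up to bias-correction factors.

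Step 1, the distance terms. Summing $\frac{(\theta_{t,i}-\theta^*_i)^2-(\theta_{t+1,i}-\theta^*_i)^2}{2\alpha_t}$ over $t$ and applying summation by parts gives at most $\frac{D_\infty^2}{2\alpha_T}=\frac{D_\infty^2}{2\alpha}\sqrt T$. This uses that $1/\alpha_t$ is nondecreasing and that $(\theta_{t,i}-\theta^*_i)^2\le D_\infty^2$ (\cref{conas}).

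Step 2, the squared-step terms. By \cref{lem.B.6}, $|\hat g_{t,i}|\le G_\infty$, and the momentum $|m_{t,i}|$ is bounded similarly. \cref{lem.B.5} then gives $\sum_t\frac{\alpha}{2\sqrt t}\hat g_{t,i}^2\le\alpha G_\infty^2\sqrt T$. Controlling the momentum contribution through Young's inequality $ab\le a^2/2+b^2/2$ yields an extra geometric factor $\sum_k\beta_1^k$-type. These together give the coefficient $\alpha G_\infty^2\frac{1+\beta_1}{1-\beta_1}$.

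Step 3, the cross/momentum term, which we expect to be the main obstacle. We must bound $\sum_t(1-K_{t,i})\frac{\beta_1}{1-\beta_1}(m_{t-1,i}-m_{t,i})(\theta_{t,i}-\theta^*_i)$-type sums without a decaying $\beta_{1,t}$. Our plan is Abel summation in $t$: shift the difference of $m$ onto the weights $(1-K_{t,i})(\theta_{t,i}-\theta^*_i)$. The boundary terms give $2\cdot\frac{\beta_1G_\infty D_\infty}{1-\beta_1}$. The interior terms split into two pieces. The first is $|K_{t,i}-K_{t+1,i}|\,|m_{t,i}|\,|\theta_{t+1,i}-\theta^*_i|\le G_\infty D_\infty|K_{t,i}-K_{t+1,i}|$, producing the total-variation sum. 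The second is $(1-K_{t+1,i})m_{t,i}(\theta_{t+1,i}-\theta_{t,i})=-(1-K_{t+1,i})\alpha_t m_{t,i}\hat g_{t,i}$, which is $O(\alpha G_\infty^2/\sqrt t)$ and is absorbed into the $\sqrt T$ term of Step 2; this absorption is where the $\frac{1+\beta_1}{1-\beta_1}$ constant is tuned. Bias-correction factors $1/(1-\beta_1^t)$ are handled by bounding $|\widehat m_{t,i}|\le G_\infty$, and by noting that their variation telescopes into the $O(1)$ boundary constant. Summing Steps 1–3 over $i$ gives the stated bound. Finally, \cref{lem.B.7} (or directly \cref{conas}) gives $\sum_t|K_{t,i}-K_{t+1,i}|=O(\sqrt T)$, hence $R(T)=O(\sqrt T)$ and $R(T)/T\to0$.
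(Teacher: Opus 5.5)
Your proposal is essentially the paper's proof. It uses the same split $g_{t,i}=\hat g_{t,i}+(1-K_{t,i})(g_{t,i}-\widehat m_{t,i})$, your ``cleaner alternative''; the solve-for-$g_{t,i}$ route you mention first is rightly dropped, since with fluctuating $K_{t,i}$ and constant $\beta_1$ its telescoping would break. It also uses the same telescoping bound $\frac{D_\infty^2}{2\alpha}\sqrt{T}$, and the same Abel summation, whose interior yields the total-variation sum and the $\alpha_t m_{t,i}\hat g_{t,i}$ term that upgrades $\alpha G_\infty^2$ to $\alpha G_\infty^2\frac{1+\beta_1}{1-\beta_1}$.

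The one bookkeeping point is the constant $2$. Because $m_{0}=0$, the Abel summation leaves a single boundary term $\frac{\beta_1}{1-\beta_1}G_\infty D_\infty$, not two. The paper gets the second copy from the separately isolated bias-correction residual $\sum_t(1-K_{t,i})\beta_1^t\widehat m_{t,i}(\theta_{t,i}-\theta^*_i)$, bounded by $\sum_t\beta_1^t G_\infty D_\infty\le\frac{\beta_1}{1-\beta_1}G_\infty D_\infty$. That is exactly what your ``variation of the bias-correction factors'' should account for, so it should not be added on top of two boundary terms.
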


\begin{proof}
Using Lemma~\ref{lem.B.4}, we lower bound the convex functions to establish the regret connection:
\begin{equation}
	f_t(\theta_{t}) - f_t(\theta^*) \leq \langle g_t, \theta_t - \theta^* \rangle = \sum_{i=1}^{d} g_{t,i}(\theta_{t,i} - \theta_{i}^*)
\end{equation}

From Algorithm 1, the update direction incorporates bias correction and interpolation:
\begin{equation}
    \hat{g}_{t,i} = K_{t,i} g_{t,i} + (1-K_{t,i}) \widehat{m}_{t,i} \implies g_{t,i} = \hat{g}_{t,i} + (1-K_{t,i})(g_{t,i} - \widehat{m}_{t,i})
\end{equation}

Recall the momentum definition \(m_{t,i} = \beta_1 m_{t-1,i} + (1-\beta_1) g_{t,i}\), giving \(g_{t,i} - m_{t,i} = \frac{\beta_1}{1-\beta_1}(m_{t,i} - m_{t-1,i})\). Also, \(\widehat{m}_{t,i} = m_{t,i}/(1-\beta_1^t)\). Expanding the difference accurately yields:
\begin{equation}
    \begin{aligned}
        g_{t,i} - \widehat{m}_{t,i} &= (g_{t,i} - m_{t,i}) + \left( m_{t,i} - \frac{m_{t,i}}{1-\beta_1^t} \right) \\
        &= \frac{\beta_1}{1-\beta_1}(m_{t,i} - m_{t-1,i}) - \frac{\beta_1^t}{1-\beta_1^t} m_{t,i}
    \end{aligned}
\end{equation}

Substituting this back, we decompose the inner product into three parts:
\begin{equation}
	\sum_{t=1}^{T} g_{t,i}(\theta_{t,i} - \theta_{i}^*) = \underset{\text{(A)}}{\underbrace{\sum_{t=1}^{T} \hat{g}_{t,i} (\theta_{t,i} - \theta_{i}^*)}} + \underset{\text{(B)}}{\underbrace{ \frac{\beta_1}{1-\beta_1} \sum_{t=1}^{T} (1-K_{t,i}) (m_{t,i} - m_{t-1,i})(\theta_{t,i} - \theta_{i}^*) }} - \underset{\text{(C)}}{\underbrace{ \sum_{t=1}^{T} (1-K_{t,i}) \frac{\beta_1^t}{1-\beta_1^t} m_{t,i}(\theta_{t,i} - \theta_{i}^*) }}
\end{equation}

For part (A), using the parameter update rule \(\theta_{t+1,i} = \theta_{t,i} - \alpha_t \hat{g}_{t,i}\), we have:
\begin{equation}
	\hat{g}_{t,i}(\theta_{t,i} - \theta_{i}^*) = \frac{(\theta_{t,i} - \theta_{i}^*)^2 - (\theta_{t+1,i} - \theta_{i}^*)^2}{2\alpha_t} + \frac{\alpha_t}{2} \hat{g}_{t,i}^2
\end{equation}

Summing over \(T\) and noting \(\alpha_t = \alpha/\sqrt{t}\), we get a telescoping sum:
\begin{equation}
	\begin{aligned}
		\sum_{t=1}^{T} \hat{g}_{t,i}(\theta_{t,i} - \theta_{i}^*) &\leq \frac{D_\infty^2}{2\alpha_1} + \sum_{t=2}^{T} \frac{D_\infty^2}{2} \left( \frac{1}{\alpha_t} - \frac{1}{\alpha_{t-1}} \right) + \frac{\alpha}{2} \sum_{t=1}^{T} \frac{\hat{g}_{t,i}^2}{\sqrt{t}} \\
		&\leq \frac{D_\infty^2 \sqrt{T}}{2\alpha} + \alpha G_\infty^2 \sqrt{T} \quad \text{(Using Lemma \ref{lem.B.5} and \ref{lem.B.6})}
	\end{aligned}
\end{equation}

For part (B), let \(C_{t,i} = 1 - K_{t,i}\). We apply Summation by Parts (Abel transformation):
\begin{equation}
	\begin{aligned}
		&\sum_{t=1}^{T} C_{t,i} (m_{t,i} - m_{t-1,i})(\theta_{t,i} - \theta_{i}^*) \\
		&= C_{T,i} m_{T,i}(\theta_{T,i} - \theta_{i}^*) - C_{1,i} m_{0,i}(\theta_{1,i} - \theta_{i}^*) - \sum_{t=1}^{T-1} m_{t,i} \left[ C_{t+1,i}(\theta_{t+1,i} - \theta_{i}^*) - C_{t,i}(\theta_{t,i} - \theta_{i}^*) \right]
	\end{aligned}
\end{equation}

Since \(m_{0,i}=0\), the boundary term is bounded by \(G_\infty D_\infty\). We rigorously expand the difference inside the summation:
\begin{equation}
	\begin{aligned}
		&C_{t+1,i}(\theta_{t+1,i} - \theta_{i}^*) - C_{t,i}(\theta_{t,i} - \theta_{i}^*) \\
		&= C_{t+1,i}(\theta_{t+1,i} - \theta_{t,i}) + (C_{t+1,i} - C_{t,i})(\theta_{t,i} - \theta_{i}^*) \\
		&= -(1 - K_{t+1,i})\alpha_t \hat{g}_{t,i} + (K_{t,i} - K_{t+1,i})(\theta_{t,i} - \theta_{i}^*)
	\end{aligned}
\end{equation}

Taking the absolute value and substituting back, we obtain:
\begin{equation}
	\begin{aligned}
		\vert \text{(B)} \vert &\leq \frac{\beta_1}{1-\beta_1} \left( G_\infty D_\infty + \sum_{t=1}^{T-1} \vert m_{t,i} \vert \cdot \vert 1-K_{t+1,i} \vert \alpha_t \vert \hat{g}_{t,i} \vert + \sum_{t=1}^{T-1} \vert m_{t,i} \vert \cdot \vert K_{t,i} - K_{t+1,i} \vert \cdot \vert \theta_{t,i} - \theta_{i}^* \vert \right) \\
		&\leq \frac{\beta_1}{1-\beta_1} \left( G_\infty D_\infty + \alpha G_\infty^2 \sum_{t=1}^{T-1} \frac{1}{\sqrt{t}} + G_\infty D_\infty \sum_{t=1}^{T-1} \vert K_{t,i} - K_{t+1,i} \vert \right) \\
		&\leq \frac{\beta_1}{1-\beta_1} \left( G_\infty D_\infty + 2 \alpha G_\infty^2 \sqrt{T} + G_\infty D_\infty \sum_{t=1}^{T-1} \vert K_{t,i} - K_{t+1,i} \vert \right)
	\end{aligned}
\end{equation}

For part (C), the bias correction residual can be tightly bounded. Recall from Lemma \ref{lem.B.6} that $\vert m_{t,i} \vert \leq (1-\beta_1^t)G_\infty$. Substituting this into the expression allows us to exactly cancel the denominator:
\begin{equation}
	\begin{aligned}
		\vert \text{(C)} \vert &\leq \sum_{t=1}^{T} \vert 1-K_{t,i} \vert \frac{\beta_1^t}{1-\beta_1^t} \vert m_{t,i} \vert \vert \theta_{t,i} - \theta_{i}^* \vert \\
		&\leq \sum_{t=1}^{T} 1 \cdot \frac{\beta_1^t}{1-\beta_1^t} (1-\beta_1^t) G_\infty D_\infty \\
		&= G_\infty D_\infty \sum_{t=1}^{T} \beta_1^t \leq \frac{\beta_1}{1-\beta_1} G_\infty D_\infty
	\end{aligned}
\end{equation}

Summing parts (A), (B), and (C) over all $d$ dimensions and grouping the terms by $\sqrt{T}$ and the interpolation total variation, we obtain the highly condensed cumulative regret:
\begin{equation}
	R(T) \leq \sum_{i=1}^{d} \left[ \left( \frac{D_\infty^2}{2\alpha} + \alpha G_\infty^2 \frac{1+\beta_1}{1-\beta_1} \right) \sqrt{T} + \frac{\beta_1 G_\infty D_\infty}{1-\beta_1} \left( 2 + \sum_{t=1}^{T-1} \vert K_{t,i} - K_{t+1,i} \vert \right) \right]
\end{equation}

Given Lemma~\ref{lem.B.7} that \(\sum \vert K_{t,i} - K_{t+1,i} \vert \leq \mathcal{O}(\sqrt{T})\), the cumulative regret satisfies \(R(T) \leq \mathcal{O}(\sqrt{T})\). 

To prove convergence, we evaluate the average regret as \(T \to \infty\):
\begin{equation}
    \lim_{T \to \infty} \frac{R(T)}{T} \leq \lim_{T \to \infty} \frac{\mathcal{O}(\sqrt{T})}{T} = \lim_{T \to \infty} \mathcal{O}\left(\frac{1}{\sqrt{T}}\right) = 0
\end{equation}

\end{proof}

\newpage
\section{Convergence analysis for non-convex stochastic optimization (Theorem 3.3 in main paper).}
\label{non-convex}
We have relaxed the assumption on the objective function, allowing it to be non-convex, and adjusted the criterion for convergence from the statistic $R(T)$ to $\mathbb{E}(T)$. Let's briefly review the assumptions and the criterion for convergence after relaxing the assumption:

\begin{assumption}
\label{nonassume}
	\
\begin{itemize}
    \item A1 Bounded variables (same as convex). $\left\Vert \theta-\theta^{*}\right\Vert _{2}\leq D,\,\,\forall\theta,\theta^{*}$ or for any dimension $i$ of the variable, $\left\Vert \theta_{i}-\theta_{i}^{*}\right\Vert _{2}\leq D_{i},\,\,\forall\theta_{i},\theta_{i}^{*}$
    
    \item A2 The noisy gradient is unbiased. For $\forall t$, the random variable $\zeta_{t} $ is defined as $\zeta_{t}=g_{t}-\nabla f\left(\theta_{t}\right)$, $\zeta_{t}$ satisfy $\mathbb{E}\left[\zeta_{t}\right]=0$, $\mathbb{E}\left[\left\Vert \zeta_{t}\right\Vert _{2}^{2}\right]\leq\sigma^{2} $, and when $t_{1}\neq t_{2}$, $\zeta_{t_{1}}$ and $\zeta_{t_{2}}$ are statistically independent, i.e., $\zeta_{t_{1}} \perp \zeta_{t_{2}}$.
    
    \item A3 Bounded gradient and noisy gradient. At step $t$, the algorithm can access a bounded noisy gradient, and the true gradient is also bounded. $i.e.\ \  \vert  \vert \nabla f(\theta_t ) \vert  \vert \leq G,\  \vert  \vert g_t \vert  \vert \leq G,\ \ \forall t > 1$.
    
	\item A4 The property of function. The objective function $f\left(\theta\right)$ is a global loss function, defined as $f\left(\theta\right)=\lim_{T\longrightarrow\infty}\frac{1}{T}\sum_{t=1}^{T}f_{t}\left(\theta\right)$. Although $f\left(\theta\right)$ is no longer a convex function, it must still be a $L$-smooth function, i.e., it satisfies (1) $f$ is differentiable, $\nabla f $ exists everywhere in the domain; (2) there exists $L>0$ such that for any $\theta_{1}$ and $\theta_{2}$ in the domain, (first definition)
	\begin{equation}
	f\left(\theta_{2}\right)\leq f\left(\theta_{1}\right)+\left\langle \nabla f\left(\theta_{1}\right),\theta_{2}-\theta_{1}\right\rangle +\frac{L}{2}\left\Vert \theta_{2}-\theta_{1}\right\Vert _{2}^{2}
	\end{equation}
	or (second definition)
	\begin{equation}
	\left\Vert \nabla f\left(\theta_{1}\right)-\nabla f\left(\theta_{2}\right)\right\Vert _{2}\leq L\left\Vert \theta_{1}-\theta_{2}\right\Vert _{2}
	\end{equation}
	This condition is also known as \textit{L - Lipschitz}.
\end{itemize}
\end{assumption}

\begin{definition}
	The criterion for convergence is the statistic $\mathbb{E}\left(T\right)$:
	\begin{equation}
		\mathbb{E}\left(T\right)=\min_{t=1,2,\ldots,T}\mathbb{E}_{t}\left[\left\Vert \nabla f\left(\theta_{t}\right)\right\Vert _{2}^{2}\right]
	\end{equation}
	
	When $T\rightarrow\infty$, if the amortized value of $\mathbb{E}\left(T\right)$, $\mathbb{E}\left(T\right)/T\rightarrow0$, we consider such an algorithm to be convergent, and generally, the slower $\mathbb{E}\left(T\right)$ grows with $T$, the faster the algorithm converges.
\end{definition}

\begin{definition}
	Define $\xi_t$ as
	\begin{equation}
		\xi_t =
		\begin{cases} 
			\theta_t & t = 1\\ 
			\theta_t + \frac{\beta_{1}}{1 - \beta_{1}} \left(\theta_t - \theta_{t-1}\right) & t \geq 2 
		\end{cases}
	\end{equation}
\end{definition}

\begin{lemma}
\label{non-assume}
Let \( f \) be an \( L \)-smooth function. Then, for any points \( \xi_t \) and \( \theta_t \), the following inequality holds:
\begin{equation}
    f\left(\xi_{t+1}\right) - f\left(\xi_t\right) \leq \frac{L}{2} \left\Vert \xi_t - \theta_t \right\Vert_2^2 + L \left\Vert \xi_{t+1} - \xi_t \right\Vert_2^2 + \left\langle \nabla f\left(\theta_t\right), \xi_{t+1} - \xi_t \right\rangle
\end{equation}
\end{lemma}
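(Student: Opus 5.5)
The plan is to use the quadratic upper bound from the first definition of $L$-smoothness in Assumption A4, and then move the gradient evaluation point from $\xi_t$ to $\theta_t$ using the Lipschitz (second) definition. Concretely, apply the descent inequality with $\theta_1=\xi_t$ and $\theta_2=\xi_{t+1}$:
\begin{equation}
f(\xi_{t+1}) - f(\xi_t) \leq \left\langle \nabla f(\xi_t), \xi_{t+1}-\xi_t\right\rangle + \frac{L}{2}\left\Vert \xi_{t+1}-\xi_t\right\Vert_2^2 .
\end{equation}

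Next, add and subtract $\nabla f(\theta_t)$ inside the inner product. This splits it as
\begin{equation}
\left\langle \nabla f(\xi_t), \xi_{t+1}-\xi_t\right\rangle = \left\langle \nabla f(\theta_t), \xi_{t+1}-\xi_t\right\rangle + \left\langle \nabla f(\xi_t)-\nabla f(\theta_t), \xi_{t+1}-\xi_t\right\rangle .
\end{equation}
The first piece is exactly the inner-product term in the statement. For the cross term, apply Cauchy--Schwarz and then the Lipschitz property $\|\nabla f(\xi_t)-\nabla f(\theta_t)\|_2\le L\|\xi_t-\theta_t\|_2$. This bounds it by $L\|\xi_t-\theta_t\|_2\,\|\xi_{t+1}-\xi_t\|_2$.

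The only mildly delicate step is splitting this product with Young's inequality so the constants match the stated ones. Use $ab\le \tfrac12 a^2+\tfrac12 b^2$ with $a=\|\xi_t-\theta_t\|_2$ and $b=\|\xi_{t+1}-\xi_t\|_2$, which gives
\begin{equation}
L\|\xi_t-\theta_t\|_2\,\|\xi_{t+1}-\xi_t\|_2 \le \frac{L}{2}\|\xi_t-\theta_t\|_2^2+\frac{L}{2}\|\xi_{t+1}-\xi_t\|_2^2 .
\end{equation}
Adding this to the $\frac{L}{2}\|\xi_{t+1}-\xi_t\|_2^2$ already present from the descent inequality yields exactly $L\|\xi_{t+1}-\xi_t\|_2^2$. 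The result is precisely the claimed bound.

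No property of SGDF or of the specific definition of $\xi_t$ is needed. The inequality holds for arbitrary points, so the proof is purely a consequence of $L$-smoothness.
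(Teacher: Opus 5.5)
Your proof is correct and follows essentially the same route as the paper: the descent inequality with base point $\xi_t$, adding and subtracting $\nabla f(\theta_t)$ in the inner product, and then Young's inequality together with the gradient-Lipschitz bound on the cross term. The only difference is the order of the last two steps: the paper applies Young's inequality to the rescaled vectors $\frac{1}{\sqrt{L}}\left(\nabla f(\xi_t)-\nabla f(\theta_t)\right)$ and $\sqrt{L}\left(\xi_{t+1}-\xi_t\right)$ before invoking Lipschitz continuity, whereas you use Cauchy--Schwarz and Lipschitz first and then the scalar inequality $ab\le\frac12 a^2+\frac12 b^2$, arriving at identical constants.
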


\begin{proof}

Since $f$ is an L-smooth function, 
\begin{equation}
	\left\Vert \nabla f\left(\xi_t\right)-\nabla f\left(\theta_t\right)\right\Vert _{2}^{2}\leq L^{2}\left\Vert \xi_t-\theta_t\right\Vert _{2}^{2}
\end{equation}

Thus,
\begin{equation}
	\begin{aligned} 
		f\left(\xi_{t+1}\right) &- f\left(\xi_t\right) \\ 
		\leq & \left\langle \nabla f\left(\xi_t\right), \xi_{t+1} - \xi_t\right\rangle + \frac{L}{2} \left\Vert \xi_{t+1} - \xi_t\right\Vert_{2}^{2} \\ 
		= & \left\langle \frac{1}{\sqrt{L}}\left(\nabla f\left(\xi_t\right) - \nabla f\left(\theta_t\right)\right), \sqrt{L}\left(\xi_{t+1} - \xi_t\right)\right\rangle + \left\langle \nabla f\left(\theta_t\right), \xi_{t+1} - \xi_t\right\rangle + \frac{L}{2} \left\Vert \xi_{t+1} - \xi_t\right\Vert_{2}^{2} \\ 
		\leq & \frac{1}{2}\left(\frac{1}{L}\left\Vert \nabla f\left(\xi_t\right) - \nabla f\left(\theta_t\right)\right\Vert_{2}^{2} + L\left\Vert \xi_{t+1} - \xi_t\right\Vert_{2}^{2}\right) + \left\langle \nabla f\left(\theta_t\right), \xi_{t+1} - \xi_t\right\rangle + \frac{L}{2} \left\Vert \xi_{t+1} - \xi_t\right\Vert_{2}^{2} \\ 
		\leq & \frac{1}{2L}\left\Vert \nabla f\left(\xi_t\right) - \nabla f\left(\theta_t\right)\right\Vert_{2}^{2} + L\left\Vert \xi_{t+1} - \xi_t\right\Vert_{2}^{2} + \left\langle \nabla f\left(\theta_t\right), \xi_{t+1} - \xi_t\right\rangle \\ 
		\leq & \frac{1}{2L}L^{2}\left\Vert \xi_t - \theta_t\right\Vert_{2}^{2} + L\left\Vert \xi_{t+1} - \xi_t\right\Vert_{2}^{2} + \left\langle \nabla f\left(\theta_t\right), \xi_{t+1} - \xi_t\right\rangle \\ 
		= & \frac{L}{2}\underset{\left(1\right)}{\underbrace{\left\Vert \xi_t-\theta_t\right\Vert _{2}^{2}}}+L\underset{\left(2\right)}{\underbrace{\left\Vert \xi_{t+1}-\xi_t\right\Vert _{2}^{2}}}+\underset{\left(3\right)}{\underbrace{\left\langle \nabla f\left(\theta_t\right),\xi_{t+1}-\xi_t\right\rangle }}
	\end{aligned}
\end{equation}

\end{proof}

\begin{theorem}
	Consider a non-convex optimization problem. Suppose \cref{nonassume} are satisfied, and let \( \alpha_t = \alpha/\sqrt{t} \). For all \( T \geq 1 \), SGDF achieves the following guarantee:
	\begin{equation}
		\mathbb{E}(T) \leq \frac{C_{7}\alpha^2 (\log T + 1) + C_{8}}{\alpha\sqrt{T}}
	\end{equation}
	where $\mathbb{E}(T) =\min_{t=1,2,\ldots,T}\mathbb{E}_{t-1}\left[\left\Vert \nabla f\left(\theta_{t}\right)\right\Vert _{2}^{2}\right]$ denotes the minimum of the squared-paradigm expectation of the gradient, $\alpha$ is the learning rate at the $1$-th step, $C_{7}$ are constants independent of $d$ and $T$, $C_{8}$ is a constant independent of $T$, and the expectation is taken w.r.t all randomness corresponding to ${g_{t}}$.
\end{theorem}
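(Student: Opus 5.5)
The plan follows the AdaBelief/Adam-type non-convex template. The auxiliary sequence $\xi_t$ and the smoothness inequality of \cref{non-assume} are already set up for this. We sum that inequality from $t=1$ to $T$ and telescope the left side to $f(\xi_{T+1})-f(\xi_1)$. This quantity is bounded below by $f^\ast - f(\theta_1)$ because $f$ is lower bounded (A4). The work is then to show that term (3), $\langle \nabla f(\theta_t),\xi_{t+1}-\xi_t\rangle$, contains a negative term $-c\,\alpha_t\|\nabla f(\theta_t)\|_2^2$, while terms (1), (2) and the remaining pieces of (3) are $\mathcal{O}(\alpha_t^2)$ or telescoping.

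First we compute $\xi_{t+1}-\xi_t$ from the update $\theta_{t+1}=\theta_t-\alpha_t\hat g_t$ with $\hat g_t=(1-K_t)\widehat m_t+K_t g_t$. Since $\|K_t\|_\infty\le1$ (also for the power-scaled gain, as argued in \cref{proof_scaling}), we split
\[
\hat g_t=\tfrac{1}{1-\beta_1^t}\bigl[\beta_1 m_{t-1}+(1-\beta_1)g_t\bigr]+K_t\odot(g_t-\widehat m_t).
\]
The momentum part combines with $\frac{\beta_1}{1-\beta_1}(\theta_{t+1}-\theta_t)-\frac{\beta_1}{1-\beta_1}(\theta_t-\theta_{t-1})$ in the usual way. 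What remains is $\xi_{t+1}-\xi_t=-\alpha_t g_t+R_t$, where $R_t$ collects three kinds of term:
\begin{itemize}
\item terms proportional to $(\alpha_{t}-\alpha_{t-1})\widehat m_{t-1}$;
\item bias-correction residuals of size $\beta_1^t G\alpha_t$;
\item gain-dependent terms $\alpha_t K_t\odot(g_t-\widehat m_t)$.
\end{itemize}
Each of these is bounded in norm by a constant times $\alpha_{t-1}-\alpha_t$, $\alpha_t\beta_1^t$, or $\alpha_t$ respectively, using $\|\widehat m_t\|,\|\hat g_t\|\le G$ (\cref{lem.B.6}).

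Next we take the conditional expectation $\mathbb{E}_{t-1}$ of term (3). By A2, $\mathbb{E}_{t-1}\langle\nabla f(\theta_t),-\alpha_t g_t\rangle=-\alpha_t\|\nabla f(\theta_t)\|^2$, because $\theta_t$ is determined before $g_t$ is drawn. Terms (1) and (2) are handled as follows:
\begin{itemize}
\item $\|\xi_t-\theta_t\|^2=\frac{\beta_1^2}{(1-\beta_1)^2}\alpha_{t-1}^2\|\hat g_{t-1}\|^2\le C\alpha_{t-1}^2G^2$;
\item $\|\xi_{t+1}-\xi_t\|^2\le C'\alpha_{t-1}^2 G^2$.
\end{itemize}
Summing, $\sum_t\alpha_t^2=\alpha^2\sum 1/t\le\alpha^2(\log T+1)$ gives the $C_7\alpha^2(\log T+1)$ contribution. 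The $(\alpha_{t-1}-\alpha_t)$ terms telescope to at most $\alpha G^2$. The $\beta_1^t$ terms sum to a constant, and together with the initial gap these form $C_8$, which depends on $d$ through $D_i$ and $G$.

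The main obstacle is the gain term $\alpha_t\langle\nabla f(\theta_t),K_t\odot(g_t-\widehat m_t)\rangle$. It is first order in $\alpha_t$ and does not vanish in expectation, because $K_t$ depends on $g_t$. Bounding it crudely by $\alpha_t G\cdot 2G$ would destroy the rate. I would first try to absorb it into the descent term. Writing $g_t-\widehat m_t = (1-\beta_1)(g_t-m_{t-1})/(1-\beta_1^t)+\dots$, the interpolation $\hat g_t$ is coordinatewise a convex combination whose $g_t$-weight is at least $(1-\beta_1)/(1-\beta_1^t)\ge 1-\beta_1$. So after recentering at $\theta_t$ one hopes for a descent coefficient $-(1-\beta_1)\alpha_t\|\nabla f(\theta_t)\|^2$, with the cross terms $\alpha_t\langle\nabla f(\theta_t),K_t\odot(\widehat m_t-\nabla f(\theta_t))\rangle$ controlled. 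Those cross terms would be controlled by the Lipschitz bound $\|\widehat m_t-\nabla f(\theta_t)\|\lesssim L\sum_s\beta_1^{t-s}\|\theta_s-\theta_t\|+\text{noise}$, after dropping $K_t$ to its maximal value 1, as the main text indicates. The noise part is handled by independence (A2) and Young's inequality, again giving $\mathcal{O}(\alpha_t^2)$ plus a fraction of the descent term. If this absorption fails, the fallback is to assume the $g_t$-dependence of $K_t$ is negligible by evaluating $K_t$ with $\widehat s_{t-1}$, which makes it $\mathcal{F}_{t-1}$-measurable.

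Finally, we rearrange to obtain $\sum_t c\,\alpha_t\mathbb{E}\|\nabla f(\theta_t)\|^2\le C_7\alpha^2(\log T+1)+C_8$. Lower-bounding the left side by $c\,\alpha\sqrt{T}\min_t\mathbb{E}\|\nabla f(\theta_t)\|^2$, since $\sum_{t\le T}1/\sqrt t\ge\sqrt T$, and absorbing $c$ into the constants gives the stated bound.
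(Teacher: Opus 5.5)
\textbf{Gap: the first-order gain term in (3) is not controlled, and neither the absorption nor the fallback closes it.}

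Your skeleton matches the paper's: the auxiliary sequence $\xi_t$, the three-term smoothness bound, telescoping against the lower bound on $f$, and the final step $\sum_t\alpha_t\ge\alpha\sqrt T$. Your handling of terms (1) and (2) is sound. Bounding $\|\xi_{t+1}-\xi_t\|_2=\mathcal{O}(\alpha_{t-1})$ directly from $\xi_{t+1}-\xi_t=\frac{-\alpha_t\hat g_t+\beta_1\alpha_{t-1}\hat g_{t-1}}{1-\beta_1}$ is even simpler than the paper's expansion.

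The gap is the step you flag yourself. After recentering, the cross term $\alpha_t\langle\nabla f(\theta_t),K_t\odot(\widehat m_t-\nabla f(\theta_t))\rangle$ contains $\alpha_t\langle\nabla f(\theta_t),K_t\odot\bar\zeta_t\rangle$, where $\bar\zeta_t=\frac{1-\beta_1}{1-\beta_1^t}\sum_{s\le t}\beta_1^{t-s}\zeta_s$. This averaged noise does not shrink with $\alpha_t$, and each way you propose to handle it fails:
\begin{itemize}
\item A2 cannot make its expectation vanish. $K_t$ is a nonlinear function of the same $\zeta_s$, through $m_t$, $s_t$ and $(g_t-\widehat m_t)^2$.
\item Replacing $K_t$ by its maximal value $1$ is only legitimate after taking absolute values, which destroys any cancellation.
\item Young's inequality leaves $\frac{\alpha_t}{2c}\mathbb{E}\|\bar\zeta_t\|_2^2$, which is of order $\alpha_t\sigma^2$, not $\alpha_t^2$. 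Summed over $t$ this is of order $\alpha\sigma^2\sqrt T$, so after dividing by $\alpha\sqrt T$ you are left with an $\mathcal{O}(\sigma^2)$ floor instead of the stated rate.
\item The fallback fails too. Using $\widehat s_{t-1}$ does not make $K_t$ $\mathcal F_{t-1}$-measurable, because $(g_t-\widehat m_t)^2$ stays in its denominator. Even a genuinely predictable gain would still be correlated with the past noise inside $\widehat m_t$, so your decomposition would still not close.
\end{itemize}

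\textbf{What the paper does instead.} It never pairs $K_t$ with the averaged noise. With $\eta_t=\alpha_t/(1-\beta_1^t)$ it regroups
\begin{align*}
\xi_{t+1}-\xi_t={}&-\tfrac{\beta_1}{1-\beta_1}\,m_{t-1}\odot\bigl(\eta_t(1-K_t)-\eta_{t-1}(1-K_{t-1})\bigr)\\
&-\Bigl(\eta_t(1-K_t)+\tfrac{\alpha_tK_t}{1-\beta_1}\Bigr)\odot g_t+\tfrac{\beta_1\alpha_{t-1}}{1-\beta_1}K_{t-1}\odot g_{t-1}.
\end{align*}
In this form the three pieces are handled as follows.
\begin{itemize}
\item The gain only enlarges the coefficient of $g_t$, so the descent is at least $\eta_t\|\nabla f(\theta_t)\|_2^2$.
\item The $g_{t-1}$ term is split by Young's inequality and absorbed into the descent under the imposed condition $\eta_t-\frac{\beta_1(\alpha_{t-1}+\alpha_t)}{2(1-\beta_1)}\ge\frac{\eta_t}{2}$.
\item The $m_{t-1}$ term is bounded deterministically by H\"older. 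The gain then costs only its step-to-step change $F_t=\sum_i|K_{t-1,i}-K_{t,i}|$, and $\sum_t\eta_{t-1}F_t\le\frac{\alpha_1}{1-\beta_1}S_{\text{total}}$ is placed in the $T$-independent constant $C_8$.
\end{itemize}
That is the idea your route lacks: charge the gain to its \emph{temporal variation}, not to its magnitude.

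Be aware that the paper completes this step only with hypotheses outside A1--A4. It treats $K_{t,i}\in\{0,1\}$. It discards the $\zeta_t$ and $\zeta_{t-1}$ inner products as if $K$ were independent of the noise. It also needs $S_{\text{total}}$ bounded uniformly in $T$. Your observation that $K_t$ depends on $g_t$ is exactly what the paper glosses over. To finish either route you must add an assumption of this kind, for example a predictable gain whose accumulated variation $\sum_t\eta_{t-1}F_t$ stays bounded, rather than derive it from A1--A4.
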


\begin{proof}

According to \cref{non-assume}, we deal with the three terms (1), (2), and (3) separately.

\paragraph{Bounding Term (1):}

When $t=1$, $\left\Vert \xi_{t}-\theta_{t}\right\Vert _{2}^{2}=0$

When $t\geq2$,
\begin{equation}
\begin{aligned}
	& \left\Vert \xi_{t}-\theta_{t}\right\Vert _{2}^{2}=\left\Vert \frac{\beta_{1}}{1-\beta_{1}}\left(\theta_{t}-\theta_{t-1}\right)\right\Vert _{2}^{2}\\
	& =\frac{\beta_{1}^{2}}{\left(1-\beta_{1}\right)^{2}}\alpha_{t-1}^{2}\left\Vert \hat{g}_{t-1}\right\Vert _{2}^{2}\\
	&=\frac{\beta_{1}^{2}}{\left(1-\beta_{1}\right)^{2}}\alpha_{t-1}^{2}\sum_{i=1}^{d}\left( \left(1-K_{t-1,i}\right)\left(\widehat{m}_{t-1,i}\right)^{2}+K_{t-1,i} g_{t-1,i}^2 \right)\\
	&\overset{\left(a\right)}{\leq}\frac{\beta_{1}^{2}}{\left(1-\beta_{1}\right)^{2}}\alpha_{t-1}^{2}\sum_{i=1}^{d}G_{i}^{2}
\end{aligned}
\end{equation}

Where (a) holds because for any $t$:
\begin{enumerate}
	\item \( \left|\widehat{m}_{t,i}\right|\leq\frac{1}{1-\beta_{1}^{t}}\sum_{s=1}^{t}\left(1-\beta_{1}\right)\beta_{1}^{t-s}\left|g_{s,i}\right|\leq\frac{1}{1-\beta_{1}^{t}}\sum_{s=1}^{t}\left(1-\beta_{1}\right)\beta_{1}^{t-s}G_{i}=G_{i} 
	\).
	\item $\| g_{t} \|_2 \leq G,\, \forall t$, or for any dimension of the variable $i$: $\| g_{t,i} \|_2 \leq G_{i},\, \forall t$
\end{enumerate}

\paragraph{Bounding Term (2):}

For the initial iteration $t=1$, we have:
\begin{equation}
\begin{aligned}
	\xi_{2}-\xi_1=&\theta_{2}+\frac{\beta_{1}}{1-\beta_{1}}\left(\theta_{2}-\theta_1\right)-\theta_1\\ 
	= & \frac{1}{1-\beta_{1}}\left(\theta_{2}-\theta_1\right)\\ 
	= & -\frac{\alpha_1}{1-\beta_{1}}\left(\hat{g}_1\right)\\ 
	= & -\frac{\alpha_1}{1-\beta_{1}} \left(\frac{1-K_1}{1-\beta_{1}}m_1+K_{1} g_{1}\right)\\ 
	= &-\frac{\alpha_1}{1-\beta_{1}}\frac{1-K_1}{1-\beta_{1}}\left(\beta_{1}\cancelto{0}{m_{0}}+\left(1-\beta_{1}\right)g_{1}\right)-\frac{\alpha_1}{1-\beta_{1}}K_{1} g_{1}\\ 
	= & -\frac{\alpha_1\left(1-K_1\right)}{1-\beta_{1}}g_{1} -\frac{\alpha_1 K_{1}}{1-\beta_{1}} g_{1}\\
	= & -\frac{\alpha_1}{1-\beta_{1}}g_{1} 
\end{aligned}
\end{equation}

Consequently, the squared $\ell_2$-norm can be bounded as follows:
\begin{equation}
\begin{aligned}
	\left\Vert \xi_{2}-\xi_{1}\right\Vert _{2}^{2}= & \left\Vert -\frac{\alpha_{1}}{1-\beta_{1}}g_{1}\right\Vert _{2}^{2}\\ 
	= & \frac{\alpha_{1}^2}{(1-\beta_{1})^2} \|g_{1}\|_2^2\\
	= & \frac{\alpha_{1}^{2}}{\left(1-\beta_{1}\right)^{2}}\sum_{i=1}^{d}g_{1,i}^{2}\\ 
	\leq & \frac{\alpha_{1}^{2}}{\left(1-\beta_{1}\right)^{2}}\sum_{i=1}^{d}G_{i}^{2}
\end{aligned}
\end{equation}

For subsequent iterations $t\geq2$, the difference between consecutive auxiliary variables expands as:
\begin{equation}
\begin{aligned}
	\xi_{t+1}-\xi_t=&\theta_{t+1}+\frac{\beta_{1}}{1-\beta_{1}}\left(\theta_{t+1}-\theta_t\right) -\theta_t-\frac{\beta_{1}}{1-\beta_{1}}\left(\theta_t-\theta_{t-1}\right)\\ 
	=&\frac{1}{1-\beta_{1}}\left(\theta_{t+1}-\theta_t\right)-\frac{\beta_{1}}{1-\beta_{1}}\left(\theta_t-\theta_{t-1}\right) 
\end{aligned}
\end{equation}

Recalling the parameter update rule, the difference $\theta_{t+1}-\theta_t$ is given by:
\begin{equation}
\begin{aligned}
	\theta_{t+1}-\theta_t= & -\alpha_t\hat{g}_t\\ 
	=& -\frac{\alpha_t(1-K_{t})}{1-\beta_{1}^{t}}m_t - \alpha_t K_{t}g_{t} \\ 
	=&-\frac{\alpha_t(1-K_{t})}{1-\beta_{1}^{t}}\left(\beta_{1}m_{t-1}+\left(1-\beta_{1}\right)g_{t}\right)- \alpha_t K_{t}g_{t}
\end{aligned}
\end{equation}

Substituting this expression back into the expansion of $\xi_{t+1}-\xi_t$ and rearranging the terms, we obtain:
\begin{equation}
\begin{aligned} 
	& \xi_{t+1}-\xi_t\\ 
	= & \frac{1}{1-\beta_{1}}\left(-\frac{\alpha_t(1-K_{t})}{1-\beta_{1}^{t}}\left(\beta_{1}m_{t-1}+\left(1-\beta_{1}\right)g_{t}\right)- \alpha_t K_{t}g_{t}\right) \\
	&-\frac{\beta_{1}}{1-\beta_{1}}\left(-\frac{\alpha_{t-1}(1-K_{t-1})}{1-\beta_{1}^{t-1}}m_{t-1}- \alpha_{t-1}K_{t-1}g_{t-1}\right)\\ 
	=&-\frac{\beta_{1}}{1-\beta_{1}}m_{t-1}\odot\left(\frac{\alpha_t(1-K_{t})}{1-\beta_{1}^{t}}-\frac{\alpha_{t-1}(1-K_{t-1})}{1-\beta_{1}^{t-1}}\right) \\
	&-\left(\frac{\alpha_t(1-K_{t})}{1-\beta_{1}^{t}} + \frac{\alpha_t K_{t}}{1-\beta_{1}}\right)g_{t}  + \frac{\beta_{1}\alpha_{t-1}K_{t-1}}{1-\beta_{1}}g_{t-1}
\end{aligned}
\end{equation}

Using the general inequality $\Vert A + B + C \Vert_2^2 \leq 3\Vert A \Vert_2^2 + 3\Vert B \Vert_2^2 + 3\Vert C \Vert_2^2$, we have:
\begin{equation}
\begin{aligned} 
	\left\Vert \xi_{t+1}-\xi_t\right\Vert _{2}^{2}&\leq 3\left\Vert -\frac{\beta_{1}}{1-\beta_{1}}m_{t-1}\odot\left(\frac{\alpha_t(1-K_{t})}{1-\beta_{1}^{t}} -\frac{\alpha_{t-1}(1-K_{t-1})}{1-\beta_{1}^{t-1}}\right)\right\Vert _{2}^{2}\\
	&+3\left\Vert -\left(\frac{\alpha_t(1-K_{t})}{1-\beta_{1}^{t}} + \frac{\alpha_tK_{t}}{1-\beta_{1}}\right)g_{t}\right\Vert _{2}^{2} + 3\left\Vert \frac{\beta_{1}\alpha_{t-1}K_{t-1}}{1-\beta_{1}}g_{t-1} \right\Vert_{2}^{2}
\end{aligned}
\end{equation}

To properly decouple the step size decay from the dynamic mask flipping, let $\eta_t = \frac{\alpha_t}{1-\beta_1^t}$. We can decompose the mask difference algebraically by adding and subtracting $\eta_{t-1}K_{t}$:
\begin{equation}
\begin{aligned}
	\eta_t(1-K_t) - \eta_{t-1}(1-K_{t-1}) 
	&= \eta_t - \eta_t K_t - \eta_{t-1} + \eta_{t-1} K_{t-1} \\
	&= (\eta_t - \eta_{t-1}) - \eta_t K_t + \eta_{t-1} K_t - \eta_{t-1} K_t + \eta_{t-1} K_{t-1} \\
	&= (\eta_t - \eta_{t-1})(1-K_t) + \eta_{t-1}(K_{t-1} - K_t)
\end{aligned}
\end{equation}

Using the inequality $\Vert A + B \Vert_2^2 \leq 2\Vert A \Vert_2^2 + 2\Vert B \Vert_2^2$, we bound the squared $\ell_2$ norm of this difference. Since $K_{t,i} \in \{0,1\}$, we have $(1-K_{t,i})^2 \leq 1$ and $(K_{t-1,i} - K_{t,i})^2 = |K_{t-1,i} - K_{t,i}|$:
\begin{equation}
\begin{aligned}
	\left\Vert \frac{\alpha_t(1-K_{t})}{1-\beta_{1}^{t}} -\frac{\alpha_{t-1}(1-K_{t-1})}{1-\beta_{1}^{t-1}} \right\Vert_2^2 
	&= \sum_{i=1}^d \left( (\eta_{t} - \eta_{t-1})(1-K_{t,i}) + \eta_{t-1}(K_{t-1,i} - K_{t,i}) \right)^2 \\
	&\leq 2 \sum_{i=1}^d (\eta_{t-1} - \eta_t)^2 (1-K_{t,i})^2 + 2 \sum_{i=1}^d \eta_{t-1}^2 (K_{t-1,i} - K_{t,i})^2 \\
	&\leq 2 \sum_{i=1}^d (\eta_{t-1} - \eta_t)^2 + 2\eta_{t-1}^2 \sum_{i=1}^d |K_{t-1,i} - K_{t,i}| 
\end{aligned}
\end{equation}

Since $\eta_t$ is monotonically decreasing, $\eta_{t-1} - \eta_t \geq 0$, and thus $(\eta_{t-1} - \eta_t)^2 \leq \eta_1(\eta_{t-1} - \eta_t)$. Let $F_t = \sum_{i=1}^d |K_{t-1,i} - K_{t,i}|$ denote the total number of flipped mask bits at step $t$. To maintain a tight bound, we retain $F_t$:
\begin{equation}
\begin{aligned}
	\left\Vert \frac{\alpha_t(1-K_{t})}{1-\beta_{1}^{t}} -\frac{\alpha_{t-1}(1-K_{t-1})}{1-\beta_{1}^{t-1}} \right\Vert_2^2
	&\leq 2d\eta_1 (\eta_{t-1} - \eta_t) + 2\eta_{t-1}^2 F_t \\
	&\leq 2d\eta_1 \left(\frac{\alpha_{t-1}}{1-\beta_1^{t-1}} - \frac{\alpha_t}{1-\beta_1^t}\right) + 2F_t\left(\frac{\alpha_{t-1}}{1-\beta_1^{t-1}}\right)^2
\end{aligned}
\end{equation}

Now, bounding the three terms of $\left\Vert \xi_{t+1}-\xi_t\right\Vert _{2}^{2}$ respectively. For the first term, since $|m_{t-1,i}| \leq G_i$:
\begin{equation}
\begin{aligned}
	&3\left\Vert \frac{\beta_{1}}{1-\beta_{1}}m_{t-1}\odot\left(\frac{\alpha_t(1-K_{t})}{1-\beta_{1}^{t}}-\frac{\alpha_{t-1}(1-K_{t-1})}{1-\beta_{1}^{t-1}}\right)\right\Vert _{2}^{2}\\
	&\leq 3\frac{\beta_1^2}{(1-\beta_1)^2} \left(\max_i G_i\right)^2 \left( 2d\eta_1 \left(\frac{\alpha_{t-1}}{1-\beta_1^{t-1}} - \frac{\alpha_t}{1-\beta_1^t}\right) + 2F_t\left(\frac{\alpha_{t-1}}{1-\beta_1^{t-1}}\right)^2 \right) \\
	&= 6d\frac{\beta_1^2 \eta_1}{(1-\beta_1)^2} \left(\max_i G_i\right)^2 \left(\frac{\alpha_{t-1}}{1-\beta_1^{t-1}} - \frac{\alpha_t}{1-\beta_1^t}\right)  + 6F_t\frac{\beta_1^2}{(1-\beta_1)^2} \left(\max_i G_i\right)^2 \left(\frac{\alpha_{t-1}}{1-\beta_1^{t-1}}\right)^2
\end{aligned}
\end{equation}

For the second term, notice that $1-\beta_1 \leq 1-\beta_1^t$, which implies $\frac{1}{1-\beta_1} \geq \frac{1}{1-\beta_1^t}$. Since the dimensions masked by $K_t$ and $1-K_t$ are completely disjoint, we safely bound the disjoint coefficients using the globally larger denominator $\frac{1}{1-\beta_1}$:
\begin{equation}
\begin{aligned}
	3\left\Vert \left(\frac{\alpha_t(1-K_{t})}{1-\beta_{1}^{t}} + \frac{\alpha_tK_{t}}{1-\beta_{1}}\right)g_{t}\right\Vert _{2}^{2}
	&= 3 \sum_{i=1}^d \left( \frac{\alpha_t(1-K_{t,i})}{1-\beta_1^t} + \frac{\alpha_t K_{t,i}}{1-\beta_1} \right)^2 g_{t,i}^2 \\
	&\leq 3 \left( \frac{\alpha_t}{1-\beta_1} \right)^2 \sum_{i=1}^d G_i^2
\end{aligned}
\end{equation}

For the third term, we maintain the exact $\beta_1$ constant multiplier to prevent invalid bounding:
\begin{equation}
\begin{aligned}
	3\left\Vert \frac{\beta_{1}\alpha_{t-1}K_{t-1}}{1-\beta_{1}}g_{t-1} \right\Vert_{2}^{2}
	&\leq 3 \frac{\beta_1^2 \alpha_{t-1}^2}{(1-\beta_1)^2} \sum_{i=1}^d G_i^2
\end{aligned}
\end{equation}

Therefore, bringing everything together, we obtain the corrected upper bound:
\begin{equation}
\begin{aligned} 
	 \left\Vert \xi_{t+1}-\xi_{t}\right\Vert _{2}^{2}
	\leq & \quad 6d\frac{\beta_1^2 \eta_1}{(1-\beta_1)^2} \left(\max_i G_i\right)^2 \left(\frac{\alpha_{t-1}}{1-\beta_1^{t-1}} - \frac{\alpha_t}{1-\beta_1^t}\right) \\
	&+ 6F_t\frac{\beta_1^2}{(1-\beta_1)^2} \left(\max_i G_i\right)^2 \left(\frac{\alpha_{t-1}}{1-\beta_1^{t-1}}\right)^2 \\
	&+ 3 \left(\frac{\alpha_t}{1-\beta_1}\right)^2 \sum_{i=1}^d G_i^2 + 3 \frac{\beta_1^2 \alpha_{t-1}^2}{(1-\beta_1)^2} \sum_{i=1}^d G_i^2
\end{aligned}
\end{equation}

\paragraph{Bounding Term (3):}

When $t=1$, referring to the case of $t=1$ in the previous subsection, we expand the inner product:
\begin{equation}
\begin{aligned} 
	 \left\langle \nabla f\left(\theta_1\right),\xi_{2}-\xi_1\right\rangle 
	= & \left\langle \nabla f\left(\theta_1\right),-\frac{\alpha_1}{1-\beta_{1}}g_1\right\rangle \\ 
	= & \left\langle \nabla f\left(\theta_1\right),-\frac{\alpha_1}{1-\beta_{1}}\nabla f\left(\theta_1\right)\right\rangle  +\left\langle \nabla f\left(\theta_1\right),-\frac{\alpha_1}{1-\beta_{1}}\zeta_1\right\rangle \\
	= & -\frac{\alpha_1}{1-\beta_{1}}\left\Vert\nabla f\left(\theta_1\right)\right\Vert _{2}^{2} +\left\langle \nabla f\left(\theta_1\right),-\frac{\alpha_1}{1-\beta_{1}}\zeta_1\right\rangle 
\end{aligned}
\end{equation}

Note that we leave the inner product with the zero-mean noise $\zeta_1$ exactly as it is, because it will vanish when taking the conditional expectation later.

When $t\geq2$,
\begin{equation}
\begin{aligned} 
	& \left\langle \nabla f\left(\theta_t\right),\xi_{t+1}-\xi_t\right\rangle \\
	= & \left\langle \nabla f\left(\theta_t\right),-\frac{\beta_{1}}{1-\beta_{1}}m_{t-1}\odot\left(\frac{\alpha_t(1-K_{t})}{1-\beta_{1}^{t}}-\frac{\alpha_{t-1}(1-K_{t-1})}{1-\beta_{1}^{t-1}}\right)\right\rangle \\ 
	& +\left\langle \nabla f\left(\theta_t\right),-\left(\frac{\alpha_t(1-K_{t})}{1-\beta_{1}^{t}} + \frac{\alpha_tK_{t}}{1-\beta_{1}}\right)\nabla f\left(\theta_t\right)\right\rangle +\left\langle \nabla f\left(\theta_t\right),-\left(\frac{\alpha_t(1-K_{t})}{1-\beta_{1}^{t}} + \frac{\alpha_tK_{t}}{1-\beta_{1}}\right)\zeta_t\right\rangle \\
	& +\left\langle \nabla f\left(\theta_{t}\right),\frac{\beta_{1}\alpha_{t-1}K_{t-1}}{1-\beta_{1}}\nabla f\left(\theta_{t-1}\right)\right\rangle +\left\langle \nabla f\left(\theta_{t}\right),\frac{\beta_{1}\alpha_{t-1}K_{t-1}}{1-\beta_{1}}\zeta_{t-1}\right\rangle
\end{aligned}
\end{equation}

Let us deal with the terms after the equal sign separately. 

Start by looking at the first term. We decouple the step size decay and dynamic mask flipping using the identity $\eta_t(1-K_t) - \eta_{t-1}(1-K_{t-1}) = (\eta_t - \eta_{t-1})(1-K_t) + \eta_{t-1}(K_{t-1} - K_t)$ and Hölder's inequality ($|\langle A, B \rangle| \leq \Vert A \Vert_\infty \Vert B \Vert_1$):
\begin{equation}
\begin{aligned} 
	& \left\langle \nabla f\left(\theta_t\right),-\frac{\beta_{1}}{1-\beta_{1}}m_{t-1}\odot\left(\frac{\alpha_t(1-K_{t})}{1-\beta_{1}^{t}}-\frac{\alpha_{t-1}(1-K_{t-1})}{1-\beta_{1}^{t-1}}\right)\right\rangle \\ 
	\leq & \frac{\beta_{1}}{1-\beta_{1}}\left\Vert \nabla f\left(\theta_t\right)\right\Vert _{\infty}\left\Vert m_{t-1}\right\Vert _{\infty} \left\Vert \frac{\alpha_t(1-K_{t})}{1-\beta_{1}^{t}}-\frac{\alpha_{t-1}(1-K_{t-1})}{1-\beta_{1}^{t-1}}\right\Vert _{1}\\ 
	\leq & \frac{\beta_{1}}{1-\beta_{1}}\left(\max_{i}G_{i}\right)^2 \left( \sum_{i=1}^{d}\left(\frac{\alpha_{t-1}}{1-\beta_{1}^{t-1}}-\frac{\alpha_{t}}{1-\beta_{1}^{t}}\right)(1-K_{t,i}) + \sum_{i=1}^d \frac{\alpha_{t-1}}{1-\beta_1^{t-1}}|K_{t-1,i} - K_{t,i}| \right)\\
	\leq & \frac{\beta_{1}}{1-\beta_{1}}\left(\max_{i}G_{i}\right)^2 \cdot d\left(\frac{\alpha_{t-1}}{1-\beta_{1}^{t-1}}-\frac{\alpha_{t}}{1-\beta_{1}^{t}}\right) + \frac{\beta_{1}}{1-\beta_{1}}\left(\max_{i}G_{i}\right)^2 \frac{\alpha_{t-1}}{1-\beta_1^{t-1}} F_t
\end{aligned}
\end{equation}
where $F_t = \sum_{i=1}^d |K_{t-1,i} - K_{t,i}|$ represents the number of indices where the mask changes at step $t$.

For the second and third terms, notice that since $1-\beta_1^t \geq 1-\beta_1$, we have $\frac{\alpha_t(1-K_t)}{1-\beta_1^t} + \frac{\alpha_t K_t}{1-\beta_1} \geq \frac{\alpha_t}{1-\beta_1^t}(1-K_t+K_t) = \frac{\alpha_t}{1-\beta_1^t}$:
\begin{equation}
\begin{aligned}
	&\left\langle \nabla f\left(\theta_t\right),-\left(\frac{\alpha_t(1-K_{t})}{1-\beta_{1}^{t}} + \frac{\alpha_tK_{t}}{1-\beta_{1}}\right)\nabla f\left(\theta_t\right)\right\rangle +\left\langle \nabla f\left(\theta_t\right),-\left(\frac{\alpha_t(1-K_{t})}{1-\beta_{1}^{t}} + \frac{\alpha_tK_{t}}{1-\beta_{1}}\right)\zeta_t\right\rangle\\
	\leq &-\frac{\alpha_t}{1-\beta_{1}^{t}} \left\Vert \nabla f\left(\theta_t\right)\right\Vert _{2}^{2} +\left\langle \nabla f\left(\theta_t\right), -\left(\frac{\alpha_t(1-K_{t})}{1-\beta_{1}^{t}} + \frac{\alpha_tK_{t}}{1-\beta_{1}}\right)\zeta_t\right\rangle
\end{aligned}
\end{equation}

For the fourth term (the cross-gradient term), applying Hölder's inequality directly would yield an un-decaying $\mathcal{O}(\alpha_{t-1})$ penalty. Instead, we use the basic inequality $2\langle a, b \rangle \leq \|a\|_2^2 + \|b\|_2^2$. Since $K_{t-1,i} \in \{0, 1\}$, we have:
\begin{equation}
\begin{aligned} 
	\left\langle \nabla f\left(\theta_{t}\right),\frac{\beta_{1}\alpha_{t-1}K_{t-1}}{1-\beta_{1}}\nabla f\left(\theta_{t-1}\right)\right\rangle 
	&= \frac{\beta_1\alpha_{t-1}}{1-\beta_1} \sum_{i=1}^d K_{t-1,i} \nabla f(\theta_t)_i \nabla f(\theta_{t-1})_i \\
	&\leq \frac{\beta_1\alpha_{t-1}}{2(1-\beta_1)} \sum_{i=1}^d K_{t-1,i} \left( \nabla f(\theta_t)_i^2 + \nabla f(\theta_{t-1})_i^2 \right) \\
	&\leq \frac{\beta_{1}\alpha_{t-1}}{2(1-\beta_{1})} \left( \left\Vert \nabla f\left(\theta_t\right)\right\Vert _{2}^{2} + \left\Vert \nabla f\left(\theta_{t-1}\right)\right\Vert _{2}^{2} \right)
\end{aligned}
\end{equation}

For the fifth term (the noise term involving $\zeta_{t-1}$), taking the absolute value would similarly result in an $\mathcal{O}(\alpha_{t-1})$ error bound. To properly bound it, we decompose the inner product to leverage the expectation later:
\begin{equation}
\begin{aligned} 
	\left\langle \nabla f\left(\theta_{t}\right),\frac{\beta_{1}\alpha_{t-1}K_{t-1}}{1-\beta_{1}}\zeta_{t-1}\right\rangle 
	&= \frac{\beta_1\alpha_{t-1}}{1-\beta_1} \left\langle \nabla f(\theta_{t-1}), K_{t-1} \odot \zeta_{t-1} \right\rangle \\
	&\quad + \frac{\beta_1\alpha_{t-1}}{1-\beta_1} \left\langle \nabla f(\theta_t) - \nabla f(\theta_{t-1}), K_{t-1} \odot \zeta_{t-1} \right\rangle
\end{aligned}
\end{equation}

Notice that when taking the expectation, the first part satisfies $\mathbb{E}_{t-2}\left[\langle \nabla f(\theta_{t-1}), K_{t-1} \odot \zeta_{t-1} \rangle\right] = 0$ because both $\theta_{t-1}$ and $K_{t-1}$ are independent of the noise realization $\zeta_{t-1}$. For the second part, we apply the Cauchy-Schwarz inequality and utilize the $L$-smoothness of the objective function. Given that $\|\theta_t - \theta_{t-1}\|_2 \leq \alpha_{t-1}\|\hat{g}_{t-1}\|_2 \leq \alpha_{t-1}\sqrt{\sum_{i=1}^d G_i^2}$ and $\|\zeta_{t-1}\|_2 \leq 2\sqrt{\sum_{i=1}^d G_i^2}$, we can bound this term strictly by $\mathcal{O}(\alpha_{t-1}^2)$:
\begin{equation}
\begin{aligned} 
	\frac{\beta_1\alpha_{t-1}}{1-\beta_1} \left\langle \nabla f(\theta_t) - \nabla f(\theta_{t-1}), K_{t-1} \odot \zeta_{t-1} \right\rangle 
	&\leq \frac{\beta_1\alpha_{t-1}}{1-\beta_1} \left\Vert \nabla f(\theta_t) - \nabla f(\theta_{t-1}) \right\Vert_2 \left\Vert \zeta_{t-1} \right\Vert_2 \\
	&\leq \frac{\beta_1\alpha_{t-1}}{1-\beta_1} L \left\Vert \theta_t - \theta_{t-1} \right\Vert_2 \left\Vert \zeta_{t-1} \right\Vert_2 \\
	&\leq \frac{\beta_1\alpha_{t-1}}{1-\beta_1} L \left( \alpha_{t-1} \sqrt{\sum_{i=1}^d G_i^2} \right) \left( 2\sqrt{\sum_{i=1}^d G_i^2} \right) \\
	&= \frac{2L\beta_1}{1-\beta_1} \alpha_{t-1}^2 \sum_{i=1}^d G_i^2
\end{aligned}
\end{equation}

Finally, combining everything together:
\begin{equation}
\begin{aligned} 
	& \left\langle \nabla f\left(\theta_{t}\right),\xi_{t+1}-\xi_{t}\right\rangle \\ 
	\leq & \quad \frac{\beta_{1}d}{1-\beta_{1}}\left(\max_{i}G_{i}\right)^2 \left(\frac{\alpha_{t-1}}{1-\beta_{1}^{t-1}}-\frac{\alpha_{t}}{1-\beta_{1}^{t}}\right) 
	+ \frac{\beta_{1}}{1-\beta_{1}}\left(\max_{i}G_{i}\right)^2 \frac{\alpha_{t-1}}{1-\beta_1^{t-1}} F_t \\
	& - \left( \frac{\alpha_{t}}{1-\beta_{1}^{t}} - \frac{\beta_1\alpha_{t-1}}{2(1-\beta_1)} \right) \left\Vert \nabla f\left(\theta_{t}\right)\right\Vert _{2}^{2} + \frac{\beta_1\alpha_{t-1}}{2(1-\beta_1)} \left\Vert \nabla f(\theta_{t-1}) \right\Vert_2^2 \\
	& + \left\langle \nabla f\left(\theta_{t}\right),-\left(\frac{\alpha_t(1-K_{t})}{1-\beta_{1}^{t}} + \frac{\alpha_tK_{t}}{1-\beta_{1}}\right)\zeta_{t}\right\rangle \\
	& + \frac{\beta_1\alpha_{t-1}}{1-\beta_1} \left\langle \nabla f(\theta_{t-1}), K_{t-1} \odot \zeta_{t-1} \right\rangle + \frac{2L\beta_1}{1-\beta_1} \alpha_{t-1}^2 \sum_{i=1}^d G_i^2
\end{aligned}
\end{equation}

\paragraph{Summarizing the results}

Since we have already rigorously bounded the cross-inner product terms and the noise components in the previous subsection, we can proceed directly to summarizing the results. 

First, taking the expectation $\mathbb{E}_t$ over the random distribution of $\zeta_{1},\zeta_{2},\ldots,\zeta_{t}$ on both sides of the inequality. Since the value of $\theta_{t}$ is independent of $g_{t}$, they are statistically independent of $\zeta_{t}$, and $\mathbb{E}_{t}\left[\zeta_{t}\right]=0$. Therefore, all inner products with $\zeta_t$ perfectly vanish:
\begin{equation}
\begin{aligned} 
	& \mathbb{E}_{t}\left[\left\langle \nabla f\left(\theta_{t}\right),-\left(\frac{\alpha_t(1-K_{t})}{1-\beta_{1}^{t}} + \frac{\alpha_tK_{t}}{1-\beta_{1}}\right)\zeta_{t}\right\rangle \right] \\ 
	= & \left\langle -\left(\frac{\alpha_t(1-K_{t})}{1-\beta_{1}^{t}} + \frac{\alpha_tK_{t}}{1-\beta_{1}}\right)\nabla f\left(\theta_{t}\right),\cancelto{0}{\mathbb{E}_{t}\left[\zeta_{t}\right]}\right\rangle = 0 
\end{aligned}
\end{equation}

Combining the bounds from Term (1), Term (2), and the refined Term (3), for $t \geq 2$, we have:
\begin{equation}
\begin{aligned} 
	& \mathbb{E}_{t}\left[f\left(\xi_{t+1}\right)-f\left(\xi_t\right)\right] \\ 
	\leq & \frac{L\beta_1^2}{2(1-\beta_1)^2} \alpha_{t-1}^2 \sum_{i=1}^d G_i^2 + 6dL\frac{\beta_1^2 \eta_1}{(1-\beta_1)^2} (\max_i G_i)^2 \left(\frac{\alpha_{t-1}}{1-\beta_1^{t-1}} - \frac{\alpha_t}{1-\beta_1^t}\right) \\
	& + \left( 6F_t\frac{\beta_1^2}{(1-\beta_1)^2} (\max_i G_i)^2 + 3L \sum_{i=1}^d G_i^2 \right) \left(\frac{\alpha_{t-1}}{1-\beta_1^{t-1}}\right)^2 + 3L \left(\frac{\alpha_t}{1-\beta_1}\right)^2 \sum_{i=1}^d G_i^2 \\
	& + \frac{\beta_1 d}{1-\beta_1} (\max_i G_i)^2 \left(\frac{\alpha_{t-1}}{1-\beta_1^{t-1}} - \frac{\alpha_t}{1-\beta_1^t}\right) + \frac{\beta_1}{(1-\beta_1)^2}(\max_i G_i)^2 \left(\frac{\alpha_{t-1}}{1-\beta_1^{t-1}}\right) F_t \\
	& - \left( \frac{\alpha_t}{1-\beta_1^t} - \frac{\beta_1\alpha_{t-1}}{2(1-\beta_1)} \right) \mathbb{E}_t\left\Vert \nabla f(\theta_t) \right\Vert_2^2 + \frac{\beta_1\alpha_{t-1}}{2(1-\beta_1)} \mathbb{E}_{t-1}\left\Vert \nabla f(\theta_{t-1}) \right\Vert_2^2 \\
	& + \frac{2L\beta_1}{1-\beta_1} \alpha_{t-1}^2 \sum_{i=1}^d G_i^2
\end{aligned}
\end{equation}

To maintain the inequality and simplify the notation, we define the following iteration-independent constants. Note that $\frac{1}{1-\beta_1^t} \leq \frac{1}{1-\beta_1}$, and we absorb the new noise bound into $C_1$:

\begin{enumerate}
	\item For $\alpha_{t-1}^2$: $C_1 \triangleq \frac{L\beta_1^2}{2(1-\beta_1)^2}\sum_{i=1}^d G_i^2 + \frac{3L}{(1-\beta_1)^2}\sum_{i=1}^d G_i^2 + \frac{2L\beta_1}{1-\beta_1}\sum_{i=1}^d G_i^2$
	\item For $\alpha_t^2$: $C_2 \triangleq \frac{3L}{(1-\beta_1)^2}\sum_{i=1}^d G_i^2$
	\item For the telescoping difference: $C_3 \triangleq \frac{6dL\beta_1^2 \alpha_1}{(1-\beta_1)^3} (\max_{i}G_i)^2 + \frac{\beta_1 d}{1-\beta_1} (\max_{i}G_i)^2$
	\item For the dynamic mask flipping terms (both linear and squared): $C_4 \triangleq \frac{\beta_1}{(1-\beta_1)^2}(\max_i G_i)^2 + \frac{6\beta_1^2}{(1-\beta_1)^4}(\max_i G_i)^2 \alpha_1$
\end{enumerate}

For $t=1$:
\begin{equation}
\begin{aligned} 
	\mathbb{E}_1\left[f(\xi_2) - f(\xi_1)\right] 
	\leq & \frac{L}{(1-\beta_1)^2}\alpha_1^2 \sum_{i=1}^d G_i^2 - \frac{\alpha_1}{1-\beta_1} \mathbb{E}_1\left\Vert \nabla f(\theta_1) \right\Vert_2^2
\end{aligned}
\end{equation}

Summing up both sides of the inequality for $t=1,2,\ldots, T$:

\textbf{Left side of the inequality (LHS)}
\begin{equation}
\begin{aligned}
	\sum_{t=1}^{T}\mathbb{E}_{t}\left[f\left(\xi_{t+1}\right)-f\left(\xi_t\right)\right]
	= & \mathbb{E}_{T}\left[f\left(\xi_{T+1}\right)\right]-\mathbb{E}_{0}\left[f\left(\xi_1\right)\right] \\
	\geq & f\left(\theta^{*}\right)-f\left(\theta_1\right) 
\end{aligned}
\end{equation}

\textbf{Right side of the inequality (RHS)}
\begin{equation}
\begin{aligned} 
	\sum_{t=1}^{T} \mathbb{E}_t[\text{RHS}] 
	\leq & \sum_{t=2}^T C_1 \alpha_{t-1}^2 + \sum_{t=2}^T C_2 \alpha_t^2 + \sum_{t=2}^T C_3 \left(\frac{\alpha_{t-1}}{1-\beta_1^{t-1}} - \frac{\alpha_t}{1-\beta_1^t}\right) + \sum_{t=2}^T C_4 \frac{\alpha_{t-1}}{1-\beta_1^{t-1}} F_t \\
	& + \sum_{t=2}^T \frac{\beta_1 \alpha_{t-1}}{2(1-\beta_1)} \mathbb{E}_{t-1}\left\Vert \nabla f(\theta_{t-1}) \right\Vert_2^2 - \sum_{t=2}^T \left( \frac{\alpha_t}{1-\beta_1^t} - \frac{\beta_1\alpha_{t-1}}{2(1-\beta_1)} \right) \mathbb{E}_t\left\Vert \nabla f(\theta_t) \right\Vert_2^2 \\
	& + \frac{L}{(1-\beta_1)^2}\alpha_1^2 \sum_{i=1}^d G_i^2 - \frac{\alpha_1}{1-\beta_1} \mathbb{E}_1\left\Vert \nabla f(\theta_1) \right\Vert_2^2
\end{aligned}
\end{equation}

We rigorously shift the index of the positive gradient expectation term to combine it with the negative counterpart. By shifting $k = t-1$:
\begin{equation}
\begin{aligned}
	& \sum_{t=2}^T \frac{\beta_1 \alpha_{t-1}}{2(1-\beta_1)} \mathbb{E}_{t-1}\left\Vert \nabla f(\theta_{t-1}) \right\Vert_2^2 - \sum_{t=2}^T \left( \frac{\alpha_t}{1-\beta_1^t} - \frac{\beta_1\alpha_{t-1}}{2(1-\beta_1)} \right) \mathbb{E}_t\left\Vert \nabla f(\theta_t) \right\Vert_2^2 \\
	= & \frac{\beta_1 \alpha_1}{2(1-\beta_1)} \mathbb{E}_1\left\Vert \nabla f(\theta_1) \right\Vert_2^2 + \sum_{t=2}^{T-1} \frac{\beta_1 \alpha_t}{2(1-\beta_1)} \mathbb{E}_t\left\Vert \nabla f(\theta_t) \right\Vert_2^2 \\
	& - \sum_{t=2}^{T-1} \left( \frac{\alpha_t}{1-\beta_1^t} - \frac{\beta_1\alpha_{t-1}}{2(1-\beta_1)} \right) \mathbb{E}_t\left\Vert \nabla f(\theta_t) \right\Vert_2^2 - \left( \frac{\alpha_T}{1-\beta_1^T} - \frac{\beta_1\alpha_{T-1}}{2(1-\beta_1)} \right) \mathbb{E}_T\left\Vert \nabla f(\theta_T) \right\Vert_2^2 \\
	\leq & \frac{\beta_1 \alpha_1}{2(1-\beta_1)} \mathbb{E}_1\left\Vert \nabla f(\theta_1) \right\Vert_2^2 - \sum_{t=2}^T \left( \frac{\alpha_t}{1-\beta_1^t} - \frac{\beta_1(\alpha_{t-1} + \alpha_t)}{2(1-\beta_1)} \right) \mathbb{E}_t\left\Vert \nabla f(\theta_t) \right\Vert_2^2
\end{aligned}
\end{equation}
where the inequality holds because we drop the strictly negative term $-\frac{\beta_1 \alpha_T}{2(1-\beta_1)}\mathbb{E}_T\Vert \nabla f(\theta_T) \Vert_2^2$ to upper-bound the expression. 

Now, integrating the $t=1$ expectation term from the initialization, we observe that since $\beta_1 < 1$, the combined coefficient for $t=1$ is strictly negative:
\begin{equation}
\begin{aligned}
	- \frac{\alpha_1}{1-\beta_1} \mathbb{E}_1\left\Vert \nabla f(\theta_1) \right\Vert_2^2 + \frac{\beta_1 \alpha_1}{2(1-\beta_1)} \mathbb{E}_1\left\Vert \nabla f(\theta_1) \right\Vert_2^2 
	= - \frac{\alpha_1(2 - \beta_1)}{2(1-\beta_1)} \mathbb{E}_1\left\Vert \nabla f(\theta_1) \right\Vert_2^2 
	\leq - \frac{\alpha_1}{2(1-\beta_1)} \mathbb{E}_1\left\Vert \nabla f(\theta_1) \right\Vert_2^2 
\end{aligned}
\end{equation}

For $t \geq 2$, we assume the effective learning rate is chosen to be small enough such that this coefficient bounds the descent effectively: $\frac{\alpha_t}{1-\beta_1^t} - \frac{\beta_1(\alpha_{t-1} + \alpha_t)}{2(1-\beta_1)} \geq \frac{\alpha_t}{2(1-\beta_1^t)}$. Combining this with the bounds for the telescoping difference sum ($\leq \frac{\alpha_1}{1-\beta_1}$) and the mask flipping bounded by $S_{\text{total}}$, we have:
\begin{equation}
\begin{aligned} 
	f(\theta^*) - f(\theta_1) \leq (C_1 + C_2) \sum_{t=1}^{T}\alpha_{t}^{2} + C_3 \frac{\alpha_1}{1-\beta_1} + C_4 \frac{\alpha_1}{1-\beta_1} S_\text{total} - \sum_{t=1}^{T} \frac{\alpha_t}{2(1-\beta_1^t)} \mathbb{E}_t\left[\left\Vert \nabla f\left(\theta_{t}\right)\right\Vert _{2}^{2}\right] 
\end{aligned} 
\end{equation}

Rearranging the terms to isolate the gradient norm:
\begin{equation}
\begin{aligned} 
	\sum_{t=1}^{T} \frac{\alpha_t}{2(1-\beta_1^t)} \mathbb{E}_t\left[\left\Vert \nabla f\left(\theta_{t}\right)\right\Vert _{2}^{2}\right] 
	\leq (C_1 + C_2) \sum_{t=1}^{T}\alpha_{t}^{2} + f(\theta_1) - f(\theta^*) + C_3 \frac{\alpha_1}{1-\beta_1} + C_4 \frac{\alpha_1}{1-\beta_1} S_\text{total} 
\end{aligned} 
\end{equation}

Let $C_7 \triangleq 2(C_1 + C_2)$ and let $C_8$ absorb all the non-decaying initial constants, $C_8 \triangleq 2\left(f(\theta_1) - f(\theta^*) + C_3 \frac{\alpha_1}{1-\beta_1} + C_4 \frac{\alpha_1}{1-\beta_1} S_\text{total}\right)$. Note that since $1-\beta_1^t \leq 1$, we universally have $\frac{1}{1-\beta_1^t} \geq 1$. Therefore, multiplying both sides by 2 yields:
\begin{equation}
\begin{aligned}
	\sum_{t=1}^{T}\alpha_{t}\mathbb{E}_t\left[\left\Vert \nabla f\left(\theta_t\right)\right\Vert _{2}^{2}\right] 
	\leq \sum_{t=1}^{T}\frac{\alpha_{t}}{1-\beta_1^t}\mathbb{E}_t\left[\left\Vert \nabla f\left(\theta_t\right)\right\Vert _{2}^{2}\right]
	\leq C_7 \sum_{t=1}^{T}\alpha_{t}^{2} + C_8
\end{aligned}
\end{equation}

Extracting the minimum over the trajectory $\mathbb{E}(T) =\min_{t=1,\ldots,T}\mathbb{E}_{t-1}\left[\left\Vert \nabla f\left(\theta_{t}\right)\right\Vert _{2}^{2}\right]$:
\begin{equation}
\begin{aligned} 
	\mathbb{E}\left(T\right)\sum_{t=1}^{T}\alpha_{t} \leq \sum_{t=1}^{T}\alpha_{t}\mathbb{E}_t\left[\left\Vert \nabla f\left(\theta_t\right)\right\Vert _{2}^{2}\right] \leq C_{7}\sum_{t=1}^{T}\alpha_{t}^{2} + C_{8} \Longrightarrow \mathbb{E}\left(T\right) \leq \frac{C_{7}\sum_{t=1}^{T}\alpha_{t}^{2} + C_{8}}{\sum_{t=1}^{T}\alpha_{t}}
\end{aligned}
\end{equation}

Since $\alpha_{t}=\alpha/\sqrt{t}$, we apply the standard integral bound for the squared step size summation:
\begin{equation}	
\sum_{t=1}^{T} \alpha_t^2 = \alpha^2 \sum_{t=1}^{T} \frac{1}{t} \leq \alpha^2 \left( 1 + \int_1^T \frac{1}{x} dx \right) = \alpha^2 (\log T + 1)
\end{equation}

For the linear step size summation, we can strictly lower bound it by replacing each term with the minimum term in the sequence:
\begin{equation}	
\sum_{t=1}^{T} \alpha_t = \alpha \sum_{t=1}^{T} \frac{1}{\sqrt{t}} \geq \alpha \sum_{t=1}^{T} \frac{1}{\sqrt{T}} = \alpha \frac{T}{\sqrt{T}} = \alpha\sqrt{T}
\end{equation}

Substituting the upper bound of the numerator and the lower bound of the denominator yields the final finite-time convergence bound:
\begin{equation}	
\mathbb{E}(T) \leq \frac{C_{7}\alpha^2 (\log T + 1) + C_{8}}{\alpha\sqrt{T}} = \mathcal{O}\left( \frac{\log T}{\sqrt{T}} \right)
\end{equation}

\end{proof}

\newpage
\section{Detailed Experimental Supplement}
\label{sec:appendixe}
We performed extensive comparisons with other optimizers, including SGD~\cite{1951a}, Adam~\cite{kingma2014adam}, RAdam~\cite{liu2019variance} and AdamW~\cite{loshchilov2017decoupled}, and so on. The experiments include: (a) image classification on CIFAR dataset~\cite{krizhevsky2009learning} with VGG~\cite{2014Very}, ResNet~\cite{he2016deep} and DenseNet~\cite{huang2017densely}, and image recognition with VGG, ResNet, and DenseNet on ImageNet~\cite{Deng2009}.

\subsection{Image classification with CNNs on CIFAR}
\label{app:subsec:cifar}
For all experiments, the model is trained for 200 epochs with a batch size of 128, and the learning rate is multiplied by 0.1 at epoch 150. We performed extensive hyperparameter search as described in the main paper. Here, we report both training and test accuracy in \Figref{fig:cifar10} and \Figref{fig:cifar100}. Detailed experimental parameters we place in \Tabref{tab:cifarhyperparameters}. We summarize the mean best test accuracies and their standard deviations for each algorithm in \Tabref{tab:cifar_values}. The best results are highlighted in bold font. SGDF not only achieves the highest test accuracy but also a smaller gap between training and test accuracy compared with other optimizers. We ran each experiment three times with different seeds \{0, 1, 2\} to ensure the robustness of the results.

\begin{table}[h]
	\centering
	\caption{Hyperparameters used for CIFAR-10 and CIFAR-100 datasets. }
    \resizebox{1.0\linewidth}{!}{
	\begin{tabular}{@{}lcccccccc@{}}
		\toprule
		Optimizer & Learning Rate & $\beta_1$ & $\beta_2$ & Epochs & Schedule & Weight Decay & Batch Size & $\varepsilon$ \\
		\midrule
		SGDF    & 0.5 & 0.9   & 0.999  & 200 & StepLR      & 0.0005 & 128 & 1e-8 \\
		SGD     &  0.1  & 0.9 & -   & 200 & StepLR  & 0.0005 & 128 & -    \\
		Adam    &  0.001 & 0.9 & 0.999 & 200 & StepLR  & 0.0005 & 128 & 1e-8 \\
		RAdam   &  0.001 & 0.9 & 0.999 & 200 & StepLR  & 0.0005 & 128 & 1e-8 \\
		AdamW   &  0.001 & 0.9 & 0.999 & 200 & StepLR  & 0.01   & 128 & 1e-8 \\
		MSVAG   &  0.1 & 0.9 & 0.999 & 200 & StepLR  & 0.0005 & 128 & 1e-8 \\
		AdaBound&  0.001 & 0.9 & 0.999 & 200 & StepLR   & 0.0005 & 128 & - \\
		Sophia  & 0.0001 & 0.965 & 0.99 & 200 & StepLR  & 0.1 & 128 & - \\
		Lion    &  0.00002 & 0.9 & 0.99   & 200 & StepLR  & 0.1 & 128 & -    \\
        AdaBound&  0.001 & 0.9 & 0.999 & 200 & StepLR   & 0.0005 & 128 & 1e-8 \\
		\bottomrule
	\end{tabular}
    }
	\label{tab:cifarhyperparameters}
\end{table}

\begin{table}[ht]
\centering
\caption{Test Accuracies for CIFAR-10 and CIFAR-100 across different models and algorithms.}
\resizebox{1.0\linewidth}{!}{
\begin{tabular}{l|ccc|ccc}
\toprule
\multirow{2.5}{*}{Algorithm} & \multicolumn{3}{c}{CIFAR-10} & \multicolumn{3}{c}{CIFAR-100} \\
\cmidrule{2-7}
& VGG11    & ResNet34   & DenseNet121  & VGG11  & ResNet34  & DenseNet121     \\
\midrule
SGDF & $\textbf{91.61}_{\pm 0.21}$ & $\textbf{95.33}_{\pm 0.19}$ & $\textbf{95.74}_{\pm 0.06}$ & $\textbf{68.12}_{\pm 0.15}$ & $\textbf{77.69}_{\pm 0.64}$ & $\textbf{80.17}_{\pm 0.19}$ \\
SGD  & $89.83_{\pm 0.05}$ & $94.62_{\pm 0.07}$ & $94.52_{\pm 0.03}$ & $63.48_{\pm 0.39}$ & $76.88_{\pm 0.12}$ & $78.77_{\pm 0.27}$ \\
Adam & $88.12_{\pm 0.10}$ & $94.30_{\pm 0.06}$ & $94.37_{\pm 0.17}$ & $56.27_{\pm 0.32}$ & $72.81_{\pm 0.45}$ & $74.67_{\pm 0.45}$ \\
AdamW & $88.59_{\pm 0.20}$ & $94.42_{\pm 0.00}$ & $94.61_{\pm 0.06}$ & $58.09_{\pm 0.69}$ & $72.74_{\pm 0.45}$ & $74.96_{\pm 0.10}$ \\
RAdam & $90.47_{\pm 0.34}$ & $93.41_{\pm 0.21}$ & $93.75_{\pm 0.04}$ & $60.20_{\pm 0.37}$ & $74.08_{\pm 0.35}$ & $75.82_{\pm 0.28}$ \\
MSVAG & $90.08_{\pm 0.13}$ & $94.79_{\pm 0.08}$ & $95.01_{\pm 0.12}$ & $61.55_{\pm 0.23}$ & $75.75_{\pm 0.06}$ & $76.84_{\pm 0.13}$ \\
Lion & $88.04_{\pm 0.06}$ & $93.97_{\pm 0.10}$ & $94.26_{\pm 0.02}$ & $55.59_{\pm 0.15}$ & $72.79_{\pm 0.14}$ & $73.41_{\pm 0.10}$ \\
SophiaG & $88.53_{\pm 0.04}$ & $94.15_{\pm 0.26}$ & $94.53_{\pm 0.13}$ & $58.01_{\pm 1.85}$ & $72.83_{\pm 0.18}$ & $75.81_{\pm 0.23}$ \\
AdaBound & $90.41_{\pm 0.12}$ & $94.93_{\pm 0.12}$ & $95.06_{\pm 0.13}$ & $64.51_{\pm 0.15}$ & $76.37_{\pm 0.29}$ & $77.43_{\pm 0.18}$ \\
AdaBelief & $91.24_{\pm 0.04}$ & $95.18_{\pm 0.01}$ & $95.44_{\pm 0.04}$ & $67.59_{\pm 0.03}$ & $77.47_{\pm 0.34}$ & $79.20_{\pm 0.16}$ \\
\bottomrule
\end{tabular}
}
\label{tab:cifar_values}
\end{table}

\vspace{-1em}
\begin{figure}[htp]
    \centering
    \begin{subfigure}[t]{0.32\textwidth}
        \centering
        \includegraphics[width=\linewidth]{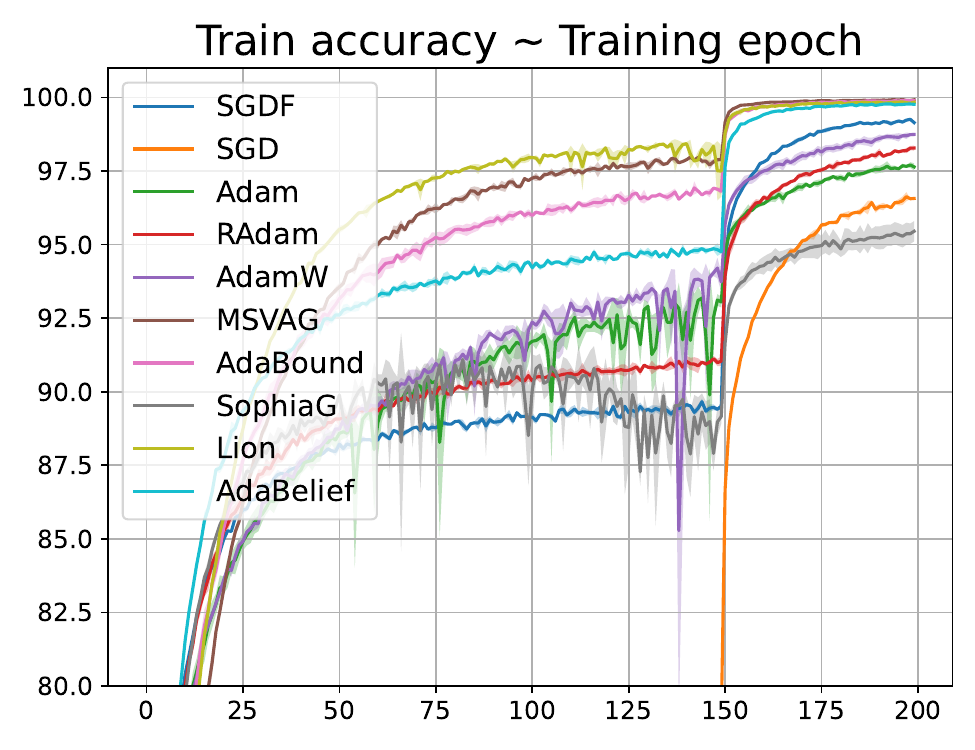}
        \caption{VGG11 on CIFAR-10 (Training)}
        \label{subfig:vgg_train_cifar10}
    \end{subfigure}
    \begin{subfigure}[t]{0.32\textwidth}
        \centering
        \includegraphics[width=\linewidth]{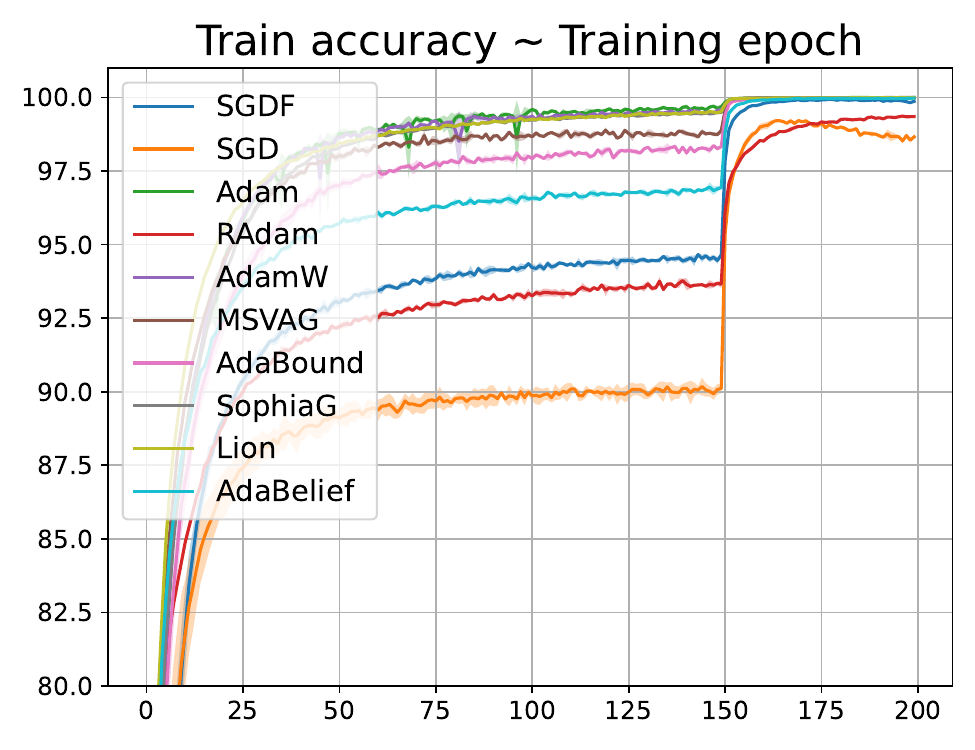}
        \caption{ResNet34 on CIFAR-10 (Training)}
        \label{subfig:resnet_train_cifar10}
    \end{subfigure}
    \begin{subfigure}[t]{0.32\textwidth}
        \centering
        \includegraphics[width=\linewidth]{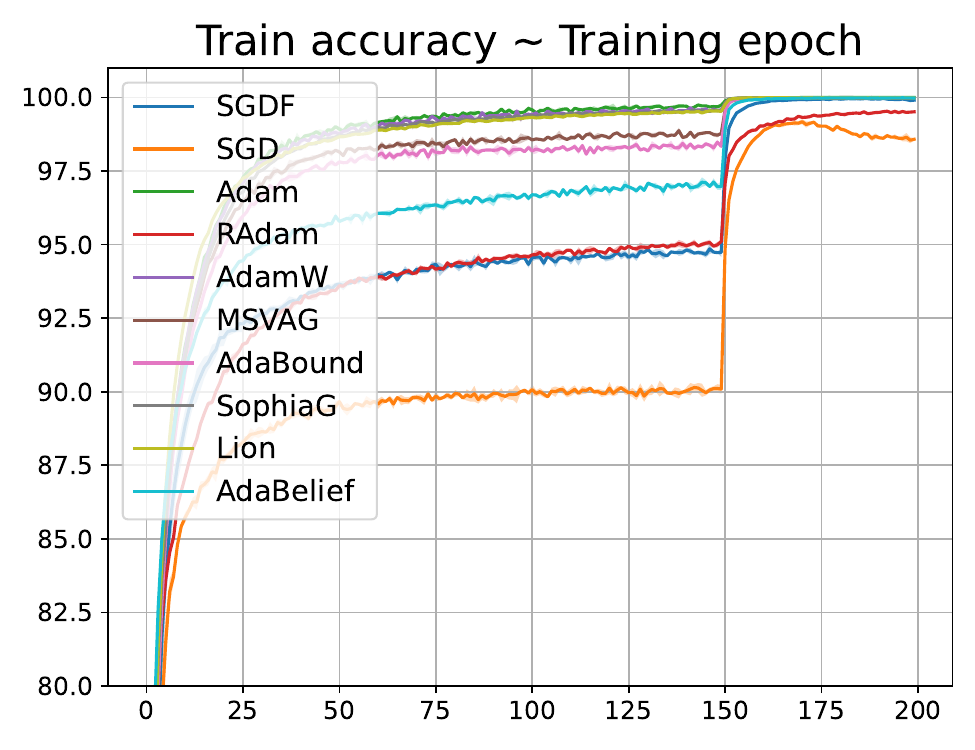}
        \caption{DenseNet121 on CIFAR-10 (Training)}
        \label{subfig:densenet_train_cifar10}
    \end{subfigure}
    \begin{subfigure}[t]{0.32\textwidth}
        \centering
        \includegraphics[width=\linewidth]{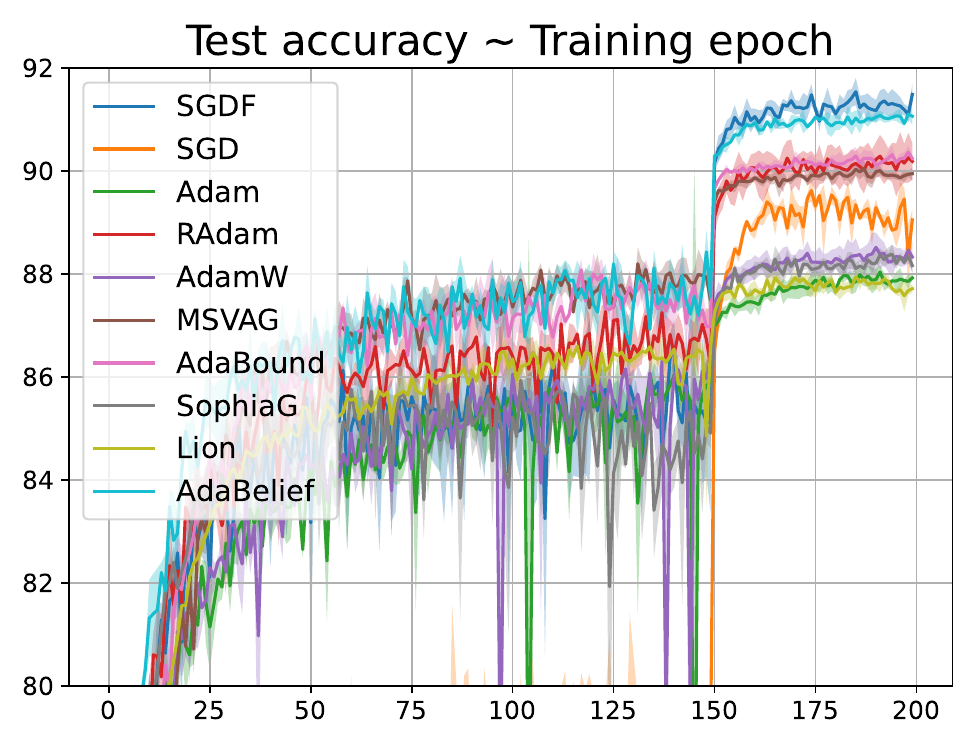}
        \caption{VGG11 on CIFAR-10 (Test)}
        \label{subfig:vgg_test_cifar10}
    \end{subfigure}
    \begin{subfigure}[t]{0.32\textwidth}
        \centering
        \includegraphics[width=\linewidth]{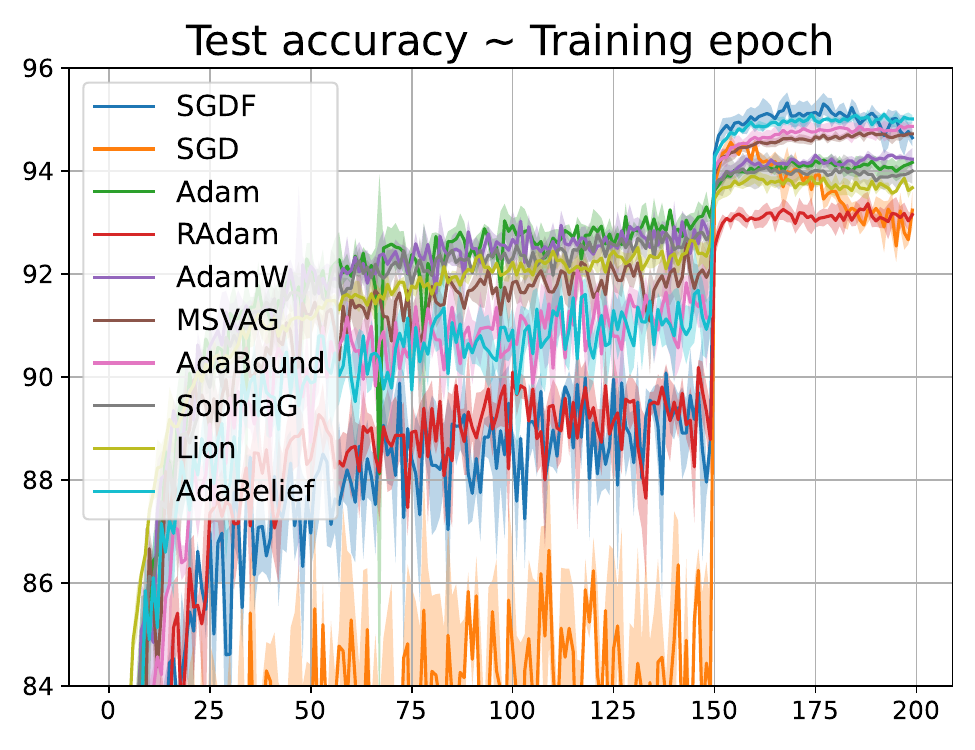}
        \caption{ResNet34 on CIFAR-10 (Test)}
        \label{subfig:resnet_test_cifar10}
    \end{subfigure}
    \begin{subfigure}[t]{0.32\textwidth}
        \centering
        \includegraphics[width=\linewidth]{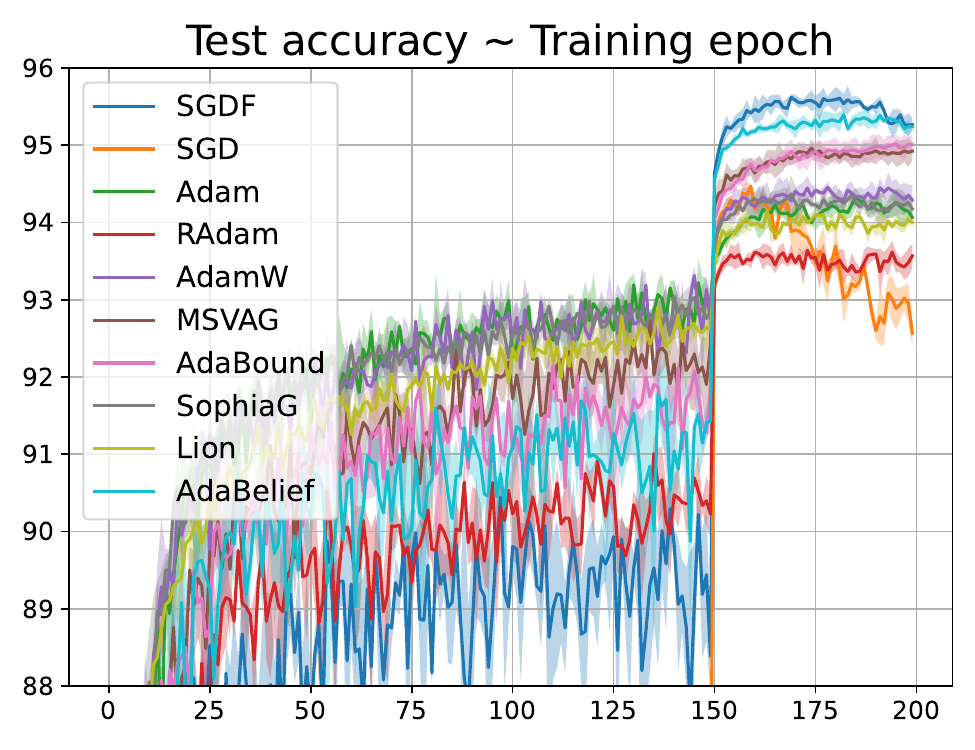}
        \caption{DenseNet121 on CIFAR-10 (Test)}
        \label{subfig:densenet_test_cifar10}
    \end{subfigure}
    \caption{Training (top row) and test (bottom row) accuracy of CNNs on CIFAR-10 dataset. We report confidence interval ([$\mu \pm \sigma$]) of 3 independent runs.}
    \label{fig:cifar10}
\end{figure}

\begin{figure}[htp]
    \centering
    \begin{subfigure}[t]{0.32\textwidth}
        \centering
        \includegraphics[width=\linewidth]{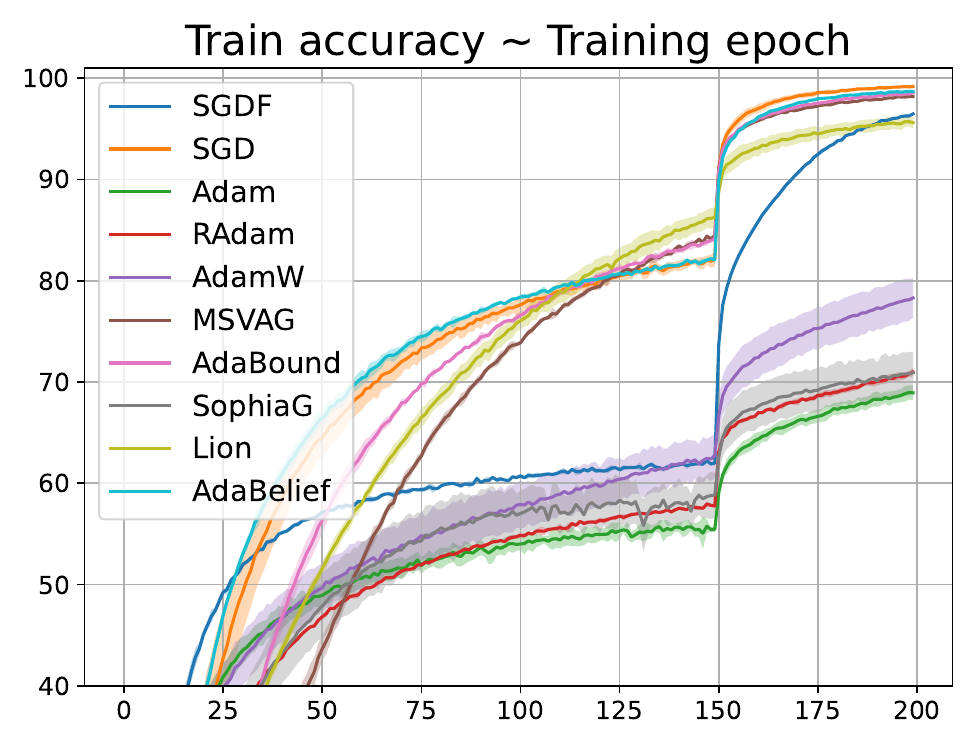}
        \caption{VGG11 on CIFAR-100 (Training)}
        \label{subfig:vgg_train_cifar100}
    \end{subfigure}
    \begin{subfigure}[t]{0.32\textwidth}
        \centering
        \includegraphics[width=\linewidth]{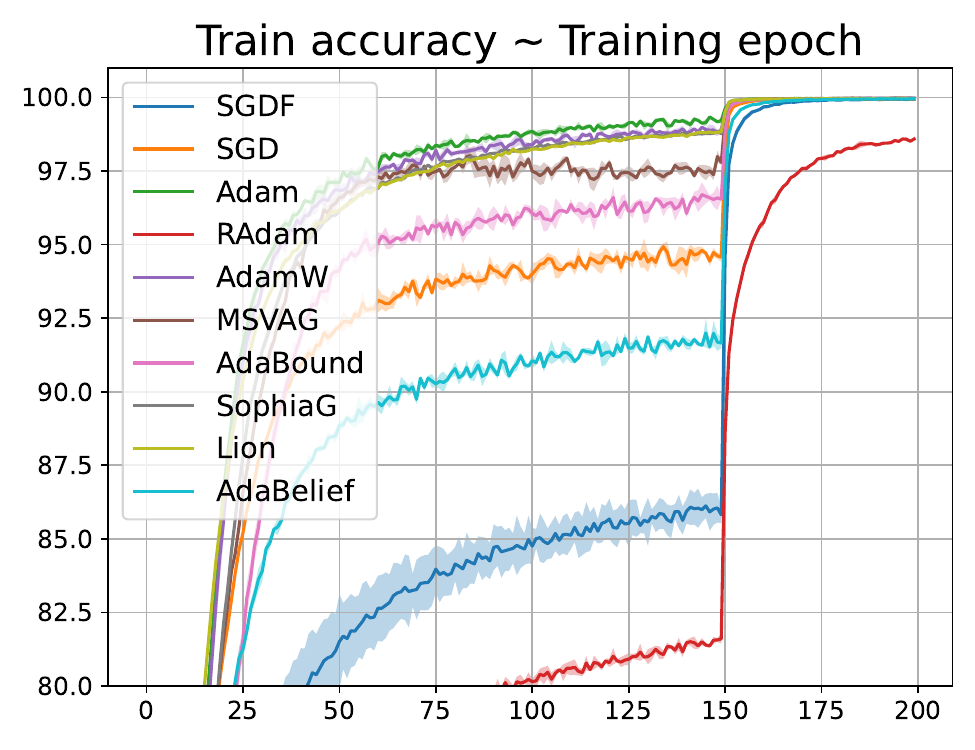}
        \caption{ResNet34 on CIFAR-100 (Training)}
        \label{subfig:resnet_train_cifar100}
    \end{subfigure}
    \begin{subfigure}[t]{0.32\textwidth}
        \centering
        \includegraphics[width=\linewidth]{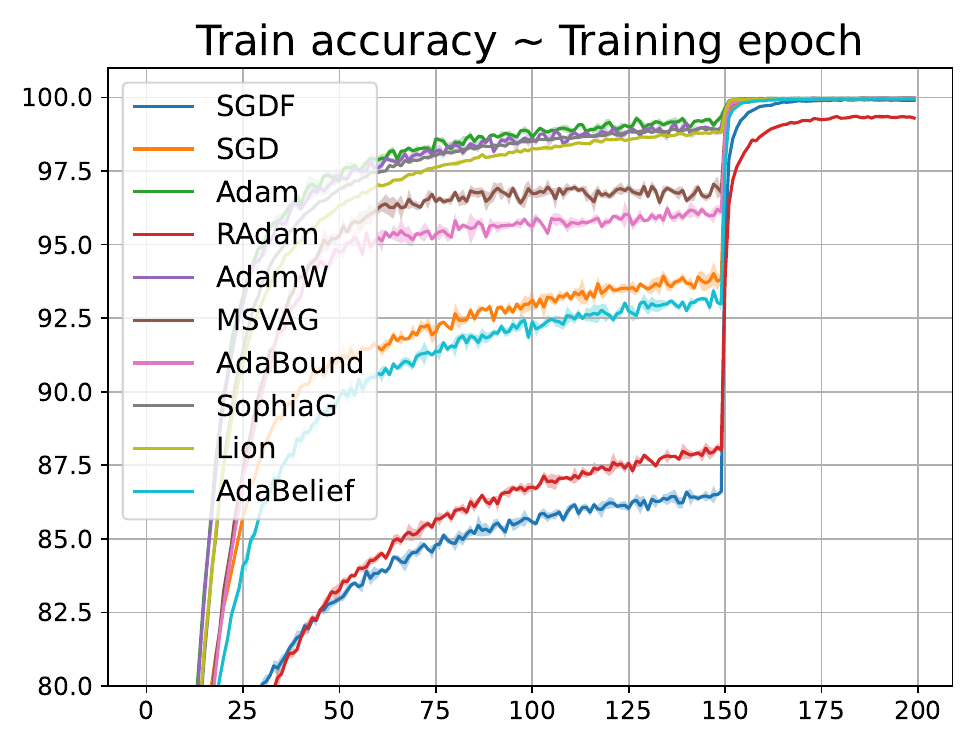}
        \caption{DenseNet121 on CIFAR-100 (Training)}
        \label{subfig:densenet_train_cifar100}
    \end{subfigure}
    \begin{subfigure}[t]{0.32\textwidth}
        \centering
        \includegraphics[width=\linewidth]{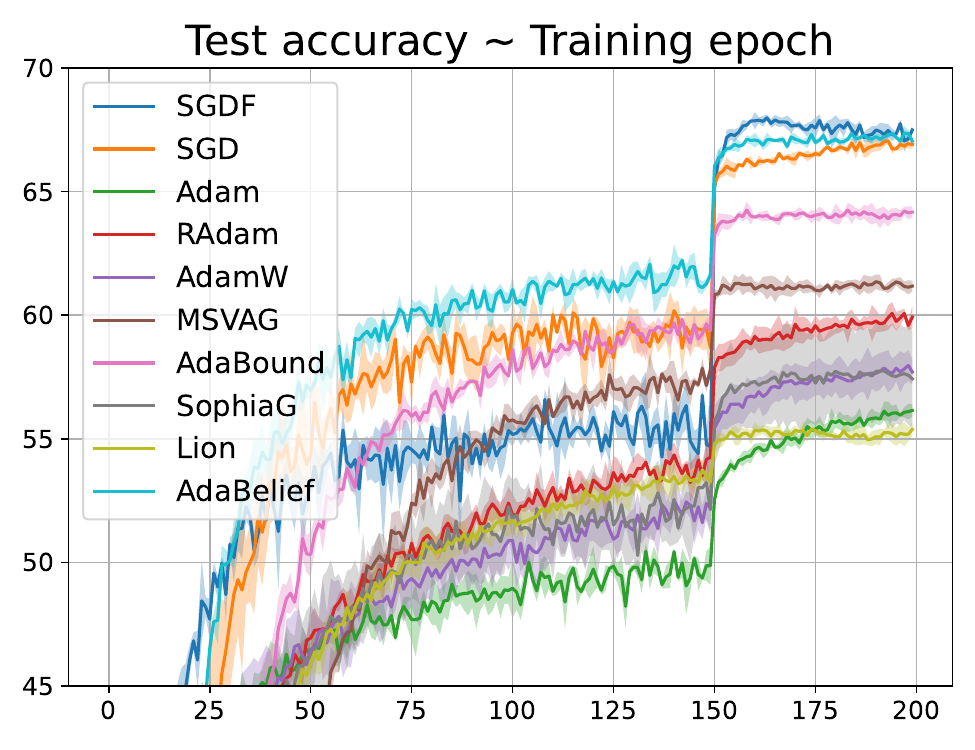}
        \caption{VGG11 on CIFAR-100 (Test)}
        \label{subfig:vgg_test_cifar100}
    \end{subfigure}
    \begin{subfigure}[t]{0.32\textwidth}
        \centering
        \includegraphics[width=\linewidth]{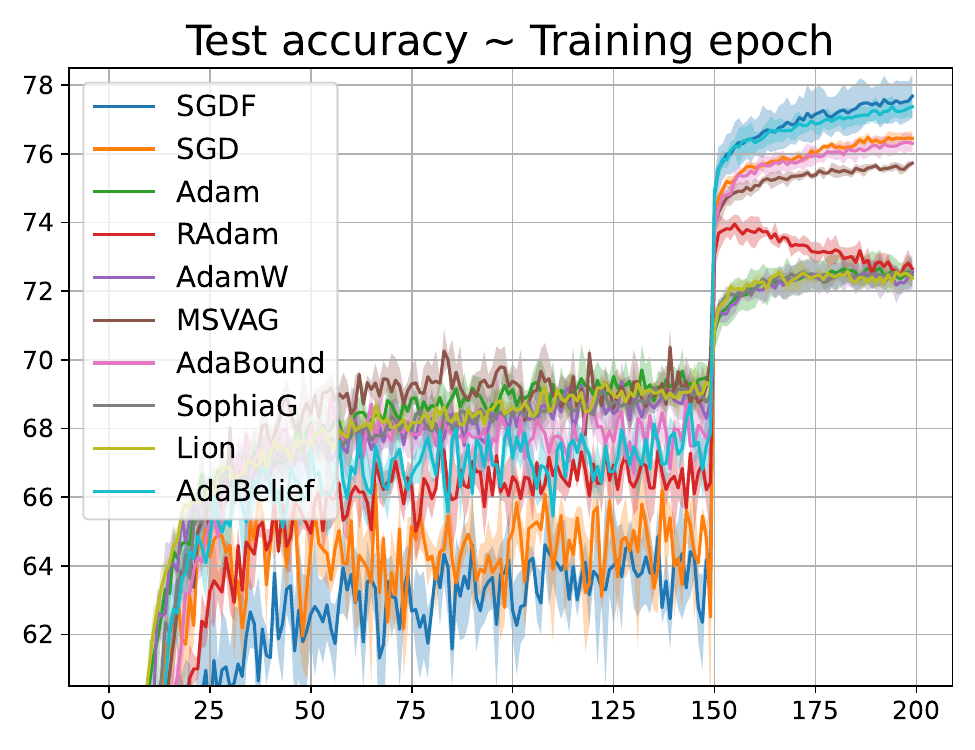}
        \caption{ResNet34 on CIFAR-100 (Test)}
        \label{subfig:resnet_test_cifar100}
    \end{subfigure}
    \begin{subfigure}[t]{0.32\textwidth}
        \centering
        \includegraphics[width=\linewidth]{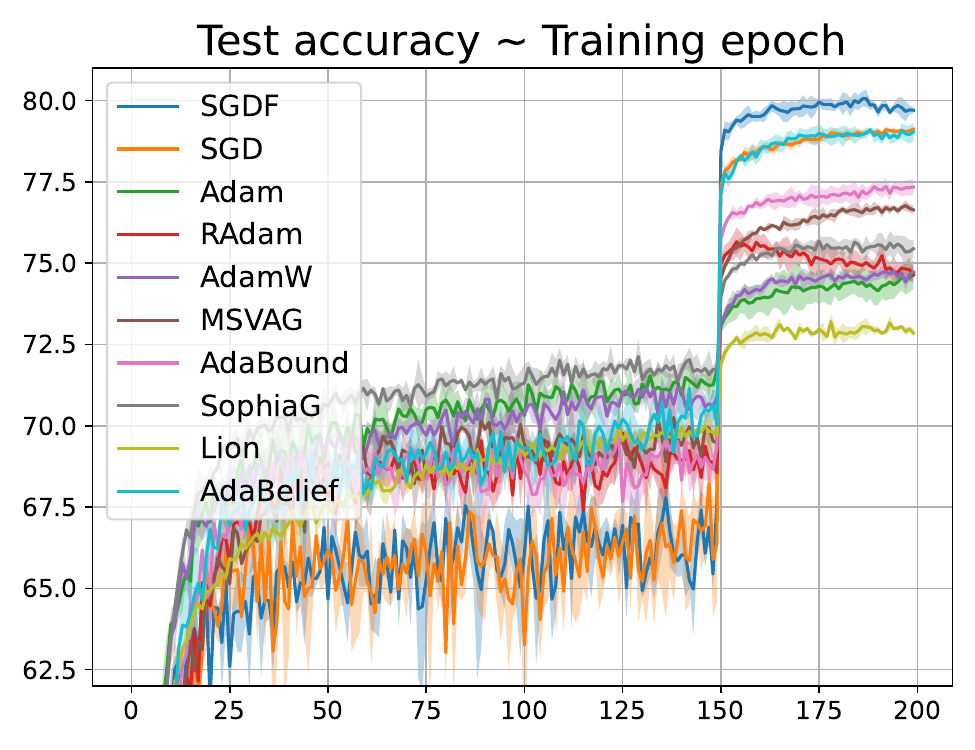}
        \caption{DenseNet121 on CIFAR-100 (Test)}
        \label{subfig:densenet_test_cifar100}
    \end{subfigure}

    \caption{Training (top row) and test (bottom row) accuracy of CNNs on CIFAR-100 dataset. We report confidence interval ([$\mu \pm \sigma$]) of 3 independent runs.}
    \label{fig:cifar100}
\end{figure}

\vspace{3em}

\subsection{Image Classification on ImageNet}
\label{app:subsec:imagenet}
We experimented with a VGG / ResNet / DenseNet on ImageNet classification task. For SGDF and SGD, we set the initial learning rate of 0.5 same as CIFAR experiments. The weight decay is set as \(10^{-4}\)  for both cases to match the settings in~\cite{liu2019variance}. Here $\beta_1$ serves to capture the gradient mean. The more closer $\beta_1$ is to 1, the longer the moving window is and the wider the historical mean is captured. Since ImageNet dataset has more iterations than CIFAR dataset, we directly set $\beta_1$ = 0.5 to prevent $K_t$ from being overly influenced by the historical mean gradient. For sure, setting $\beta_1$ to 0.9, consistent with CIFAR experiments can also be superior to SGD, and adjusting $\beta_1$ to 0.5 or 0.9 according to the size of the dataset and batch size can bring better results. Detailed experimental parameters we place in \Tabref{tab:imagenethyperparameters}. As shown in \Figref{fig:imagenet_curve}, SGDF outperformed SGD. 

\begin{table}[htp]
	\centering
	\caption{Hyperparameters used for ImageNet.}
	\begin{tabular}{@{}lcccccccc@{}}
		\toprule
		Optimizer & Learning Rate & $\beta_1$ & $\beta_2$ & Epochs & Schedule & Weight Decay & Batch Size & $\varepsilon$ \\
		\midrule
		 SGDF    & 0.5 & 0.5   & 0.999  & 100/90 & Cosine      & 0.0001 & 256 & 1e-8 \\
		 SGD     & 0.1  & 0.9 & -     & 100/90 & Cosine      & 0.0001 & 256 & - \\
		\bottomrule
	\end{tabular}
	\label{tab:imagenethyperparameters}
\end{table}
\vspace{-3mm}
\begin{figure}[htp]
    \centering
    \begin{subfigure}[t]{0.45\textwidth}
        \centering
        \includegraphics[width=\linewidth]{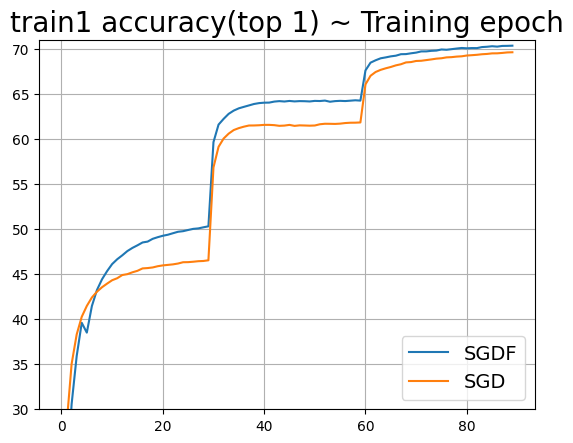}
        \caption{Training accuracy (top-1)}
        \label{subfig:imagenet_train}
    \end{subfigure}
    \begin{subfigure}[t]{0.45\textwidth}
        \centering
        \includegraphics[width=\linewidth]{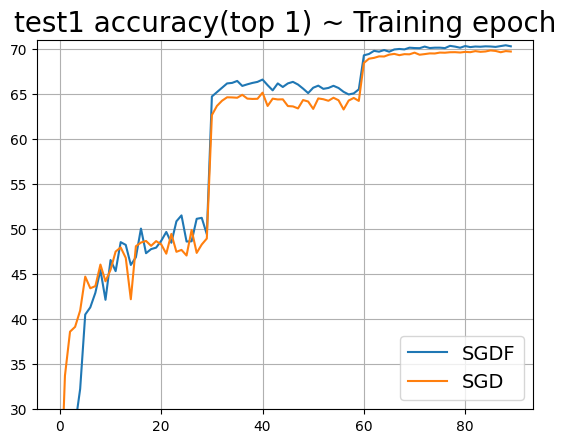}
        \caption{Test accuracy (top-1)}
        \label{subfig:imagenet_test}
    \end{subfigure}
    
    \caption{Training and test accuracy (top-1) of ResNet18 on ImageNet.}
    \label{fig:imagenet_curve}
\end{figure}


\subsection{Objective Detection on PASCAL VOC}
\label{app:subsec:detection}
We conducted object detection experiments on the PASCAL VOC dataset~\cite{pascal-voc-2010}. The model used in these experiments was pre-trained on the COCO dataset~\citep{coco-dataset}, obtained from the official website. We trained this model on the VOC2007 and VOC2012 trainval dataset (17K) and evaluated it on the VOC2007 test dataset (5K). The utilized model was Faster-RCNN~\cite{faster_rcnn} with FPN, and the backbone was ResNet50~\cite{he2016deep}. We train 4 epochs and adjust the learning rate decay by a factor of 0.1 at the last epoch. We show the results on PASCAL VOC~\cite{pascal-voc-2010}. Object detection with a Faster-RCNN model\cite{faster_rcnn}. Detailed experimental parameters we place in \Figref{tab:hyperparameters_object_detection}. The results are reported in \Tabref{tab:object_detection}, and detection examples are shown in \Figref{fig:detection_examples}. These results also illustrate that our method is still efficient in object detection tasks.
\begin{table*}[htp]
	\centering
	\caption{Hyperparameters for object detection on PASCAL VOC using Faster-RCNN+FPN with different optimizers.}
    \resizebox{0.9\linewidth}{!}{
	\begin{tabular}{@{}lcccccccc@{}}
		\toprule
		Optimizer & Learning Rate & $\beta_1$& $\beta_2$ & Epochs & Schedule & Weight Decay &Batch Size & $\varepsilon$ \\
		\midrule
		SGDF       & 0.01 & 0.9 & 0.999 & 4 & StepLR & 0.0001 &2& 1e-8 \\
		SGD        & 0.01 & 0.9 & -     & 4 & StepLR & 0.0001 &2& - \\
		Adam       & 0.0001 & 0.9 & 0.999 & 4 & StepLR & 0.0001 &2& 1e-8 \\
		AdamW      & 0.0001 & 0.9 & 0.999 & 4 & StepLR & 0.0001 &2& 1e-8 \\
		RAdam      & 0.0001 & 0.9 & 0.999 & 4 & StepLR & 0.0001 &2& 1e-8 \\
		\bottomrule
	\end{tabular}
    }
	\label{tab:hyperparameters_object_detection}
\end{table*}

\begin{figure*}[ht]
    \centering
    \begin{subfigure}[t]{0.19\textwidth}
        \centering
        \includegraphics[width=\linewidth]{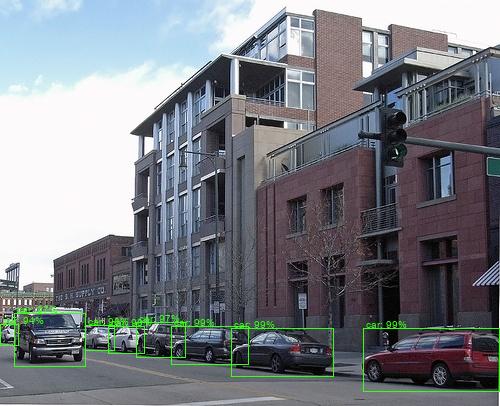}
        \caption{SGDF}
        \label{subfig:sgdf_test1}
    \end{subfigure}
    \begin{subfigure}[t]{0.19\textwidth}
        \centering
        \includegraphics[width=\linewidth]{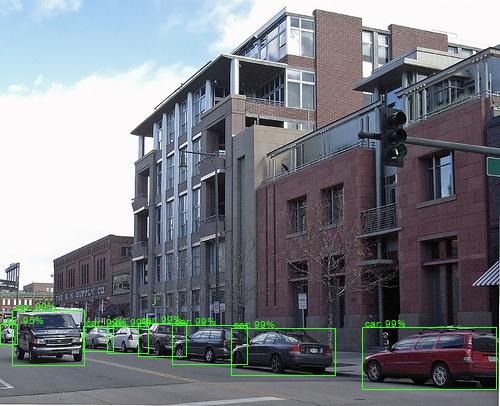}
        \caption{SGDM}
        \label{subfig:sgdm_test1}
    \end{subfigure}
    \begin{subfigure}[t]{0.19\textwidth}
        \centering
        \includegraphics[width=\linewidth]{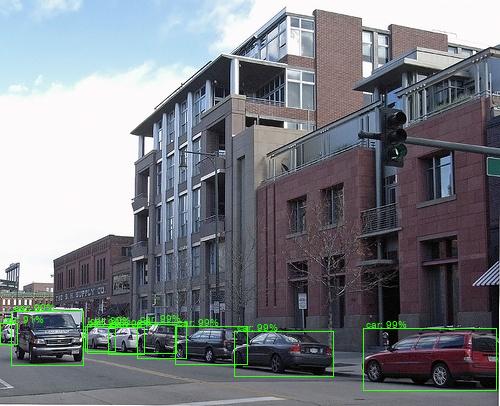}
        \caption{Adam}
        \label{subfig:adam_test1}
    \end{subfigure}
    \begin{subfigure}[t]{0.19\textwidth}
        \centering
        \includegraphics[width=\linewidth]{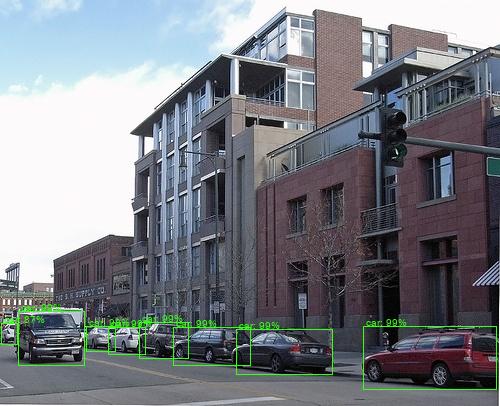}
        \caption{AdamW}
        \label{subfig:adamw_test1}
    \end{subfigure}
    \begin{subfigure}[t]{0.19\textwidth}
        \centering
        \includegraphics[width=\linewidth]{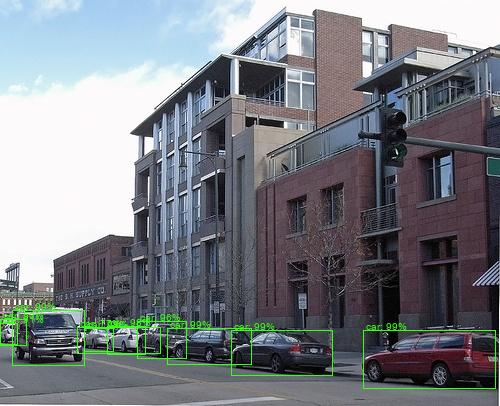}
        \caption{RAdam}
        \label{subfig:radam_test1}
    \end{subfigure}
    \begin{subfigure}[t]{0.19\textwidth}
        \centering
        \includegraphics[width=\linewidth]{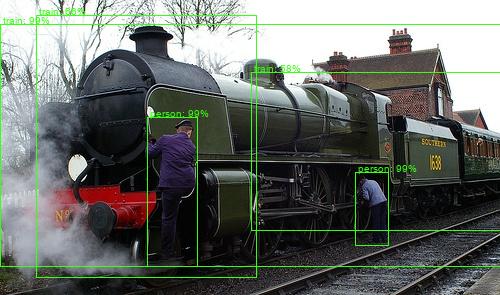}
        \caption{SGDF}
        \label{subfig:sgdf_test2}
    \end{subfigure}
    \begin{subfigure}[t]{0.19\textwidth}
        \centering
        \includegraphics[width=\linewidth]{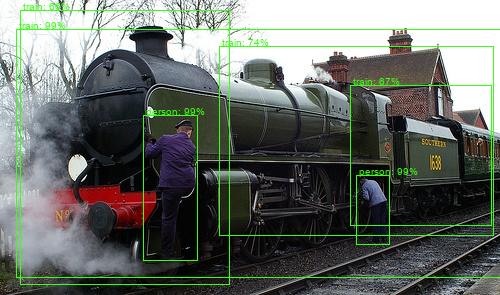}
        \caption{SGDM}
        \label{subfig:sgdm_test2}
    \end{subfigure}
    \begin{subfigure}[t]{0.19\textwidth}
        \centering
        \includegraphics[width=\linewidth]{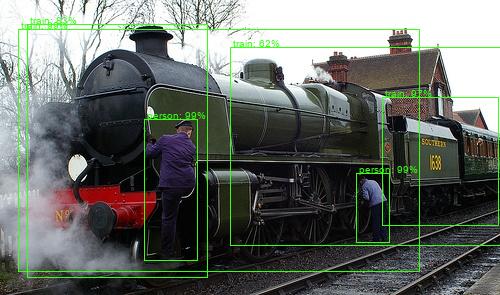}
        \caption{Adam}
        \label{subfig:adam_test2}
    \end{subfigure}
    \begin{subfigure}[t]{0.19\textwidth}
        \centering
        \includegraphics[width=\linewidth]{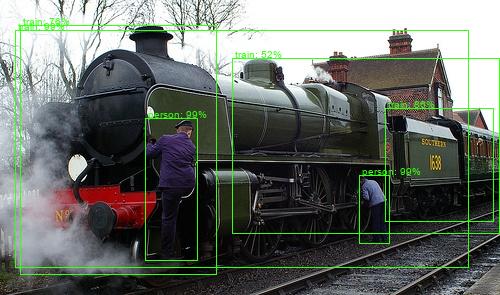}
        \caption{AdamW}
        \label{subfig:adamw_test2}
    \end{subfigure}
    \begin{subfigure}[t]{0.19\textwidth}
        \centering
        \includegraphics[width=\linewidth]{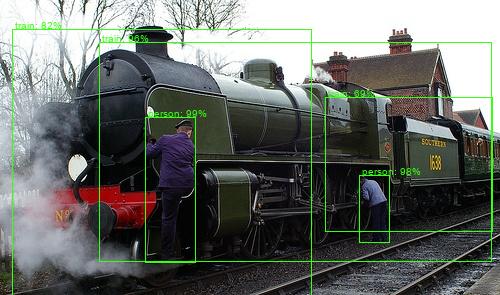}
        \caption{RAdam}
        \label{subfig:radam_test2}
    \end{subfigure}

    \caption{Detection examples using Faster-RCNN + FPN trained on PASCAL VOC.}
    \label{fig:detection_examples}
\end{figure*}


\subsection{Image Generation}
The stability of optimizers is crucial, especially when training Generative Adversarial Networks (GANs). If the generator and discriminator have mismatched complexities, it can lead to imbalance during GAN training, causing the GAN to fail to converge. This is known as model collapse. For instance, Vanilla SGD frequently causes model collapse, making adaptive optimizers like Adam and RMSProp the preferred choice. Therefore, GAN training provides a good benchmark for assessing optimizer stability. For reproducibility details, please refer to the parameter table in \Tabref{tab:image_generation_hyperparameters}.

We evaluated the Wasserstein-GAN with gradient penalty (WGAN-GP)~\cite{salimans2016improved}. Using well-known optimizers~\cite{bernstein2020distance,zaheer2018adaptive}, the model was trained for 100 epochs. We then calculated the Frechet Inception Distance (FID)~\cite{2017GANs} which is a metric that measures the similarity between the real image and the generated image distribution and is used to assess the quality of the generated model (lower FID indicates superior performance). Five random runs were conducted, and the outcomes are presented in \Tabref{tab:fid-wgan-gp}. Results for SGD and MSVAG were extracted from Zhuang~\etal~ 
\cite{zhuang2020adabelief}.

%
\begin{table*}[htbp]
\vspace{-0.5em}
\centering
\caption{FID score of WGAN-GP.}
\resizebox{1.0\textwidth}{!}{%
\begin{tabular}{c|ccccccccc}
\toprule
Method & SGDF & Adam & RMSProp & RAdam & Fromage & Yogi & AdaBound & SGD & MSVAG \\
\midrule
FID & $88.7 \pm 4.9$ & $\textbf{78.6} \pm 4.8$ & $109.2 \pm 14.5$ & $93.4 \pm 8.3$ & $101.5 \pm 28.9$ & $138.7 \pm 21.2$ & $119.8 \pm 24.6$ & $250.3 \pm 30.1$ & $239.7 \pm 5.2$ \\
\bottomrule
\end{tabular}%
}
\label{tab:fid-wgan-gp}
\end{table*}

Experimental results demonstrate that SGDF significantly enhances WGAN-GP model training, achieving a FID score higher than vanilla SGD and outperforming most adaptive optimization methods. The integration of an Optimal Linear Filter in SGDF facilitates smooth gradient updates, mitigating training oscillations and effectively addressing the issue of pattern collapse.

\begin{table}[h]
	\centering
	\caption{Hyperparameters for Image Generation Tasks.}
    \resizebox{0.7\linewidth}{!}{
	\begin{tabular}{@{}lcccccc@{}}
		\toprule
		Optimizer & Learning Rate & $\beta_1$ & $\beta_2$ & Epochs & Batch Size & $\varepsilon$ \\
		\midrule
		SGDF     & 0.01 & 0.5 & 0.999 & 100 & 64 & 1e-8 \\
            SGD     & 0.01 & 0.5 & - & 100 & 64 & - \\
		Adam      & 0.0002 & 0.5 & 0.999 & 100 & 64 & 1e-8 \\
		AdamW     & 0.0002 & 0.5 & 0.999 & 100 & 64 & 1e-8 \\
		Fromage   & 0.01   & 0.5 & 0.999 & 100 & 64 & 1e-8 \\
		RMSProp   & 0.0002 & 0.5 & 0.999 & 100 & 64 & 1e-8 \\
		AdaBound  & 0.0002 & 0.5 & 0.999 & 100 & 64 & 1e-8 \\
		Yogi      & 0.01   & 0.5 & 0.999 & 100 & 64 & 1e-8 \\
		RAdam     & 0.0002 & 0.5 & 0.999 & 100 & 64 & 1e-8 \\
		\bottomrule
	\end{tabular}
    }
	\label{tab:image_generation_hyperparameters}
\end{table}

\subsection{LSTM on Language Modeling}
We experiment with LSTM~\cite{ma2015long} on the Penn-TreeBank dataset~\cite{marcinkiewicz1994building}. For SGDF, we apply gradient clipping with values of 0.1 for the LSTM  1-layer, 0.15 for 2-layer, and 0.25 for 3-layer. The learning rate is decayed by multiplying it by 0.1 at epochs [100, 145] using StepLR scheduling, as detailed in \Tabref{tab:lstmhyperparameters} alongside other hyperparameters. We report the mean and standard deviation across 3 independent runs with random seeds \{0, 1, 2\} to ensure result robustness. As shown in \Figref{fig:lstm}, which presents both training and test perplexity curves for 1-layer, 2-layer, and 3-layer LSTM architectures, SGDF consistently outperforms other optimizers by achieving the lowest perplexity across all model configurations, highlighting its effectiveness for sequence modeling tasks. Notably, Adabelief performs poorly under standard parameters, failing to match the performance of other optimizers.

\begin{figure}[htp]
    \centering
    \begin{subfigure}[t]{0.32\textwidth}
        \centering
        \includegraphics[width=\linewidth]{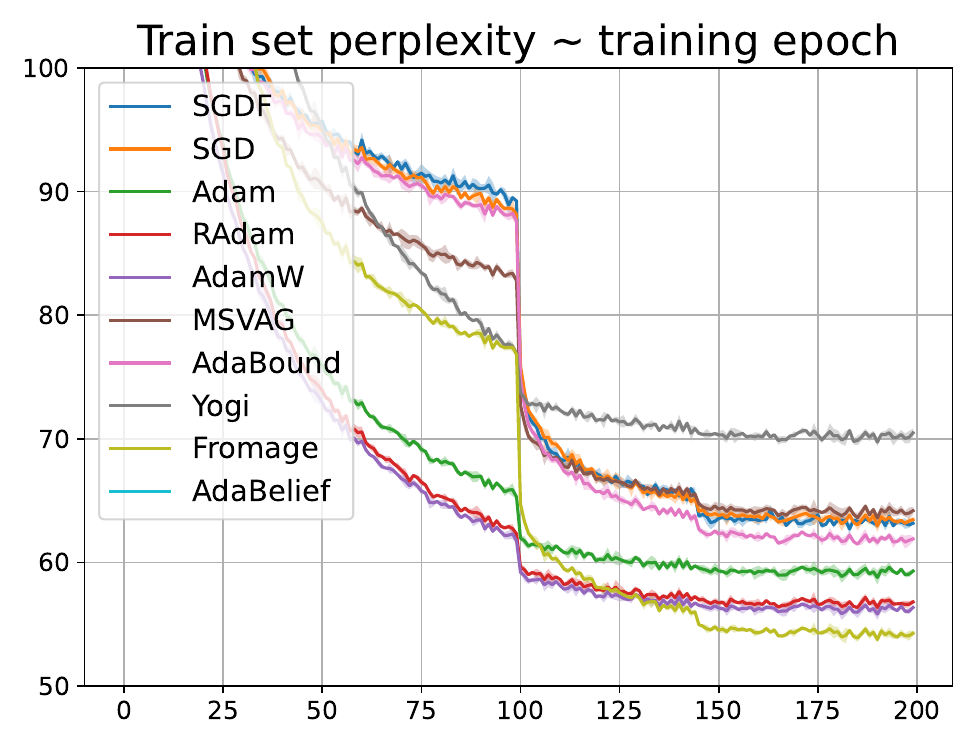}
        \caption{1-layer LSTM}
        \label{subfig:lstm_1layer_train}
    \end{subfigure}
    \begin{subfigure}[t]{0.32\textwidth}
        \centering
        \includegraphics[width=\linewidth]{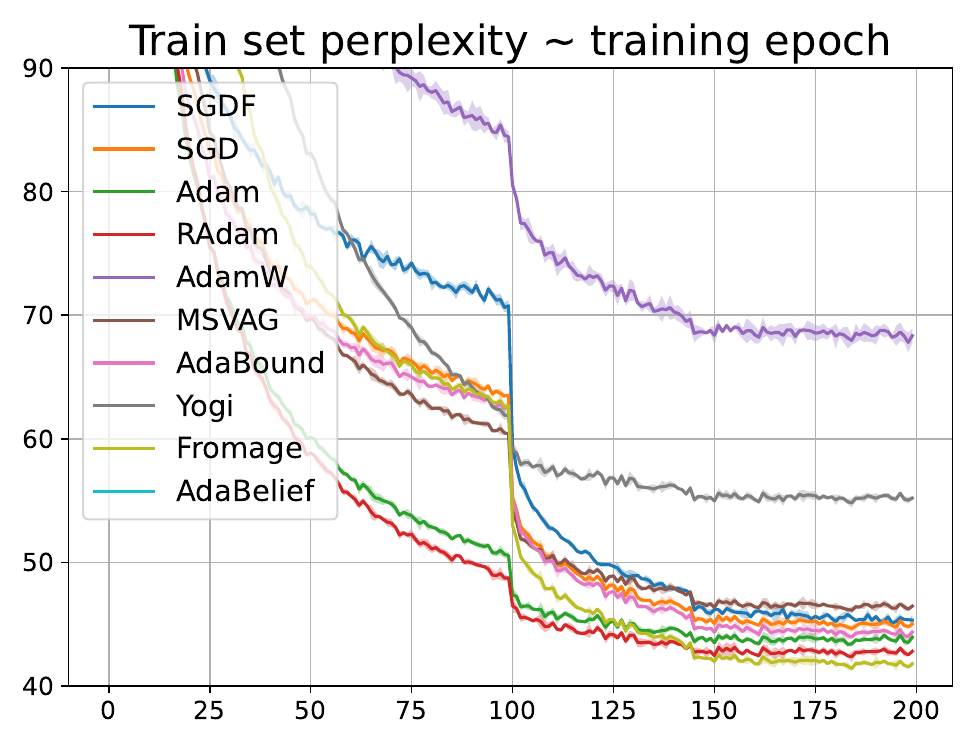}
        \caption{2-layer LSTM}
        \label{subfig:lstm_2layer_train}
    \end{subfigure}
    \begin{subfigure}[t]{0.32\textwidth}
        \centering
        \includegraphics[width=\linewidth]{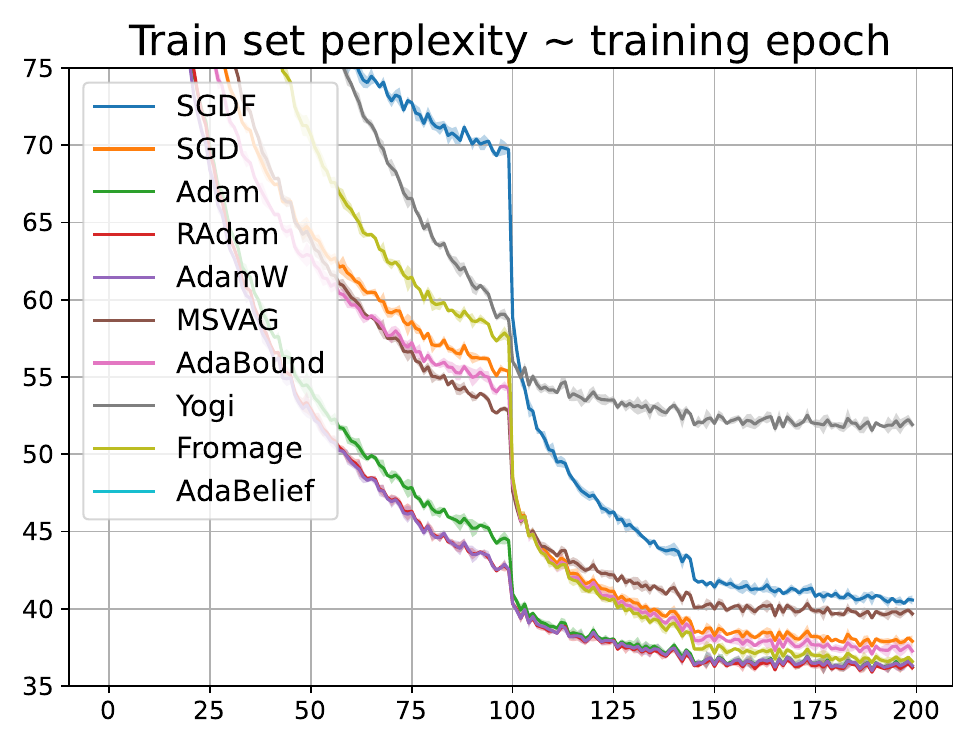}
        \caption{3-layer LSTM}
        \label{subfig:lstm_3layer_train}
    \end{subfigure}
    \begin{subfigure}[t]{0.32\textwidth}
        \centering
        \includegraphics[width=\linewidth]{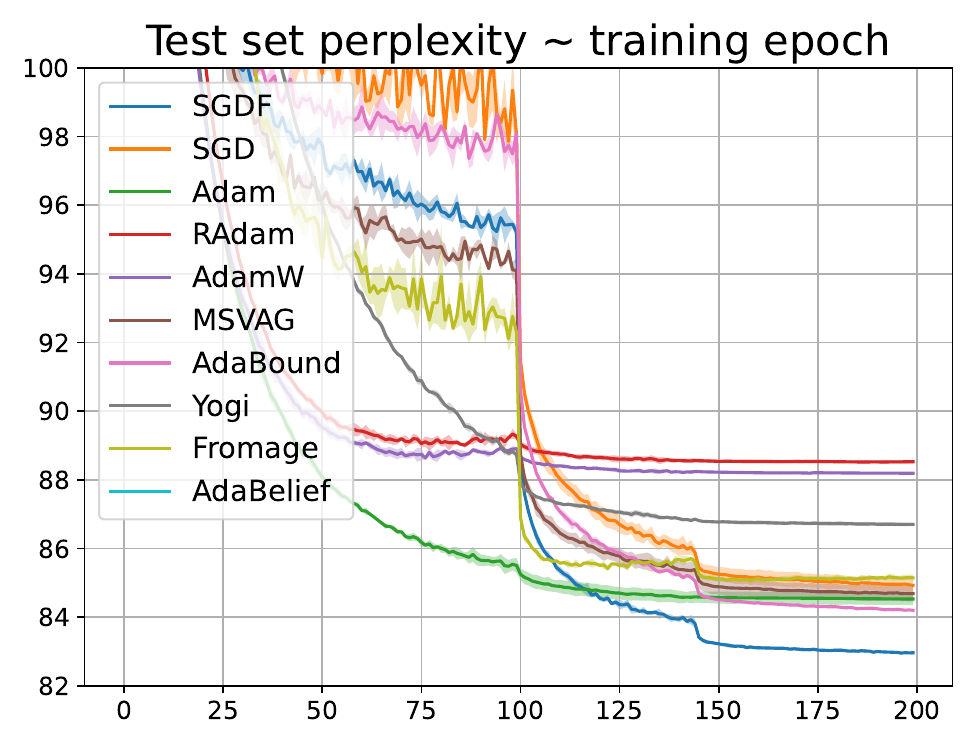}
        \caption{1-layer LSTM}
        \label{subfig:lstm_1layer_test}
    \end{subfigure}
    \begin{subfigure}[t]{0.32\textwidth}
        \centering
        \includegraphics[width=\linewidth]{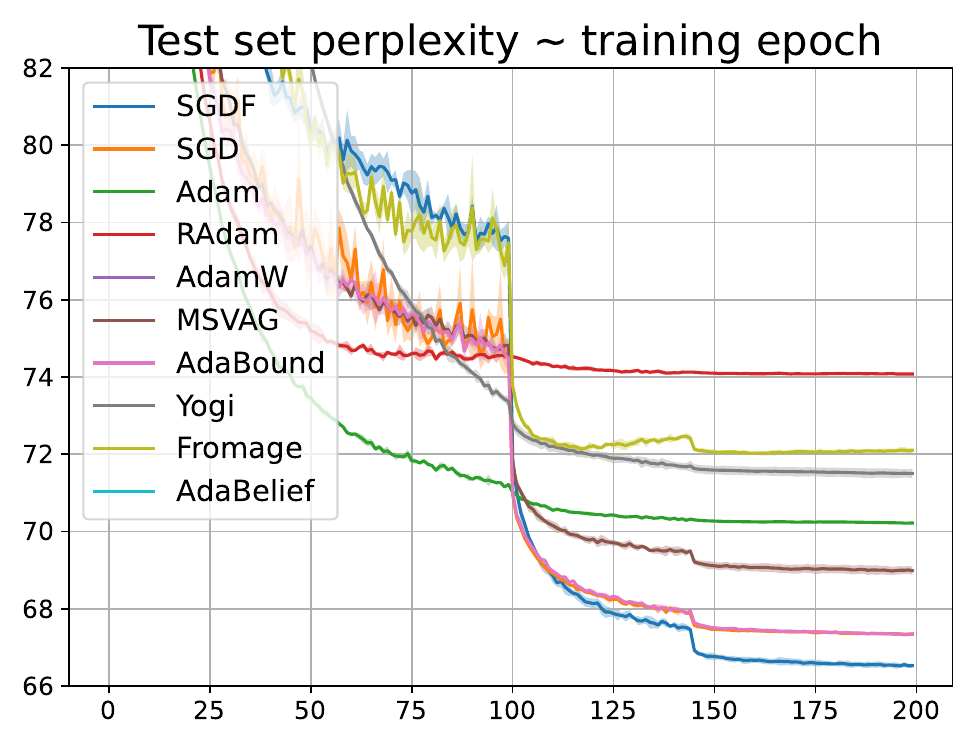}
        \caption{2-layer LSTM}
        \label{subfig:lstm_2layer_test}
    \end{subfigure}
    \begin{subfigure}[t]{0.32\textwidth}
        \centering
        \includegraphics[width=\linewidth]{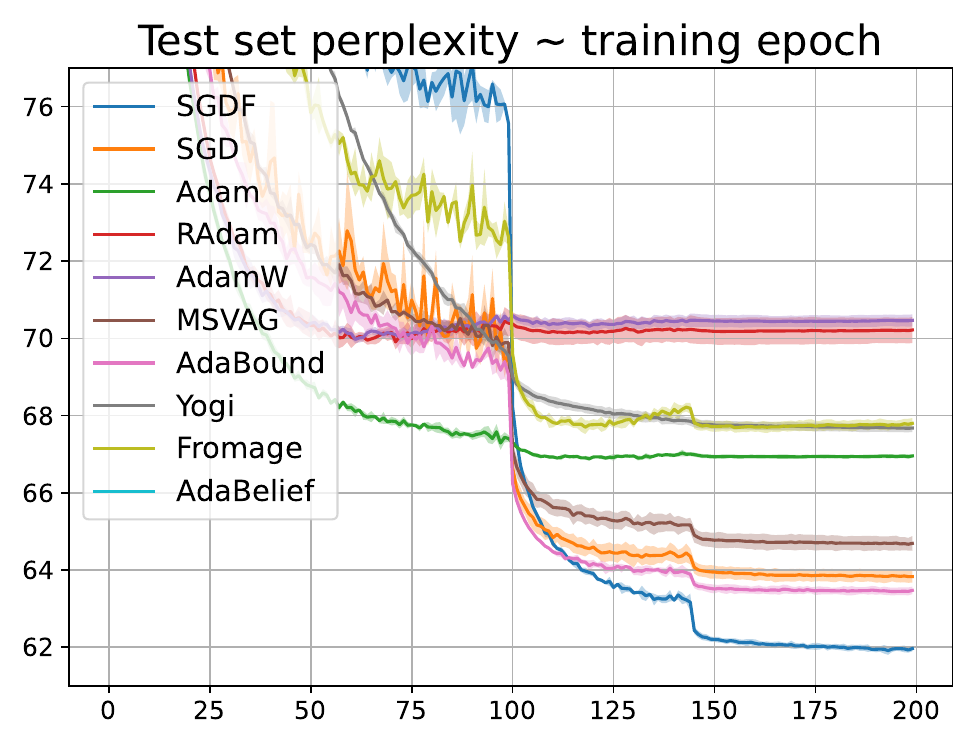}
        \caption{3-layer LSTM}
        \label{subfig:lstm_3layer_test}
    \end{subfigure}

    \caption{Training (top row) and test (bottom row) perplexity on Penn-TreeBank dataset, lower is better.}
    \label{fig:lstm}
\end{figure}

\begin{table}[h]
	\centering
	\caption{Hyperparameters used for LSTM. }
    \resizebox{1.0\linewidth}{!}{
	\begin{tabular}{@{}lcccccccc@{}}
		\toprule
		Optimizer & Learning Rate & $\beta_1$ & $\beta_2$ & Epochs & Schedule & Weight Decay & Batch Size & $\varepsilon$ \\
		\midrule
		SGDF    & 60 & 0.9   & 0.999  & 200 & StepLR      & 1.2e-6 & 20 & 1e-8 \\
		SGD     &  30  & 0.9 & -   & 200 & StepLR  & 1.2e-6 & 20 & -    \\
		Adam    &  0.001 & 0.9 & 0.999 & 200 & StepLR  & 1.2e-6 & 20 & 1e-8 \\
		RAdam   &  0.001 & 0.9 & 0.999 & 200 & StepLR  & 1.2e-6 & 20 & 1e-8 \\
		AdamW   &  0.001 & 0.9 & 0.999 & 200 & StepLR  & 1.2e-6   & 20 & 1e-8 \\
		MSVAG   &  30 & 0.9 & 0.999 & 200 & StepLR  & 1.2e-6 & 20 & 1e-8 \\
		AdaBound&  0.001 & 0.9 & 0.999 & 200 & StepLR   & 1.2e-6 & 20 & - \\
		Yogi  & 0.01 & 0.9 & 0.999 & 200 & StepLR  & 1.2e-6 & 20 & 1e-3 \\
		Fromage    &  0.01 & 0.9 & 0.999   & 200 & StepLR  & 1.2e-6 & 20 & -    \\
        Adabelief    &  0.001 & 0.9 & 0.999   & 200 & StepLR  & 1.2e-6 & 20 & 1e-8    \\
		\bottomrule
	\end{tabular}
    }
	\label{tab:lstmhyperparameters}
\end{table}

\subsection{Post-training in ViT.}
\label{app:subsec:post-training}
To evaluate SGDF's performance, we used Vision Transformers (ViT)~\cite{dosovitskiy2020image} on six benchmark datasets: CIFAR-10, CIFAR-100, Oxford-IIIT-Pets~\cite{parkhi2012cats}, Oxford Flowers-102~\cite{nilsback2008automated}, Food101~\cite{bossard2014food}, and ImageNet-1K. Two ViT variants, ViT-B/32 and ViT-L/32, pretrained on ImageNet-21K, were selected. For fine-tuning, we replaced the original MLP classification head with a new fully connected layer, tailored to the dataset categories. All Transformer backbone weights were retained, preserving the rich representations learned from ImageNet-21K. We increased the image resolution (\eg, from $224 \times 224$ to $384 \times 384$) to improve accuracy, while adjusting positional encoding through 2D interpolation to match the new resolution. For optimization, SGDF was compared to SGD with momentum as a baseline (We research learning set \{ 0.001, 0.003, 0.01, 0.03\} same as~\cite{dosovitskiy2020image}. For ours method, we're not tuning and just mirror the hyperparameter in the CIFAR experiments.), using cosine learning rate decay and no weight decay. A batch size of 512 and global gradient clipping (norm of 1) were used to prevent gradient explosion. All experiments were trained uniformly for 10 epochs and the random seed is set as the current year. We set the random seed to \{0, 1, 2\}. Results are summarized in \Tabref{tab:vitresult}. We summarized the hyperparameter in \Tabref{tab:vithyperparameters}. 
\begin{table}[htp]
	\centering
	\caption{Hyperparameters used for fine-tuning ViT.}
    \resizebox{1.0\linewidth}{!}{
	\begin{tabular}{@{}lccccccccc@{}}
		\toprule
		Optimizer & Learning Rate & $\beta_1$ & $\beta_2$ & Epochs & Schedule & Weight Decay & Batch Size & $\varepsilon$ & Resolution \\
		\midrule
		 SGDF    & 0.5 & 0.9   & 0.999  & 10 & Cosine      & 0 & 512 & 1e-8 & 384\\
		 SGD     & 0.03  & 0.9 & -     & 10 & Cosine      & 0 & 512 & - & 384\\
		\bottomrule
	\end{tabular}
    }
	\label{tab:vithyperparameters}
\end{table}

\subsection{Top Eigenvalues of Hessian and Hessian Trace}
We computed the Hessian spectrum of ResNet-18 trained on the CIFAR-100 dataset for 200 epochs using more optimization methods: SGDF, SGD, SGD-EMA, SGD-CM, Adabelief, Adam, AdamW, and RAdam. We employed power iteration~\citep{2018Hessian} to compute the top eigenvalues of Hessian and Hutchinson’s method~\citep{2020PyHessian} to compute the Hessian trace. Histograms illustrating the distribution of the top 50 Hessian eigenvalues for each optimization method are presented in \Figref{fig:hessian_spectrum}. SGDF brings lower eigenvalue and trace of the hessian matrix, which explains the fact that SGDF demonstrates better performance than SGD as the categorization category increases. Note that~\Figref{subfig:hessian_adamw} shows that AdamW achieves very low hessian matrix eigenvalues and traces, but the final test set accuracy is about 4\% lower than the other methods, and that AdamW's unique decouple weight decay changes the nature of the converged solution (We apply decoupled weight decay to other algorithms and similar results occur). 

\begin{figure*}[htbp] 
    \centering
    \begin{subfigure}[t]{0.23\textwidth}
        \centering
        \begin{overpic}[width=\textwidth]{experiments/hessian_spectra/histagram_hessian/sgdf.pdf}
            \put(60,67){\scriptsize Trace: 192.47 }
            \put(64,60){\scriptsize $\lambda_{\text{max}}$: 13.32}
        \end{overpic}
        \caption{SGDF}
        \label{subfig:hessian_sgdf}
    \end{subfigure}%
    \begin{subfigure}[t]{0.23\textwidth}
        \centering
        \begin{overpic}[width=\textwidth]{experiments/hessian_spectra/histagram_hessian/sgd.pdf}
            \put(62,67){\scriptsize Trace: 419.30}
            \put(67,60){\scriptsize $\lambda_{\text{max}}$: 22.51}
        \end{overpic}
        \caption{SGD}
        \label{subfig:hessian_sgd}
    \end{subfigure}%
    \begin{subfigure}[t]{0.23\textwidth}
        \centering
        \begin{overpic}[width=\textwidth]{experiments/hessian_spectra/histagram_hessian/ema.pdf}
            \put(63,67){\scriptsize Trace: 284.38}
            \put(68,60){\scriptsize $\lambda_{\text{max}}$: 24.11}
        \end{overpic}
        \caption{SGD-EMA}
        \label{subfig:hessian_ema}
    \end{subfigure}%
    \begin{subfigure}[t]{0.23\textwidth}
        \centering
        \begin{overpic}[width=\textwidth]{experiments/hessian_spectra/histagram_hessian/sgdm.pdf}
            \put(63,67){\scriptsize Trace: 491.63}
            \put(68,60){\scriptsize $\lambda_{\text{max}}$: 32.61}
        \end{overpic}
        \caption{SGD-CM}
        \label{subfig:hessian_sgdm}
    \end{subfigure}
    
    \begin{subfigure}[t]{0.23\textwidth}
        \centering
        \begin{overpic}[width=\textwidth]{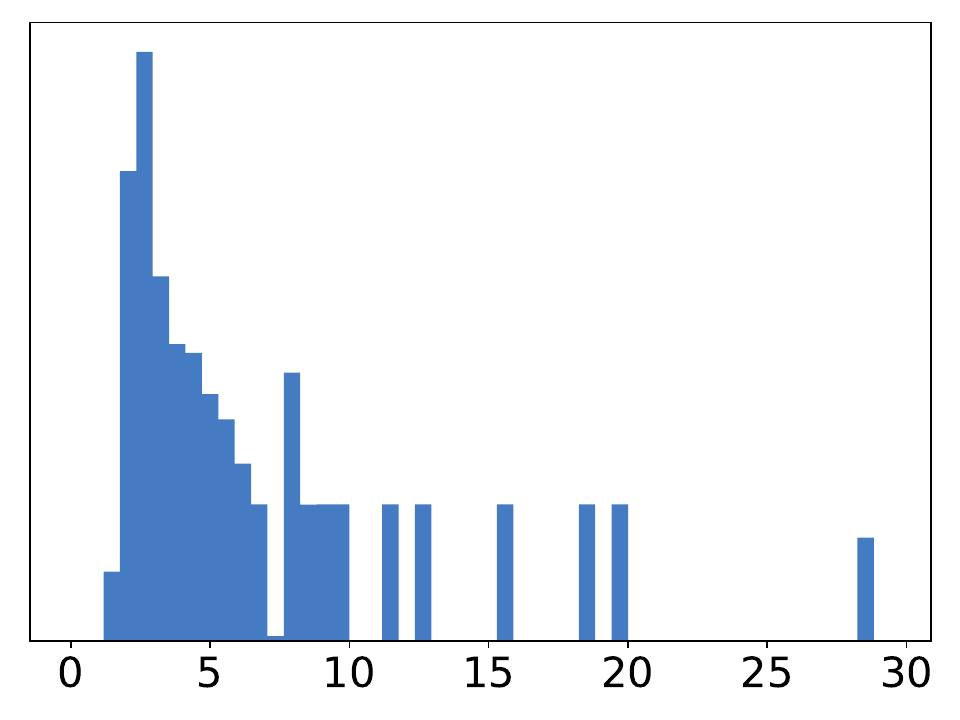}
            \put(62,67){\scriptsize Trace: 427.72 }
            \put(67,60){\scriptsize $\lambda_{\text{max}}$: 28.38}
        \end{overpic}
        \caption{AdaBelief}
        \label{subfig:hessian_adabelief}
    \end{subfigure}%
    \begin{subfigure}[t]{0.23\textwidth}
        \centering
        \begin{overpic}[width=\textwidth]{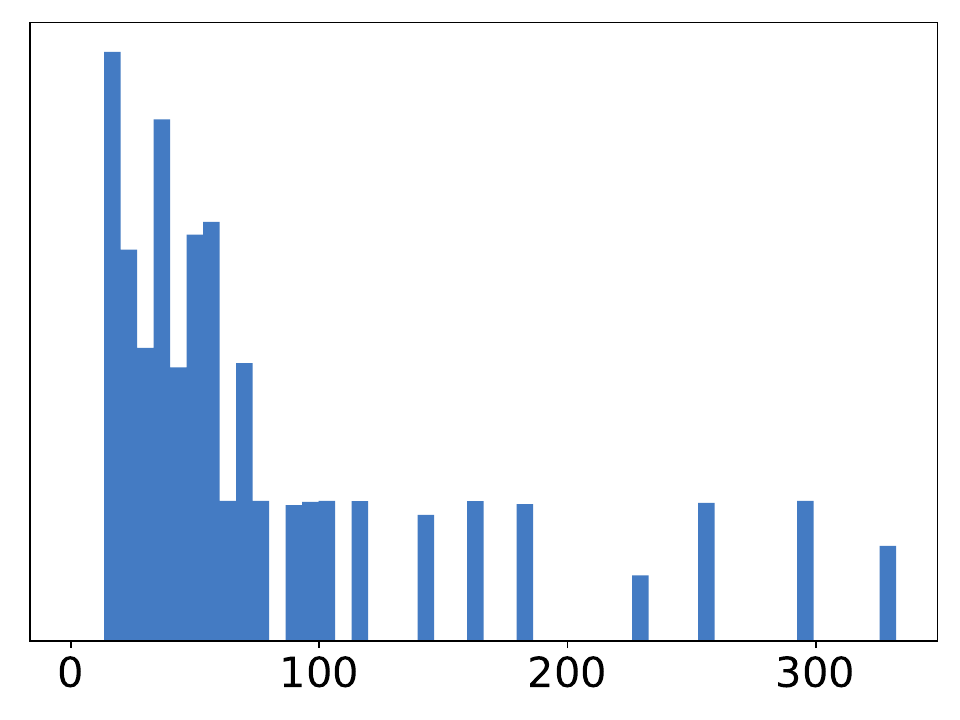}
            \put(59,67){\scriptsize Trace: 3970.98}
            \put(64,60){\scriptsize $\lambda_{\text{max}}$: 330.04}
        \end{overpic}
        \caption{Adam}
        \label{subfig:hessian_adam}
    \end{subfigure}%
    \begin{subfigure}[t]{0.23\textwidth}
        \centering
        \begin{overpic}[width=\textwidth]{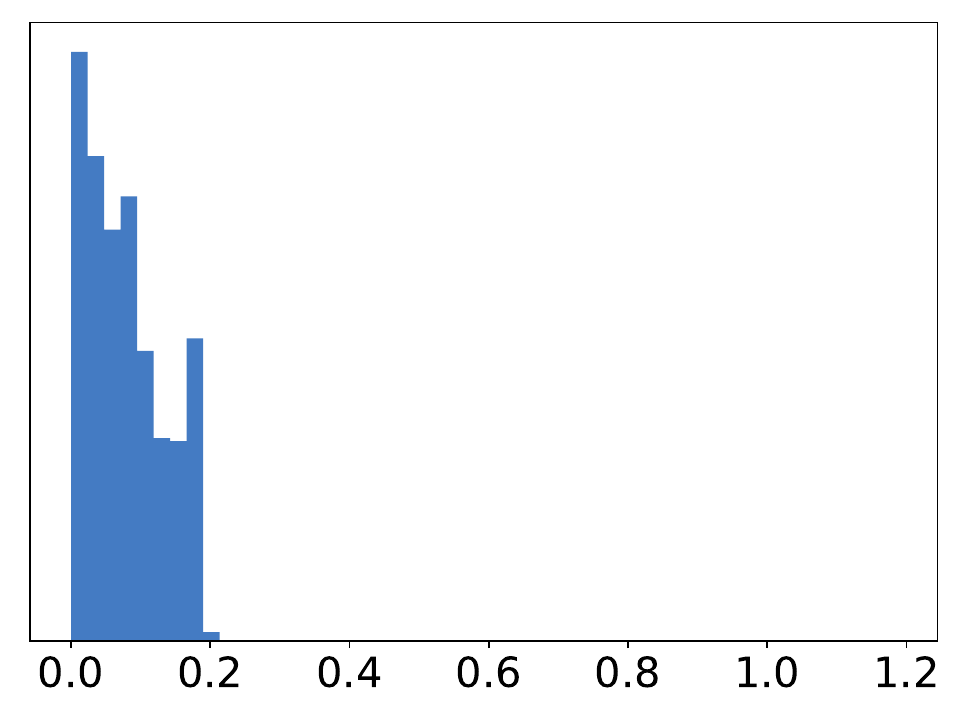}
            \put(69,67){\scriptsize Trace: 0.39}
            \put(71,60){\scriptsize $\lambda_{\text{max}}$: 0.11}
        \end{overpic}
        \caption{AdamW}
        \label{subfig:hessian_adamw}
    \end{subfigure}%
    \begin{subfigure}[t]{0.23\textwidth}
        \centering
        \begin{overpic}[width=\textwidth]{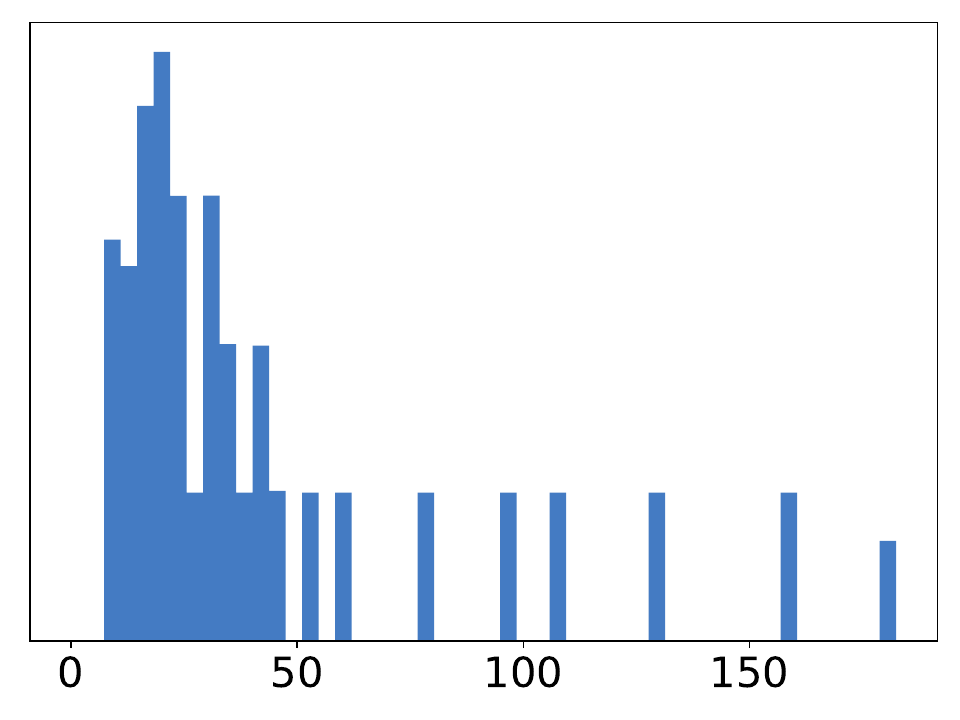}
            \put(59,67){\scriptsize Trace: 1927.24}
            \put(63,60){\scriptsize $\lambda_{\text{max}}$: 180.98}
        \end{overpic}
        \caption{RAdam}
        \label{subfig:hessian_radam}
    \end{subfigure}
    
    \caption{Histogram of Top 50 Hessian Eigenvalues.}
    \label{fig:hessian_spectrum}
\end{figure*}

\vspace{3em}
\subsection{Visualization of Landscapes}
We visualized the loss landscapes of models trained with SGD, SGDM, SGDF, and Adam using the ResNet-18 model on CIFAR-100, following the method in~\citep{li2018visualizing}. All models are trained with the same hyperparameters for 200 epochs, as detailed in \Secref{sec:section4.1}. As shown in \Figref{fig:loss_landscape}, SGDF finds flatter minima. Notably, the visualization reveals that Adam is more prone to converge to sharper minima.
\begin{figure}[htbp]
    \centering
    \begin{subfigure}[t]{0.24\linewidth}
        \centering
        \includegraphics[width=\linewidth]{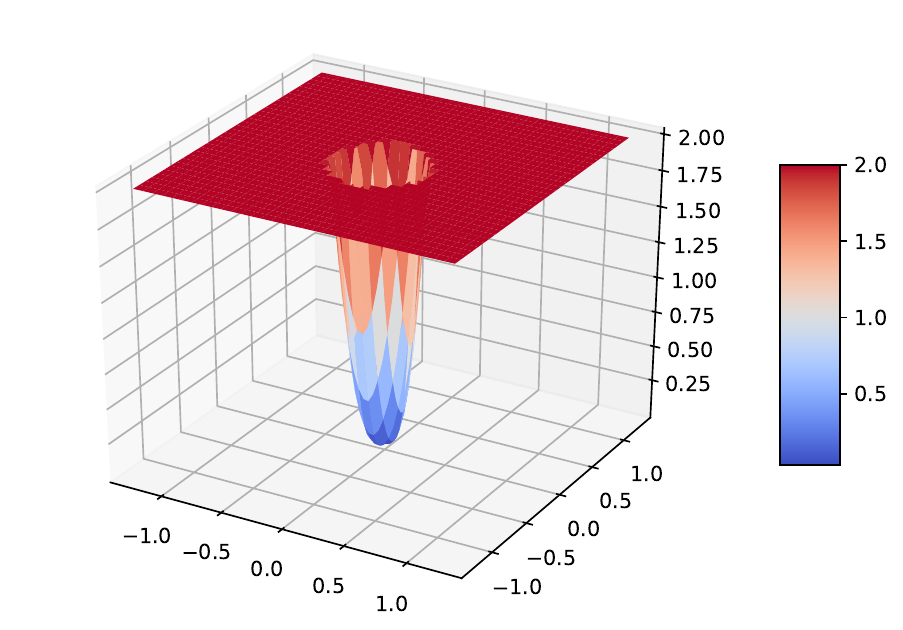}
        \caption{Adam}
        \label{subfig:adam}
    \end{subfigure}
    \hfill
    \begin{subfigure}[t]{0.24\linewidth}
        \centering
        \includegraphics[width=\linewidth]{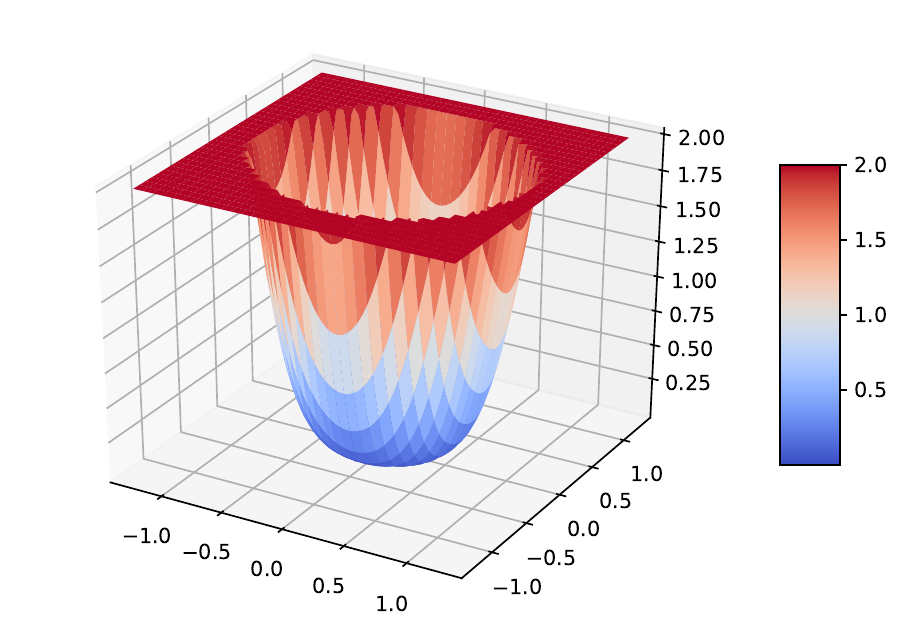}
        \caption{SGD}
        \label{subfig:sgd}
    \end{subfigure}
    \hfill
    \begin{subfigure}[t]{0.24\linewidth}
        \centering
        \includegraphics[width=\linewidth]{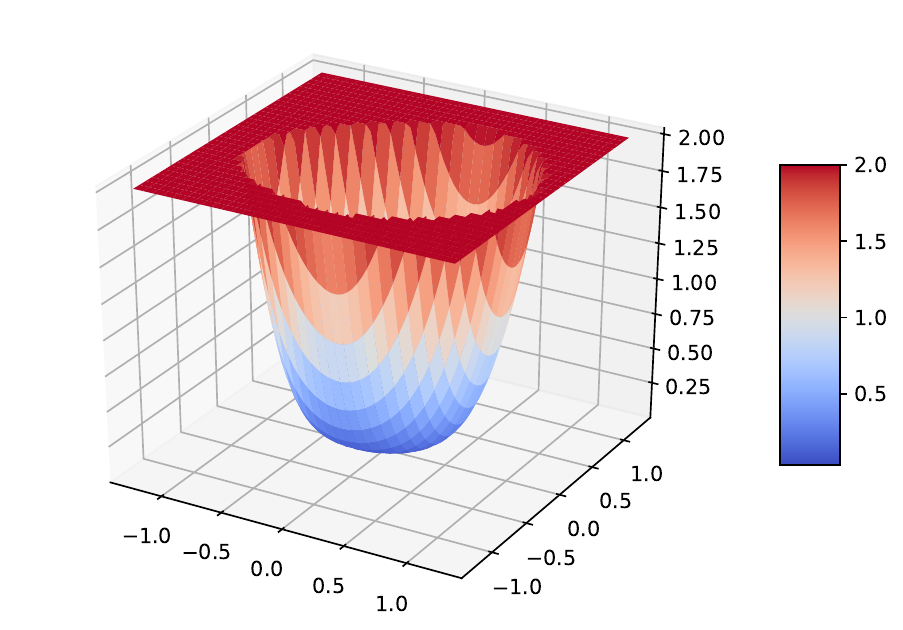}
        \caption{SGDM}
        \label{subfig:sgdm}
    \end{subfigure}
    \hfill
    \begin{subfigure}[t]{0.24\linewidth}
        \centering
        \includegraphics[width=\linewidth]{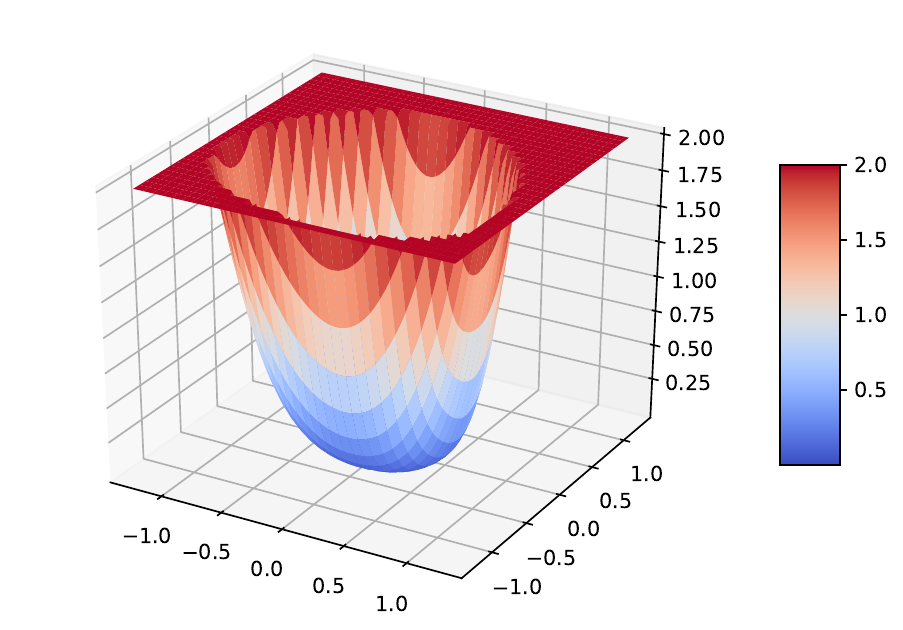}
        \caption{SGDF}
        \label{subfig:sgdf}
    \end{subfigure}

    \caption{Visualization of loss landscape. Adam converges to sharp minima.}
    \label{fig:loss_landscape}
\end{figure}

\subsection{Computational Cost Analysis}

\Tabref{tab:optimizer_flops} reports the per-parameter arithmetic cost of several optimizers. We count elementwise multiplications, additions/subtractions, divisions, and square roots as unit-cost operations. Red numbers indicate the additional overhead of \emph{coupled} weight decay, while green numbers indicate the smaller overhead of \emph{decoupled} weight decay. Compared with plain stochastic gradient methods, adaptive optimizers (e.g., Adam and SGDF) incur extra operations for moment estimation and normalization. Their optimized variants (AdamW and optimized SGDF) reduce compute by removing redundant elementwise divisions and using decoupled weight decay.

\begin{table}[htbp]
\centering
\renewcommand{\arraystretch}{1.25}
\setlength{\tabcolsep}{6pt}
\resizebox{\textwidth}{!}{
\begin{tabular}{lcl}
\toprule
\textbf{Optimizer} & \textbf{Per-Parameter FLOPs} & \textbf{Operation Breakdown} \\
\midrule
\textbf{SGD} &
$\approx$ \textbf{2 ops/param} \textcolor{red}{(+2 ops)} \textcolor{green!60!black}{(+1 op)} &
$\{\,1\times,\;1+\}$ \\[4pt]
\textbf{SGDM} &
$\approx$ \textbf{4 ops/param} \textcolor{red}{(+2 ops)} \textcolor{green!60!black}{(+1 op)} &
$\{\,2\times,\;2+\}$ \\[4pt]
\textbf{Adam} &
\textbf{16 ops/param $\rightarrow$ 14 ops/param} \textcolor{red}{(+2 ops)} \textcolor{green!60!black}{(+1 op)} &
$\{\,7\times,\;5+,\;3\div,\;1\sqrt{\;}\} \;\rightarrow\; \{\,7\times,\;4+,\;2\div,\;1\sqrt{\;}\}$ \\[4pt]
\textbf{SGDF} &
\textbf{22 ops/param $\rightarrow$ 20 ops/param} \textcolor{red}{(+2 ops)} \textcolor{green!60!black}{(+1 op)} &
$\{\,10\times,\;6+,\;2-,\;2\div,\;1\sqrt{\;}\} \;\rightarrow\; \{\,10\times,\;6+,\;1-,\;1\div,\;1\sqrt{\;}\}$ \\[2pt]
\bottomrule
\end{tabular}
}
\caption{Arithmetic cost and operation breakdown of optimizers.
\textcolor{red}{Red: Coupled Weight Decay}, \textcolor{green!60!black}{Green: Decoupled Weight Decay}.}
\label{tab:optimizer_flops}
\end{table}

Both \textbf{Adam} and \textbf{SGDF} reduce per-parameter cost through algebraic simplification and bias-correction restructuring.
For \textbf{Adam}, the update follows \(m_t=\beta_1 m_{t-1}+(1-\beta_1)g_t\), \(v_t=\beta_2 v_{t-1}+(1-\beta_2)g_t^2\), and \(\theta_{t+1}=\theta_t-\eta\,\frac{m_t/(1-\beta_1^t)}{\sqrt{v_t/(1-\beta_2^t)}+\epsilon}\). In \textbf{AdamW}, redundant elementwise divisions are avoided by precomputing \(\text{step\_size}=\eta\,\frac{\sqrt{1-\beta_2^t}}{1-\beta_1^t}\) and using decoupled weight decay: \(\theta \leftarrow (1-\eta\lambda)\theta - \text{step\_size}\,\frac{m_t}{\sqrt{v_t}+\epsilon}\). This reduces roughly two operations per parameter (16 \(\rightarrow\) 14) while preserving the update.

Inspired by this design, \textbf{SGDF} refines adaptive updates via the residual-variance estimate. Let \(\widehat{m}_t=m_t/(1-\beta_1^t)\) and \(\widehat{s}_t=s_t/c_{2,t}\), where \(c_{2,t}=\frac{(1+\beta_1)(1-\beta_2^t)}{(1-\beta_1)(1-\beta_1^{2t})}\). The Wiener gain is \(K_t=\frac{\widehat{s}_t}{\widehat{s}_t+(g_t-\widehat{m}_t)^2+\epsilon}\), and the filtered gradient is \(g'_t=\widehat{m}_t+K_t^{\gamma}(g_t-\widehat{m}_t)\), yielding \(\theta_{t+1}=\theta_t-\eta\,g'_t\). A direct elementwise implementation explicitly forms \(\widehat{s}_t=s_t/c_{2,t}\) (one elementwise division) and evaluates \((g_t-\widehat{m}_t)\) twice (once in \(K_t\), once in \(g'_t\)), resulting in about 22 operations per parameter.

Our optimized implementation avoids explicitly forming \(\widehat{s}_t\) by substituting \(\widehat{s}_t=s_t/c_{2,t}\) into \(K_t\) and multiplying numerator and denominator by \(c_{2,t}\), obtaining the equivalent form \(K_t=\frac{s_t}{s_t+c_{2,t}\left((g_t-\widehat{m}_t)^2+\epsilon\right)}\). This removes one elementwise division (replacing it with a scalar multiplication). In addition, we reuse the residual \(r_t=g_t-\widehat{m}_t\) across the gain computation and the final filtered update, eliminating one redundant elementwise subtraction. With decoupled weight decay, the per-parameter cost is reduced from about 22 \(\rightarrow\) 20 operations.

\begin{table}[htbp]
\centering
\renewcommand{\arraystretch}{1.25}
\setlength{\tabcolsep}{10pt}
\begin{tabular}{lccc}
\toprule
\textbf{Model} & \textbf{SGDM (h)} & \textbf{Adam (h)} & \textbf{SGDF (h)} \\
\midrule
\textbf{VGG13}       & 45.71 & 45.90 & $47.78_{\textcolor{cvprblue}{\downarrow 0.63}}$ \\
\textbf{ResNet101}   & 35.11 & 35.32 & $39.77_{\textcolor{cvprblue}{\downarrow 1.59}}$ \\
\textbf{DenseNet161} & 57.68 & 58.03 & $64.81_{\textcolor{cvprblue}{\downarrow 1.99}}$ \\
\bottomrule
\end{tabular}
\caption{Total training time (h) for 100 epochs with batch size 256 on FP16 AMP in $224^2$ Pixel ImageNet.}
\label{tab:runtime_100epoch}
\end{table}

To complement the theoretical operation analysis, we further conducted an empirical runtime evaluation to quantify the real-world computational efficiency of different optimizers. Each optimizer was benchmarked on FP16 mixed-precision training for 100 epochs with a batch size of 256 across representative CNN backbones (VGG13, ResNet101, DenseNet161) on ImageNet. The measured wall-clock times are summarized in \Tabref{tab:runtime_100epoch}.

\subsection{Ablation Study}
\label{app:subsec:ablation}
We derived a correction factor $(1- \beta_{1})(1 - \beta_{1}^{2t}) / (1 + \beta_{1})$ from the geometric progression to correct the variance of by the correction factor. So we test the SGDF with or without correction in VGG, ResNet, DenseNet on CIFAR. We report both test accuracy in \Figref{fig:correction}. It can be seen that the SGDF with correction exceeds the uncorrected one.
\begin{figure}[t]
    \centering
    \begin{subfigure}[t]{0.33\linewidth}
        \centering
        \includegraphics[width=\linewidth]{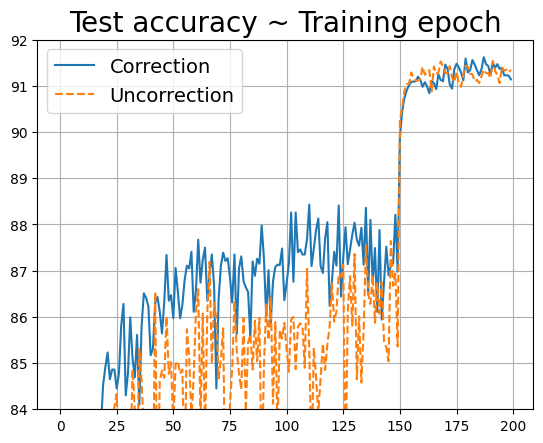}
        \caption{VGG11 on CIFAR-10}
        \label{subfig:cifar10_vgg}
    \end{subfigure}
    \hfill
    \begin{subfigure}[t]{0.33\linewidth}
        \centering
        \includegraphics[width=\linewidth]{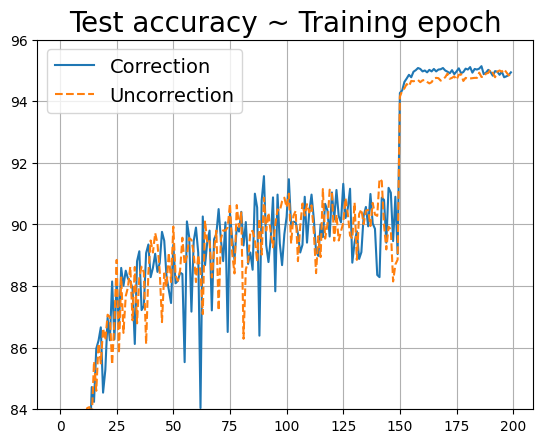}
        \caption{ResNet34 on CIFAR-10}
        \label{subfig:cifar10_resnet}
    \end{subfigure}
    \hfill
    \begin{subfigure}[t]{0.33\linewidth}
        \centering
        \includegraphics[width=\linewidth]{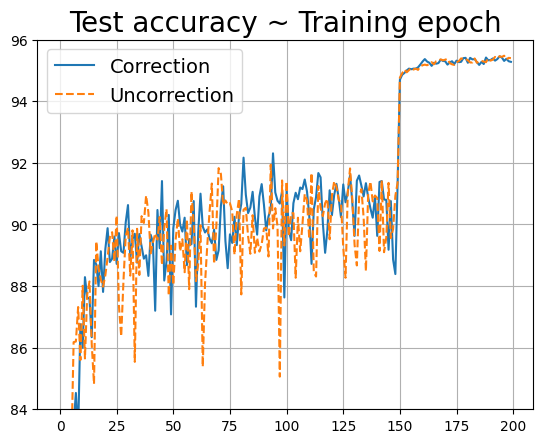}
        \caption{DenseNet121 on CIFAR-10}
        \label{subfig:cifar10_densenet}
    \end{subfigure}
    
    \vspace{2mm}
    
    \begin{subfigure}[t]{0.33\linewidth}
        \centering
        \includegraphics[width=\linewidth]{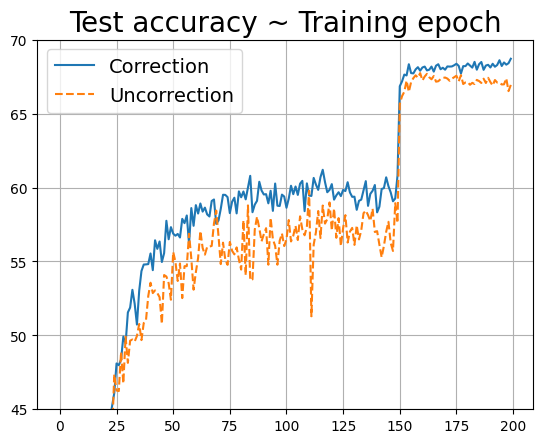}
        \caption{VGG11 on CIFAR-100}
        \label{subfig:cifar100_vgg}
    \end{subfigure}
    \hfill
    \begin{subfigure}[t]{0.33\linewidth}
        \centering
        \includegraphics[width=\linewidth]{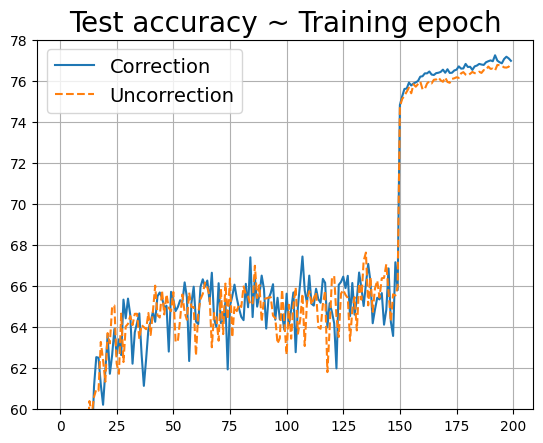}
        \caption{ResNet34 on CIFAR-100}
        \label{subfig:cifar100_resnet}
    \end{subfigure}
    \hfill
    \begin{subfigure}[t]{0.33\linewidth}
        \centering
        \includegraphics[width=\linewidth]{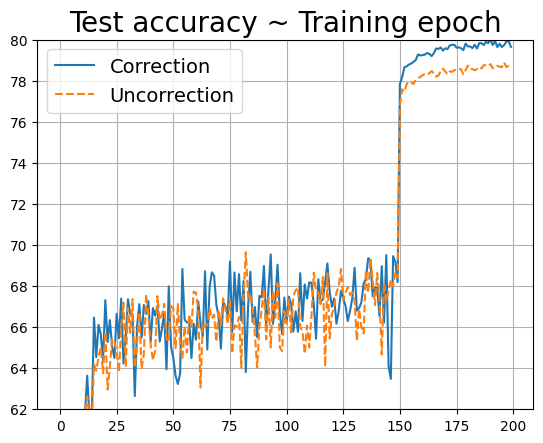}
        \caption{DenseNet121 on CIFAR-100}
        \label{subfig:cifar100_densenet}
    \end{subfigure}
    \caption{SGDF with or without the correction factor. The curve shows the accuracy of the test.}
    \label{fig:correction}
\end{figure}
\begin{figure}[t]
\centering
\begin{minipage}[b]{0.32\textwidth}
    \centering
    \includegraphics[width=\textwidth]{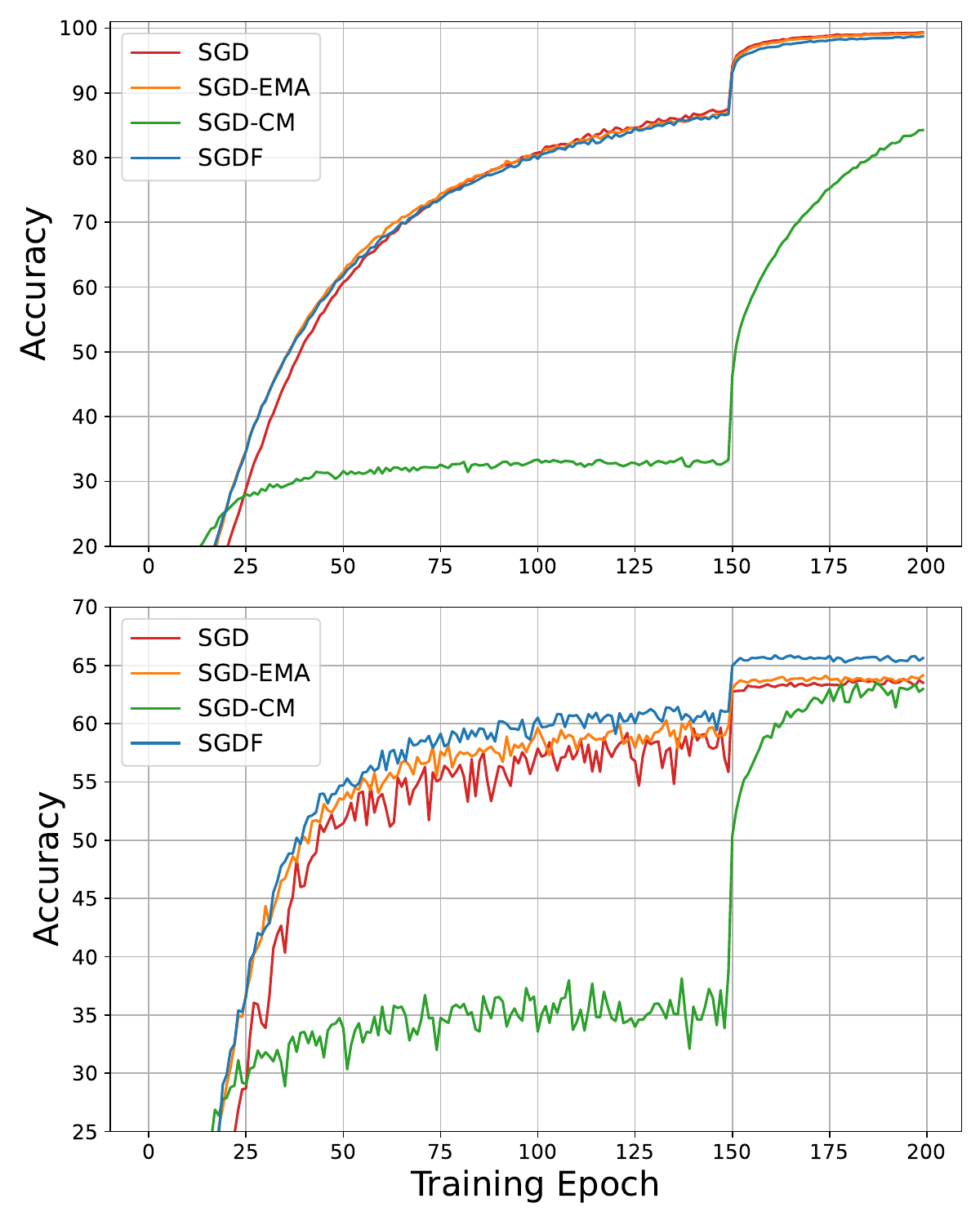}
    \caption{Train the VGG model on the CIFAR-100 dataset using the same initial learning rate of 0.1, and multiply it by a factor of 0.1 at the 150th epoch.}
    \label{fig:accuracy}
\end{minipage}
\hfill
\begin{minipage}[b]{0.66\textwidth}
    \centering
    \phantomcaption
    \begin{subfigure}[b]{0.45\textwidth}
        \centering
        \includegraphics[width=\textwidth]{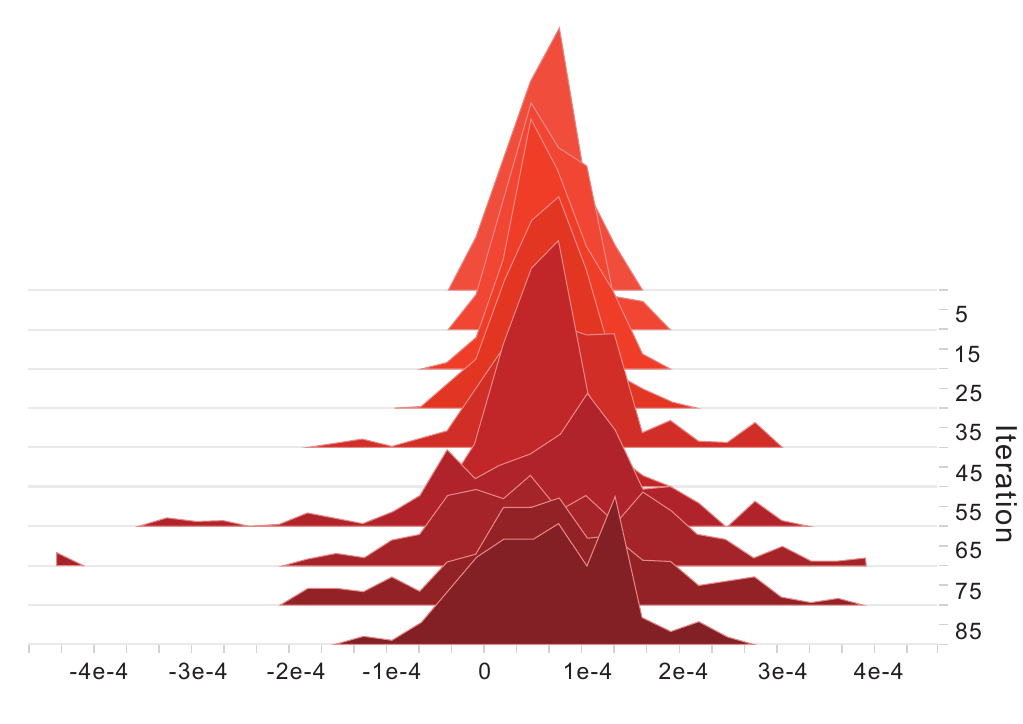}
        \subcaption{SGD}
        \label{subfig:SGD}
    \end{subfigure}%
    \begin{subfigure}[b]{0.45\textwidth}
        \centering
        \includegraphics[width=\textwidth]{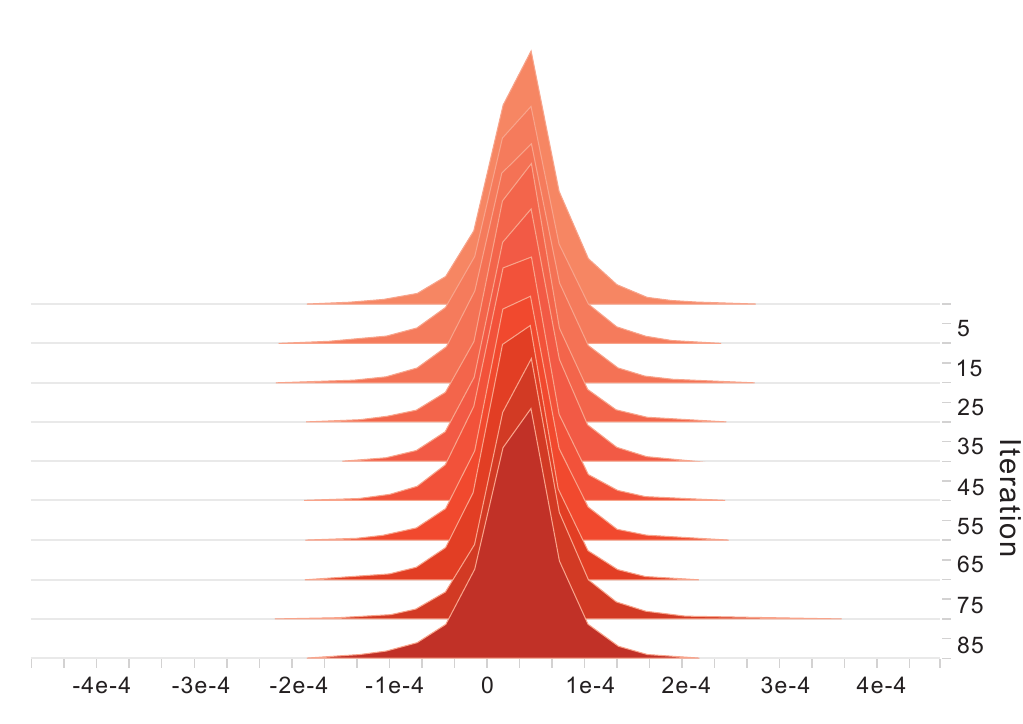}
        \subcaption{SGD-EMA}
        \label{subfig:SGD-EMA}
    \end{subfigure}
    \begin{subfigure}[b]{0.45\textwidth}
        \centering
        \includegraphics[width=\textwidth]{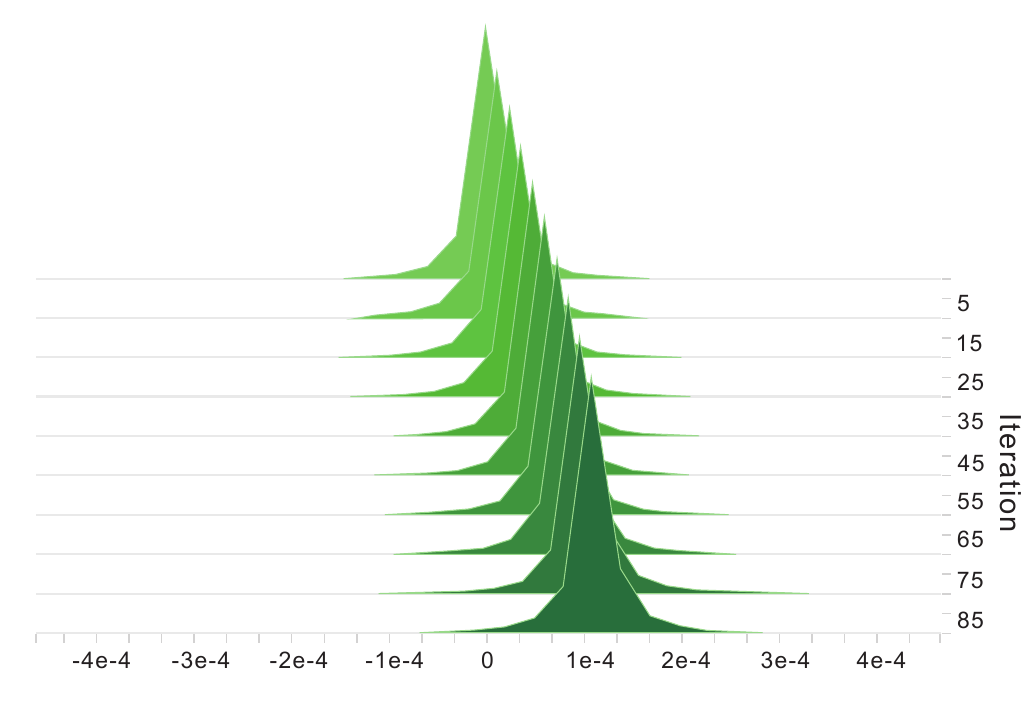}
        \subcaption{SGD-CM}
        \label{subfig:SGD-CM}
    \end{subfigure}
    \begin{subfigure}[b]{0.45\textwidth}
        \centering
        \includegraphics[width=\textwidth]{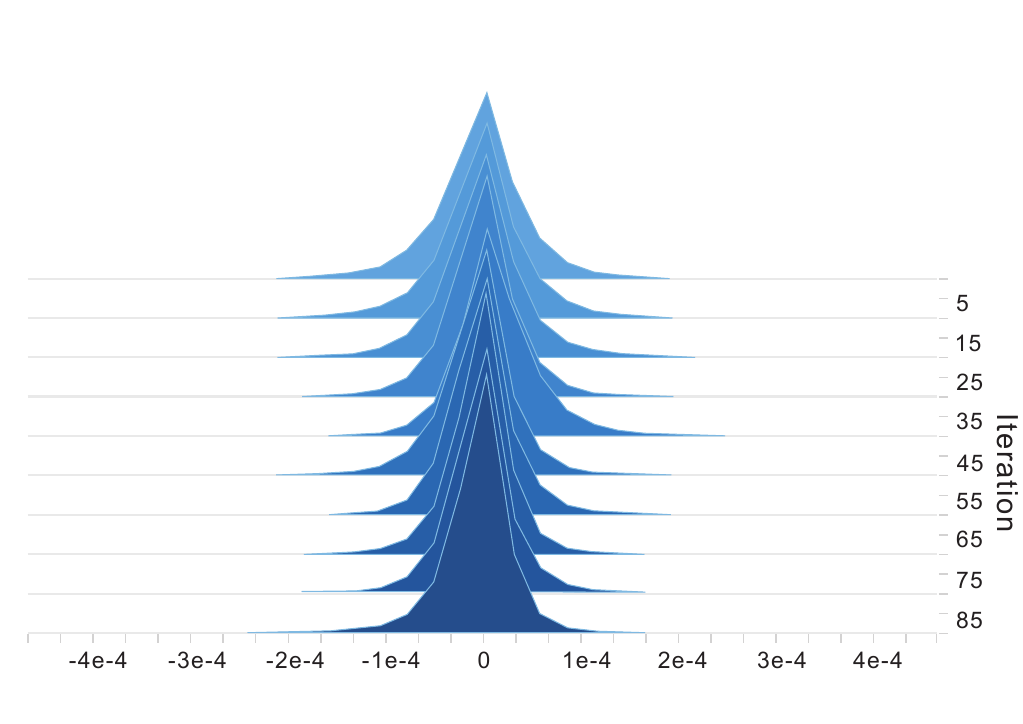}
        \subcaption{SGDF}
        \label{subfig:SGDF}
    \end{subfigure}%
    \addtocounter{figure}{-1}
    \caption{The gradient histogram of the VGG on the CIFAR-100 dataset. The x-axis is the gradient value and the height is the frequency. SGD trains the VGG without BN, the variance of the gradient fluctuates dramatically and the update is unstable.}

    \label{fig:histogram}
\end{minipage}
\end{figure}
To better observe the effect of static momentum coefficients on the gradient estimation, while comparing our time-varying SGDF. We use VGG~\cite{2014Very} because it is a very standard network with no modules that interfere with the gradient, allowing for a better representation of the optimizer's update mechanism. We trained it with different SGD-based methods: Vanilla SGD, SGD with EMA, SGD with Filter, and SGD with CM. Then, we plot curve in \Figref{fig:accuracy} and use kernel density estimates of gradient values distribution over the first 100 iterations in \Figref{fig:histogram}. 

From \Figref{fig:accuracy}, applying SGD with EMA and Filter, convergence is faster than vanilla SGD. EMA has less fluctuation in test curves. WF demonstrates higher test accuracy with the same training set accuracy and reduced generalization gap. On the other hand, CM is slow to converge and results fluctuate because of the larger bias and variance.

\Figref{subfig:SGD} shows high variance and uneven gradient values distribution in Vanilla SGD, resulting in training oscillations that hinder stable convergence. In contract, \Figref{subfig:SGD-EMA} and \Figref{subfig:SGDF} shows concentrated gradient distribution and not distorted. Especially, \Figref{subfig:SGD-CM} shows that SGD-CM smooths values fluctuations but introduces \textit{gradient shift}, causing bias and variance over time. Previous research highlights that momentum struggle to adapt to variations in the curvature of the objective function, potentially causing deviation in updates~\cite{dozat2016incorporating,zhang2017yellowfin}. 

\subsection{Extensibility of Filter-Estimated Gradients}
\label{app:subsec:extensibility}
The study involves evaluating the vanilla Adam optimization algorithm and its enhancement with an Optimal Linear Filter on the CIFAR-100 dataset. \Figref{fig:compare} contains detailed test accuracy curves for both methods across different models. The results indicate that the adaptive learning rate algorithms exhibit improved performance when supplemented with the proposed first-moment filter estimation. This suggests that integrating an Optimal Linear Filter with the Adam optimizer may improve performance.

\begin{figure}[ht]
    \centering
    \begin{subfigure}[t]{0.32\linewidth}
        \centering
        \includegraphics[width=\linewidth]{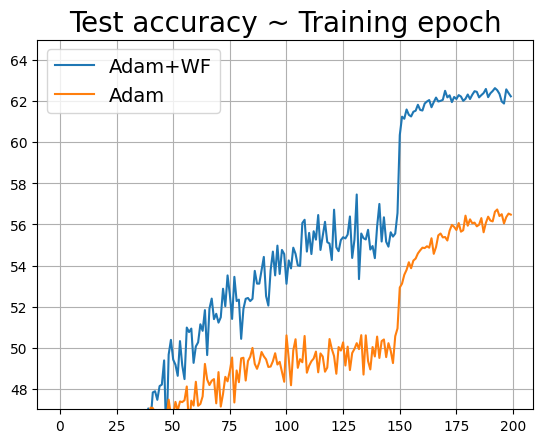}
        \caption{VGG11 on CIFAR-100}
        \label{subfig:vgg_cifar100}
    \end{subfigure}
    \hfill
    \begin{subfigure}[t]{0.32\linewidth}
        \centering
        \includegraphics[width=\linewidth]{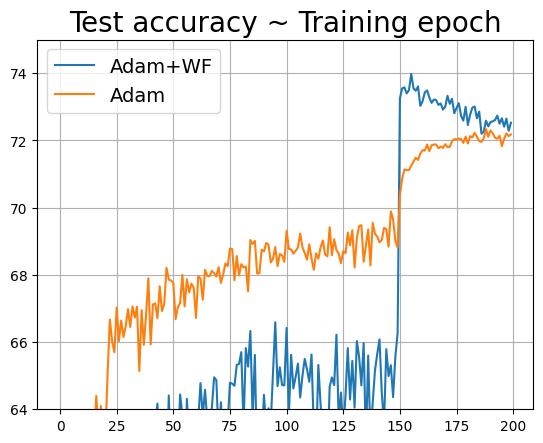}
        \caption{ResNet34 on CIFAR-100}
        \label{subfig:resnet_cifar100}
    \end{subfigure}
    \hfill
    \begin{subfigure}[t]{0.32\linewidth}
        \centering
        \includegraphics[width=\linewidth]{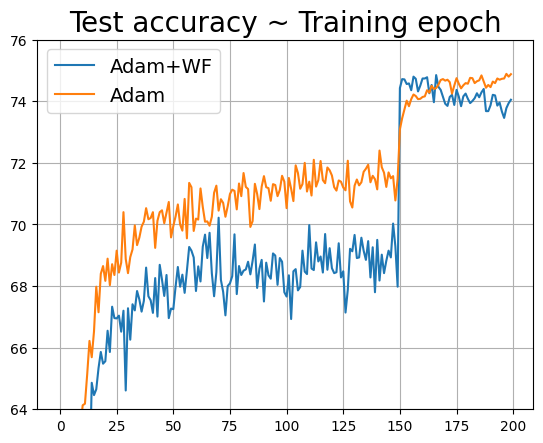}
        \caption{DenseNet121 on CIFAR-100}
        \label{subfig:densenet_cifar100}
    \end{subfigure}

    \caption{Test accuracy of CNNs on CIFAR-100 dataset. We train vanilla Adam and Adam combined with Optimal Linear Filter.}
    \label{fig:compare}
\end{figure}

\subsection{Classical Momentum Discussion}
\label{sec:appendix_discuss}
In our framework, the EMA-Momentum is treated as a low-pass filter, in the nature of noise reduction. Cutkosky~\etal~\cite{cutkosky2020momentum} also proves the property that EMA-Momentum cancels out noise, further supporting our analysis. We further discuss classical momentum here. 

Theoretically, we show that momentum converges faster than SGD in the setting of $u$-strong acceleration, but deep learning optimization does not always conform to this. Leclerc~\etal~\cite{leclerc2020two} tested the classical momentum at different learning rates, taking the momentum factor \{0, 0.5, 0.9\}. It is empirically found that it is at small learning rates that the classical momentum speeds up the convergence of training losses. That is, SGD-CM can be either better or worse than SGD. In addition, Kunstner~\etal~\cite{kunstner2023noise} found that the classical momentum can only show an advantage over SGD when the batch size increases and approaches the full gradient, at which point the noise introduced by random sampling is almost non-existent. In our proof, we mentioned that SGD-CM introduces both bias and variance, but with a full gradient, SGD-CM does not introduce noise and only causes the gradient to produce bias.

We have not analyzed the nature of bias and variance for convergent solutions, but a certain amount of bias may lead to better results when the noise is reduced, and intuitively this may help the algorithm to discuss saddle points or local minima and converge to flatter regions, in a similar nature to the implicitly flat regularity introduced by noise~\cite{yang2023stochastic}. Because the algorithm converges, the gradient at the position of convergence must be stable, and the classical momentum accumulation gradient, with its large values, must go to a smooth plateau in order to avoid oscillations. Also, it is implied that the gradient bias may not produce irretrievable results, since the bias decreases as the gradient converges, and the direction of the gradient may be more important. Sign SGD~\cite{bernstein2018signsgd} takes sign for the gradient, which also converges, and only needs to be applied to the cosine learning rate decay.

Our overall opinion is that CM does not accelerate SGD, but brings better generalization. Early deep learning optimizations focused on reducing the noise introduced by SGD, resulting in several variance reduction algorithms, where reducing variance increases the speed of convergence \cite{bottou2018optimization}. The noise introduced by CM hinders convergence, but bias brings better generalization. Thus, the above empirical observation that the momentum method can only be accelerated at small learning rates is due to the reduced step size of SGD, which naturally slows down the convergence rate. Whereas the bias from CM offsets the effect of the reduced step size, and the step size reduces the variance of the gradient sequence. This also implies why deep learning uses warm-up to make the gradient more stable in the pre-training period\cite{liu2019variance}.